\definecolor{lblue}{HTML}{908cc0}
\definecolor{mblue}{HTML}{519cc8}
\definecolor{hblue}{HTML}{1d5996}
\definecolor{lred}{HTML}{cb5501}
\definecolor{mred}{HTML}{f1885b}
\definecolor{hred}{HTML}{b3001e}
\definecolor{ttred}{HTML}{ca3542}
\newcommand{\bigopnorm}[1]{\bigl\| {#1} \bigr\|_{2}}
\newcommand{\biggopnorm}[1]{\biggl\| {#1} \biggr\|_{2}}
\newcommand{\Bigopnorm}[1]{\Bigl\| {#1} \Bigr\|_{2}}
\newcommand{\lub}{\lambda_{\mathsf{ub}}}
\newcommand{\llb}{\lambda_{\mathsf{lb}}}
\newtheorem{theorem}{Theorem}
\newtheorem{lemma}{Lemma}
\newtheorem{corollary}{Corollary}
\newtheorem{proposition}{Proposition}
\theoremstyle{definition}
\newtheorem{assumption}{Assumption}
\Crefname{assumption}{Assumption}{Assumptions}
\Crefname{theorem}{Theorem}{Theorems}
\Crefname{definition}{Definition}{Definitions}
\newtheorem{remark}{Remark}
\Crefname{remark}{Remark}{Remarks}
\newcommand{\est}{A}
\newcommand{\sol}{A}
\newcommand{\mtrv}{\widehat{V}}
\newcommand{\ortho}{\mathbb{O}}
\newcommand{\igood}{\cI_{\mathsf{good}}}
\newcommand{\ibad}{\cI_{\mathsf{bad}}}
\newcommand{\ideal}{\mathsf{ideal}}
\newcommand{\corr}{\mathsf{corr}}
\newcommand{\frobnorm}[1]{\norm{#1}_{\mathrm{F}}}
\definecolor{mylinkcolor}{RGB}{0,0,140}
\title{Communication-efficient distributed eigenspace estimation with arbitrary node failures}
\author{%
  Vasileios Charisopoulos\\
  Department of Operations Research \& Information Engineering\\
  Cornell University\\
  Ithaca, NY 14853 \\
  \texttt{vc333@cornell.edu}
  \AND
  Anil Damle\\
  Department of Computer Science\\
  Cornell University\\
  Ithaca, NY 14853\\
  \texttt{damle@cornell.edu}
}
\begin{document}

\maketitle

\begin{abstract}
  We develop an eigenspace estimation algorithm for distributed environments
  with arbitrary node failures, where a subset of computing nodes can return
  structurally valid but otherwise arbitrarily chosen responses.
  Notably, this setting encompasses several important scenarios that arise in
  distributed computing and data-collection environments such as silent/soft errors, outliers or corrupted data at certain nodes, and adversarial responses.
  Our estimator builds upon and matches the performance of a recently proposed
  non-robust estimator 
  up to an additive $O(\sigma \sqrt{\alpha}$) error,
  where $\sigma^2$ is the variance of the existing estimator and $\alpha$ is the fraction
  of corrupted nodes.
\end{abstract}

\section{Problem overview and background}
Modern machine learning has seen the proliferation of heterogeneous
distributed environments for training and deploying data science
pipelines. As communication between machines is often the most time-consuming operation in
distributed systems, the design of \emph{communication-efficient algorithms} is
of paramount importance for scaling to massive datasets~\cite{ZCL+20}. However, the move to distributed environments also adds several additional layers of complexity in the design of algorithms. For example, in the distributed setting we would like our algorithms to be \emph{robust} and providing meaningful answers even in settings where some nodes contain outlier data~\cite{BNJ+10}, silently fail during the computation~\cite{mukherjee2005soft,reed2015exascale}, or are compromised and returning malicious results designed to corrupt the central solution. 

This work focuses on distributed \emph{eigenspace estimation} in the context
of robustness to node-level corruptions. Formally, we assume a computing environment
with nodes numbered $i = 1, \dots, m$, where every node $i$ observes a local
version $\est_i$ of an unknown symmetric matrix $\sol \in \Rbb^{d \times d}$;
the goal is  to approximate the subspace spanned by the $r \ll d$ principal
eigenvectors of $\sol$.
\textbf{Distributed PCA} is a standard example in this framework: every
machine draws $n$ i.i.d. samples from an unknown distribution $\cP$ with covariance
matrix $\sol$ and forms a local empirical covariance matrix $\est_i$. Recently
proposed communication-efficient algorithms have
every node $i$ transmit $V_i$, the $d \times r$ matrix of principal eigenvectors
of $\est_i$, to a central server, which then aggregates all the local solutions
via a carefully-crafted aggregation procedure~\cite{FWWZ19,CBD20}.

We devise and analyze an algorithm that is robust to a wide range of corruptions
that can occur to a subset of the computational nodes. In particular, we assume
that some fraction $\alpha$ of the computational nodes can respond with completely
arbitrary, but structurally valid, responses (i.e., they return arbitrary matrices
$V_i$ with orthonormal columns). This model encompasses three common forms of
node-level corruption that cannot be easily detected by the central machine in
isolation:
\begin{itemize}[leftmargin=*]
  \item[] \textbf{Silent/soft errors:} While computational errors may be rare
    on single machines, as distributed workloads span large numbers of nodes
    the probability that some of them fail becomes significant. Though
    catastrophic failures may be detectable, allowing the central server to
    simply ignore the output of specific nodes, the more nefarious issue is
    that of so-called silent (or soft) errors~\cite{dongarra2015fault,mukherjee2005soft,fiala2012detection}.
    More specifically, a silent error is one where a node returns an erroneous
    but structurally valid response to the central machine query. Because the
    response is structurally valid and the central machine may not have access
    to the per-node data it is not possible to ``validate'' the response of
    each node and, instead, the central estimator must be adapted to be robust
    to such errors.
  \item[] \textbf{Outliers or corrupted data:} In certain settings the data collection may be distributed in addition to the computation. If some of the nodes are drawing
    samples from an invalid or corrupted data source they may introduce gross
    outliers to the set of responses $\set{V_i \mid i\in [m]}$. Similarly, in the distributed
    PCA example, while most machines draw a sufficient number of samples, a
    minority of them may have only a small amount of data available such that the principal eigenspaces of the local
    empirical covariance matrices are too far from the ground truth, and thus
    violate standard modelling assumptions in distributed learning. Again,
    robustness to such outlier responses must be a feature of the
    estimator since they cannot be detected by individual nodes (as they do
    not have information about the global problem). 
  \item[] \textbf{Adversarial responses:} In some settings, a subset of nodes may be compromised by
    an adversary who wishes to influence the central solution by crafting and
    returning malicious $V_i$. In fact, the adversarial
    nodes may be collaborating when constructing their responses. 
    Since the central node does not get to see all the data it cannot validate responses or directly detect adversaries. Therefore, the estimator itself must be adapted to be robust to collections of responses designed to push the solution in specific directions. 
\end{itemize}
The main contribution of our paper is a \emph{communication-efficient algorithm that is robust to node corruptions} (as outlined above) for the distributed eigenspace estimation problem. We note in passing that our
corruption model is similar to so-called Byzantine failures~\cite{LSP82} in distributed systems.

\subsection{Related work}

\paragraph{Distributed eigenspace estimation.}
The problem of distributed eigenspace estimation has been well-studied in the
absence of malicious noise. One of the challenges in the distributed setting is
aggregating local solutions in the presence of symmetry: for example, if $v$
is an eigenvector of $\sol$, both $\pm v$ are valid solutions to our problem.
Various works deal with such symmetries in different ways; in the algorithms
of~\cite{BW19,FWWZ19}, the central node averages the \emph{spectral projectors}
of the local eigenspaces, and performs an eigendecomposition of the resulting
average to approximate the principal eigenspace. This approach is similar to the
algorithms of~\cite{LBK+14,CCH+16,BLS+16}, although the latter works focus on
distributed \emph{low-rank approximations} and do not address the issue of
approximating the principal eigenspace directly.
Another standard approach is for the central server to aggregate local
solutions after an alignment step designed to remove the orthogonal ambiguity
\cite{KA09,GSS17,CBD20} (see also~\cite{BAK08} for the non-distributed setting).
Indeed, our work builds on the two-stage algorithm presented and analyzed in
\cite{CBD20} for the non-robust setting.

Finally, we briefly mention a recent line of work~\cite{GSS17,CLL+20} that adapts
the \emph{shift-and-invert preconditioning} framework~\cite{GHJ+16} to the
distributed setting; however, the latter approach leads to algorithms that
require multiple rounds of communication.
\paragraph{Robust PCA.}
The literature contains a number of different formulations for robust
principal component analysis. The seminal work of Cand{\'e}s et al.~\cite{CLMW11}
formulated robust PCA as the task of separating an observed matrix $Y \in
\Rbb^{d \times d}$ into a low-rank and a sparse component -- a slightly
different problem from that considered in this paper. Xu et al.~\cite{XCM13}
considered the problem of approximating a low-dimensional distribution from a
set of $n$ i.i.d. samples, a constant fraction of which have been individually
corrupted by gross outliers. Follow-up works in the robust statistics
literature focused on sparse estimation in high dimensions and its application
to sparse robust PCA~\cite{BDL+17,DKK+19b}. However, to the best of our
knowledge, the existing literature on robust PCA does not
focus on communication-efficient estimators in the distributed setting. Indeed,
most related to ours is the line of work on byzantine-robust
distributed learning (typically focusing on distributed gradient descent);
see, e.g.,~\cite{CSX17,YCRB18,AAL18,PKH22,KLM22} as well as the survey~\cite{Fed+21}.
In these works, an iterative algorithm is distributed across machines
that send individual updates to a central server, which combines them using a
robust aggregation procedure (e.g., the geometric median~\cite{PKH22}). While
these works are more general in scope, they typically lead to estimators that
require multiple rounds of communication.
Instead, the algorithm we introduce in this paper will only require a single
communication step.

\subsection{Notation}
We let $\Sbb^{d-1}$ denote the unit sphere in $d$ dimensions. We write
$\frobnorm{A} := \sqrt{\ip{A, A}}$ and $\opnorm{A} := \sup_{x \in \Sbb^{d-1}}
\norm{Ax}_2$ for the Frobenius and
spectral norms of a matrix $A \in \Rbb^{n \times d}$. We write
$\ortho_{n, r}$ for the set of $n \times r$ matrices with orthonormal columns
and $\ortho_{r} \equiv \ortho_{r,r}$. Given $U, V \in \ortho_{n, r}$ we write
\begin{equation}
  \dist(U, V) := \opnorm{(I - UU^{\T}) V} =
  \opnorm{(I - VV^{\T}) U}
  \label{eq:dist-2}
\end{equation}
for their $\ell_2 \to \ell_2$ subspace distance and $\cB_{\dist}(U; r)$
for the scaled unit ball centered at $U$:
\[
  \cB(U; r) := \set{V \in \ortho_{d, r} \mid \dist(U, V) \leq r}.
\]
Finally, we use the notation $A \lesssim B$ to indicate that $A \leq c B$
for a dimension-independent constant $c > 0$ and $A \asymp B$ if $A \lesssim B$
and $B \lesssim A$ simultaneously.

\section{Robust distributed eigenspace estimation}
\label{sec:distributed-pca}
We now formally introduce the problem setting. In particular, we assume there
exists an \emph{unknown} symmetric matrix $\sol$ with spectral decomposition
\begin{equation}
    \sol = V \Lambda V^{\T} + V_{\perp} \Lambda_{\perp} V_{\perp}^{\T},
    \quad V \in \ortho_{n,r}, \; \Lambda = \diag(\set{\lambda_i(\sol)}_{i=1}^r), \;
    \Lambda_{\perp} = \diag(\set{\lambda_i(\sol)}_{i=r+1}^d),
    \label{eq:X-evdecomp}
\end{equation}
assuming a nonincreasing ordering on the eigenvalues:
\[
  \lambda_1(\sol) \geq \dots \geq \lambda_r(\sol) >
  \lambda_{r+1}(\sol) \geq \dots \geq \lambda_d(\sol).
\]
Our goal is to approximate the principal $r$-dimensional eigenspace
$\cV := \mathrm{span}(V)$ of $\sol$ given $m$ machines, each of which observes
a local version $\est_i$ of $\sol$, communicating with a central coordinator.
We assume that $m$ is even for simplicity.
When queried for a response,
machine $i$ responds either with an eigenvector matrix spanning the principal
eigenspace of the local matrix $\est_i$, or with an arbitrary $d \times r$
matrix with orthonormal columns. The latter case corresponds to so-called
\emph{compromised} machines.\ In contrast, prior work~\cite{FWWZ19,CBD20} assumes
that every machine responds truthfully.
\begin{assumption}[Corruption model]
  \label{asm:corruption}
  There exists a constant $\alpha \in (0, \nicefrac{1}{2})$ and an index set $\ibad
  \subset [m]$ with $\abs{\ibad} / m \leq \alpha$ such that the following holds:
  all nodes $i \notin \ibad$ observe a symmetric matrix $\est_i \in \Rbb^{d \times d}$.
  Moreover, when queried for a response, every node $i$ returns
  \begin{equation}
    \mtrv_i = \begin{cases}
      V_i, & i \in [m] \setD \ibad, \\
      Q_i, & i \in \ibad,
    \end{cases}
  \end{equation}
  where the columns of $V_i \in \ortho_{d,r}$ span the principal $r$-dimensional
  eigenspace of $\est_i$ and $Q_i \in \ortho_{d,r}$ is an arbitrary
  $d \times r$ matrix with orthonormal columns.
\end{assumption}
For notational convenience, we also define the set of ``good'' responses:
\begin{equation}
  \igood := [m] \setD \ibad, \quad
  \text{with} \quad
  \frac{\abs{\igood}}{m} \ge 1 - \alpha.
  \label{eq:igood}
\end{equation}

Furthermore, we require the principal eigenspace of $\sol$ to be sufficiently
separated from its complement and that the local errors $E_i := \est_i - \sol$
are not too large.
\begin{assumption}
    \label{asm:deterministic}
    There is a constant $\delta > 0$ such that the following hold:
    \begin{enumerate}
      \item \textbf{(Gap)} The matrix $A$ has a nontrivial eigengap:
      \begin{equation}
        \delta_{r}(A) := \lambda_{r}(\sol) - \lambda_{r+1}(\sol) \geq \delta.
        \label{eq:eigengap-defn}
      \end{equation}
      \item \textbf{(Approximation)} For all $i \in \igood$, the local observations satisfy:
      \begin{equation}
        \opnorm{A_i - A} \leq \frac{\delta_r(A)}{8}.
        \label{asm:deterministic:eq:approximation}
      \end{equation}
    \end{enumerate}
\end{assumption}
We note that the difficulty of the problem admits a natural proxy in the form of
the \emph{normalized inverse eigengap} $\kappa$, defined below:
\begin{equation}
  (\textbf{Normalized inverse eigengap}) \qquad
  \kappa := \frac{\opnorm{A}}{\delta_{r}(A)} =
  \frac{\lambda_1(A)}{\lambda_r(A) - \lambda_{r+1}(A)}.
  \label{eq:normalized-inv-gap}
\end{equation}
Our algorithm for the robust distributed eigenspace estimation problem is
outlined in~\Cref{alg:robust-distpca}, which is essentially a ``robust'' version
of the Procrustes fixing algorithm from~\cite{CBD20}.\ The latter (non-robust)
algorithm operates as follows: first, every machine $i$ computes its local
eigenvector matrix $V_i$ and broadcasts it to the central server. Because
invariant subspaces do not admit unique representations, naively averaging
these estimates can fail to reduce the approximation error further. Instead,
the algorithm of~\cite{CBD20} first picks one of the local solutions (say
$V_1$) as a reference and ``aligns'' every other solution with it by solving
a so-called \emph{Procrustes problem}:
\begin{equation}
  Z_i := \argmin_{U \in \ortho_r} \frobnorm{V_i U - V_1}, \quad
  i = 2, \dots, m.
  \label{eq:procrustes-problem}
\end{equation}
After the alignment step~\eqref{eq:procrustes-problem}, the solution of which is
available in closed form via the SVD~\cite{GVL13}, the central coordinator
computes and returns the empirical average $(1 / m) \sum_{i = 1}^m V_i Z_i$.

To robustify the algorithm described above against node failures, we need the
following ingredients:
\begin{description}[leftmargin=12pt]
  \item[Reference estimation.] In the presence of corruptions
    one must guard against the possibility of choosing an outlier as a
    reference solution (which would render the alignment step~\eqref{eq:procrustes-problem}
    useless). The first step of our algorithm robustly determines a reference
    guaranteed to have nontrivial alignment with the ground truth.
  \item[Solution aggregation.] With the robust reference at hand, the next step of the algorithm aligns other local solutions with it.
    However, since some of the solutions are outliers, we use a robust mean estimation
    algorithm in the last step of~\cref{alg:robust-distpca} to
    compute the empirical average only over inliers (and possible ``benign''
    outliers) with high probability.
\end{description}
We analyze each ingredient of~\cref{alg:robust-distpca} separately, in
\cref{sec:robust-reference-estimator,sec:procrustes-fixing,sec:robust-mean-estimation};
all proofs appear in the appendix.
Notably, our analysis is almost completely deterministic:
indeed, the only source of randomness is the filtering algorithm used in the
final stage (\cref{alg:adaptive-filter}).

\begin{algorithm}[h]
  \caption{Robust distributed eigenspace estimation}
  \begin{algorithmic}
    \State \textbf{Input}: responses $\set{\mtrv_i}_{i=1, \dots, m}$, corruption fraction $\alpha$,
    failure prob. $p$, error parameter $\omega$.
    \State $\mtrv_{\mathsf{ref}} := \mathtt{RobustReferenceEstimator}\left(
        \mtrv_{1}, \dots, \mtrv_{m}\right)$.
      \Comment{\cref{alg:robust-ref-estimator}; \cref{sec:robust-reference-estimator}}
    \State $\set{\widetilde{V}_{i}}_{i = 1, \dots, m} :=
    \mathtt{ProcrustesFixing}\left(
        \set{\mtrv_{1}, \dots, \mtrv_{m}}, \mtrv_{\mathsf{ref}}
    \right)$
    \Comment{\cref{alg:procrustes-fixing}; \cref{sec:procrustes-fixing}}
    \State $\bar{V} := \mathtt{AdaptiveFilter}(
      \set{\widetilde{V}_1, \dots, \widetilde{V}_m}, 6, \omega, p, \alpha)$
      \Comment{\cref{alg:adaptive-filter}; \cref{sec:robust-mean-estimation}}
    \State \Return $\bar{V}$
  \end{algorithmic}
  \label{alg:robust-distpca}
\end{algorithm}
Our main Theorem on the performance of~\cref{alg:robust-distpca} now follows.
\begin{theorem}
  \label{theorem:main-theorem}
  Let~\cref{asm:corruption,asm:deterministic} hold and suppose that the
  corruption level $\alpha$ satisfies
  \begin{equation}
    \varphi := \alpha + \frac{6 \log(1 / p)}{m} < \frac{1}{12}.
    \label{eq:breakdown-point}
  \end{equation}
  Then~\cref{alg:robust-distpca} returns an estimate $\bar{V} \in \Rbb^{d \times r}$ satisfying
  the following:
  \begin{equation}
    \dist(\bar{V}, V) \lesssim \begin{aligned}[t]
      &
      \underbrace{\frac{1}{\delta}
      \biggopnorm{\frac{1}{\abs{\igood}} \sum_{i \in \igood} \est_i - \sol}}_{E_{\mathsf{oracle}}}
      +
      \underbrace{\frac{\kappa^2}{\abs{\igood}} \sum_{i \in \igood}
      \left( \frac{\opnorm{\est_i - \sol}}{\delta} \right)^2}_{E_{\mathsf{high}}}
      + \underbrace{\sqrt{\varphi \max\left(
      \omega, \sigma^2 \right)}}_{E_{\mathsf{robust}}}.
    \end{aligned}
  \end{equation}
  with probability at least $1 - 2 \log\left(\nicefrac{6}{\omega}\right) \cdot p$.
  Moreover, the variance $\sigma^2$ satisfies
  \begin{equation}
    \sigma^2 \leq \Bigopnorm{\frac{1}{\abs{\igood}} \sum_{i \in \igood} V_i V_i^{\T} - VV^{\T}}
      + 2 \cdot \Bigopnorm{\frac{1}{\abs{\igood}} \sum_{i \in \igood} \widetilde{V}_i - V}.
  \end{equation}
\end{theorem}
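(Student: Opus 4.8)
The plan is to compose the three stage-wise guarantees established in \cref{sec:robust-reference-estimator,sec:procrustes-fixing,sec:robust-mean-estimation}, one for each line of \cref{alg:robust-distpca}, and to observe that the algorithm is deterministic apart from \cref{alg:adaptive-filter}, so that a union bound over that routine's at most $\log(6/\omega)$ iterations produces the claimed confidence $1 - 2\log(6/\omega)p$. Hypothesis~\eqref{eq:breakdown-point} plays a single role: it simultaneously guarantees that $\igood$ is a large enough fraction of $[m]$ for the robust reference estimator to produce an output correlated with $\cV$, and that the effective corruption level $\varphi$ seen by \cref{alg:adaptive-filter} stays below its breakdown point. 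Concretely, \emph{Step 1} invokes the guarantee of \cref{sec:robust-reference-estimator}: under \cref{asm:corruption,asm:deterministic} and~\eqref{eq:breakdown-point}, the output $\mtrv_{\mathsf{ref}}$ has nontrivial alignment with $V$, i.e.\ $\dist(\mtrv_{\mathsf{ref}}, V)$ is below a fixed constant $< 1$, equivalently $V^{\T}\mtrv_{\mathsf{ref}}$ has all singular values bounded away from $0$; this is exactly what makes the downstream Procrustes step well-posed.

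\emph{Step 2 (Procrustes fixing).} Since the left-hand side $\dist(\bar V, V)$ and the quantities $E_{\mathsf{oracle}}, E_{\mathsf{high}}$ are invariant under $V \mapsto VO$ with $O \in \ortho_r$, we are free to take $V$ to be the orthonormal basis of $\cV$ that \cref{alg:procrustes-fixing} implicitly aligns to. Because every Procrustes solution preserves column spaces, $\widetilde V_i\widetilde V_i^{\T} = V_i V_i^{\T}$ for all $i$ (and in particular for $i \in \igood$). The analysis of \cref{sec:procrustes-fixing} --- a reference-robust version of the perturbation argument of~\cite{CBD20} --- then shows that for $i \in \igood$ the orthogonal factor aligning $V_i$ to $\mtrv_{\mathsf{ref}}$ agrees with the one aligning $V$ to $\mtrv_{\mathsf{ref}}$ up to an error that is second order in $\opnorm{\est_i - \sol}/\delta$, so that upon averaging over $\igood$ the first-order contributions collapse into the operator norm of the \emph{averaged} local error while the quadratic remainders are controlled termwise:
\begin{equation}
  \Bigopnorm{\frac{1}{\abs{\igood}}\sum_{i\in\igood}\widetilde V_i - V} \lesssim E_{\mathsf{oracle}} + E_{\mathsf{high}}.
  \label{eq:plan-mean-bd}
\end{equation}

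\emph{Steps 3--4 (variance bound, filtering, assembly).} Write $\mu := \frac{1}{\abs{\igood}}\sum_{i\in\igood}\widetilde V_i$; the variance $\sigma^2$ that drives \cref{alg:adaptive-filter} is $\opnorm{\frac{1}{\abs{\igood}}\sum_{i\in\igood}(\widetilde V_i - \mu)(\widetilde V_i - \mu)^{\T}}$. Since an empirical second-moment matrix is minimized in the Loewner order at its mean, $\frac{1}{\abs{\igood}}\sum(\widetilde V_i - \mu)(\widetilde V_i - \mu)^{\T} \preceq \frac{1}{\abs{\igood}}\sum(\widetilde V_i - V)(\widetilde V_i - V)^{\T}$, and expanding the right-hand side with $\widetilde V_i\widetilde V_i^{\T} = V_iV_i^{\T}$ gives
\begin{equation}
  \frac{1}{\abs{\igood}}\sum_{i\in\igood}(\widetilde V_i - V)(\widetilde V_i - V)^{\T}
  = \Bigl(\tfrac{1}{\abs{\igood}}\textstyle\sum_{i\in\igood}V_iV_i^{\T} - VV^{\T}\Bigr) + (V - \mu)V^{\T} + V(V - \mu)^{\T};
\end{equation}
taking operator norms and using $\opnorm{V} = 1$ yields the stated bound $\sigma^2 \le \opnorm{\frac{1}{\abs{\igood}}\sum_{i\in\igood}V_iV_i^{\T} - VV^{\T}} + 2\opnorm{\mu - V}$, and in particular the crude a priori estimate $\sigma^2 \le 2 + 4 = 6$ (using $\opnorm{V_iV_i^{\T}} = \opnorm{VV^{\T}} = \opnorm{V} = 1$ and $\opnorm{\mu} \le 1$), which is why $6$ is passed as the variance argument in \cref{alg:robust-distpca}. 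Invoking the guarantee of \cref{sec:robust-mean-estimation} for \cref{alg:adaptive-filter} with these parameters, with probability at least $1 - 2\log(6/\omega)p$ the output satisfies $\opnorm{\bar V - \mu} \lesssim \sqrt{\varphi\max(\omega, \sigma^2)} = E_{\mathsf{robust}}$. Finally, because $\bar V$ lies within a small constant of $V$ in operator norm it has full column rank, and the (projector-based) subspace distance obeys $\dist(\bar V, V) \lesssim \opnorm{(I - VV^{\T})\bar V} = \opnorm{(I - VV^{\T})(\bar V - V)} \le \opnorm{\bar V - V} \le \opnorm{\bar V - \mu} + \opnorm{\mu - V}$; combining this with \eqref{eq:plan-mean-bd} and the filter bound gives $\dist(\bar V, V) \lesssim E_{\mathsf{oracle}} + E_{\mathsf{high}} + E_{\mathsf{robust}}$, as claimed.

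\emph{Main obstacle.} The crux is \eqref{eq:plan-mean-bd} in Step 2. In~\cite{CBD20} the reference is a genuine good local solution, hence itself $O(\opnorm{\est_1 - \sol}/\delta)$-close to $\cV$; here $\mtrv_{\mathsf{ref}}$ is only guaranteed a constant-order alignment, so one must show that this coarser reference does not pollute the final rate --- i.e.\ that the leading-order behavior of each aligned $\widetilde V_i$, $i \in \igood$, is determined by $V_iV_i^{\T} - VV^{\T}$ alone (with $\mtrv_{\mathsf{ref}}$ entering only at second order), and that after averaging these per-node errors telescope into $E_{\mathsf{oracle}}$ rather than the larger $\frac{1}{\abs{\igood}}\sum_{i\in\igood}\opnorm{\est_i - \sol}/\delta$. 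Steps 1, 3, and 4 are then essentially bookkeeping layered on the corresponding section-level lemmas.
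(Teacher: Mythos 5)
Your proof follows the same two-term decomposition the paper uses ($\opnorm{\bar V - V} \le \opnorm{\bar V - \mu} + \opnorm{\mu - V}$ with $\mu := \abs{\igood}^{-1}\sum_{i\in\igood}\widetilde V_i$), controls the first piece with the adaptive-filter guarantee and the covariance bound, and the second with the Procrustes-fixing analysis, so this is essentially the paper's argument. The one place you deviate is the derivation of the $\sigma^2$ bound: you invoke the Loewner-order fact that the empirical covariance about the mean is dominated by the second moment about any fixed center and expand about $V$, whereas the paper's Lemma~\ref{lemma:good-set-covariance} writes $VV^{\T} - \mu\mu^{\T} = (V-\mu)(V+\mu)^{\T}$ directly and applies the triangle inequality; the two routes land on the identical inequality (and both rely on $\widetilde V_i\widetilde V_i^{\T} = V_iV_i^{\T}$), so the difference is cosmetic. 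Your additional observations --- the a priori bound $\sigma^2 \le 6$ justifying the choice of $\lub = 6$ in \cref{alg:robust-distpca}, and the step $\dist(\bar V, V) \lesssim \opnorm{\bar V - V}$ --- are both correct and make explicit points the paper's sketch glosses over.
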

The partition of the error in~\cref{theorem:main-theorem} admits a natural
interpretation: the first term, $E_{\mathsf{oracle}}$, corresponds to an
``oracle'' estimator that approximates $V$ via the principal eigenspace of
$\nicefrac{1}{\abs{\igood}} \sum_{i \in \igood} \est_i$. The second term,
$E_{\mathsf{high}}$, represents high-order errors that occur as a result
of the alignment step in~\cref{alg:procrustes-fixing}. Finally, the term
$E_{\mathsf{robust}}$ is the result of layering a robust mean estimation
algorithm on top of the alignment procedure and becomes negligible as the
fraction of corrupted nodes $\alpha \dto 0$.
We comment on the scaling of $E_{\mathsf{robust}}$ relative to the error
of the non-robust algorithm in the context of distributed PCA in~\cref{sec:pca-results}.

\subsection{The robust reference estimator}
\label{sec:robust-reference-estimator}
This section focuses on the analysis of Algorithm~\ref{alg:robust-ref-estimator},
which yields the robust reference estimator $\mtrv_{\mathsf{ref}}$ used to remove
the orthogonal ambiguity from local solutions. We note that the construction of
the estimator dates back to the seminal work of Nemirovski and Yudin~\cite{NY83}.
\begin{algorithm}
  \caption{$\mathtt{RobustReferenceEstimator}(Y_1, \dots, Y_m)$}
  \begin{algorithmic}
    \For{$i = 1, \dots, m$}
      \State $\varepsilon_i := \min\set{r \ge 0 \mid
      \abs{\cB_{\dist}(Y_i; r) \cap \set{Y_i}_{i=1}^m} > \frac{m}{2}}$
    \EndFor
    \State \Return $Y_{i_{\star}}$, where
    $i_{\star} := \argmin_{i \in [m]} \varepsilon_i$
  \end{algorithmic}
  \label{alg:robust-ref-estimator}
\end{algorithm}

\begin{remark}
  \label{remark:robust-ref-efficient}
  {\it
    The quantities $\varepsilon_i$ in~\Cref{alg:robust-ref-estimator} can be found
    in time $O(m^2 dr^2)$ by first computing $r_{j} := \dist(Y_i, Y_{j})$ for all $j
    \neq i$ and setting $\varepsilon_{i} := \mathtt{median}(\set{r_j}_{j \neq i})$.
  }
\end{remark}
Note that even though $\mtrv_{\mathsf{ref}}$ could be chosen among some of the
compromised samples, its construction ensures that it essentially inherits the
accuracy of the majority of the responses.
\begin{proposition}[Robust reference estimator]
  \label{proposition:robust-reference-estimator}
  Given a sample $\set{Y_1, \dots, Y_m}$ where
  $Y_i \in \ortho_{d, r}$ and
  $\abs{\set{i \in [m] \mid \dist(Y_i, V) \le \varepsilon}} > \nicefrac{m}{2}$
  for a fixed $\varepsilon > 0$, \cref{alg:robust-ref-estimator}
  outputs $Y_{i_{\star}}$ satisfying
  \begin{equation}
    \dist(Y_{i_{\star}}, V) \leq 3 \varepsilon.
    \label{eq:robust-ref-estimator}
  \end{equation}
\end{proposition}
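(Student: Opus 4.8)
The plan is to combine the (near-)metric structure of $\dist$ with a pigeonhole count over the accurate responses; the only step that needs genuine care is the triangle inequality for $\dist$, and everything else is counting.

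I would first record that $\dist$ satisfies the triangle inequality on $\ortho_{d,r}$: for all $U_1, U_2, U_3 \in \ortho_{d,r}$ one has $\dist(U_1, U_3) \le \dist(U_1, U_2) + \dist(U_2, U_3)$. This is classical and follows from the identity $\dist(U, W) = \opnorm{UU^{\T} - WW^{\T}}$ (the nonzero singular values of the difference of two equal-rank orthogonal projectors are exactly the sines of the principal angles between their ranges, each with multiplicity two, which matches the definition~\eqref{eq:dist-2}); the triangle inequality is then inherited from that of $\opnorm{\cdot}$ applied to $U_1 U_1^{\T}, U_2 U_2^{\T}, U_3 U_3^{\T}$. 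Next, set $S := \set{i \in [m] \mid \dist(Y_i, V) \le \varepsilon}$, so that $\abs{S} > \nicefrac{m}{2}$ by hypothesis. For any $i, j \in S$ the triangle inequality gives $\dist(Y_i, Y_j) \le \dist(Y_i, V) + \dist(V, Y_j) \le 2\varepsilon$, hence $S \subseteq \cB_{\dist}(Y_i; 2\varepsilon)$ and $\abs{\cB_{\dist}(Y_i; 2\varepsilon) \cap \set{Y_k}_{k=1}^m} > \nicefrac{m}{2}$ for every $i \in S$. By the defining property of $\varepsilon_i$ in~\cref{alg:robust-ref-estimator} (the minimum is attained since $\dist$ takes only finitely many values on the sample), this forces $\varepsilon_i \le 2\varepsilon$ for all $i \in S$, and therefore $\varepsilon_{i_\star} = \min_{i \in [m]} \varepsilon_i \le 2\varepsilon$.

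Finally, since $\varepsilon_{i_\star} \le 2\varepsilon$ and the map $r \mapsto \abs{\cB_{\dist}(Y_{i_\star}; r) \cap \set{Y_k}_{k=1}^m}$ is nondecreasing, the index set $T := \set{k \in [m] \mid \dist(Y_{i_\star}, Y_k) \le 2\varepsilon}$ satisfies $\abs{T} > \nicefrac{m}{2}$. Since $\abs{S} > \nicefrac{m}{2}$ and $S, T \subseteq [m]$, inclusion--exclusion yields $S \cap T \ne \emptyset$; fixing $j \in S \cap T$ gives both $\dist(Y_{i_\star}, Y_j) \le 2\varepsilon$ and $\dist(Y_j, V) \le \varepsilon$. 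One more application of the triangle inequality then gives
\[
  \dist(Y_{i_\star}, V) \le \dist(Y_{i_\star}, Y_j) + \dist(Y_j, V) \le 2\varepsilon + \varepsilon = 3\varepsilon,
\]
which is exactly~\eqref{eq:robust-ref-estimator}. I expect the triangle inequality for $\dist$ to be the only nonroutine ingredient; the counting steps, together with the observation that ties in the $\argmin$ defining $i_\star$ may be broken arbitrarily without affecting the bound, are straightforward.
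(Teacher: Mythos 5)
Your proof is correct and follows essentially the same argument as the paper: triangle inequality to show all accurate responses lie within $2\varepsilon$ of one another, which forces $\varepsilon_{i_\star} \le 2\varepsilon$, and then pigeonhole to find an accurate sample near $Y_{i_\star}$, giving the final $3\varepsilon$ bound. Your write-up is actually a bit more careful than the paper's — you explicitly verify the triangle inequality for $\dist$ via the projector identity, justify attainment of the minimum in the definition of $\varepsilon_i$, and spell out the inclusion–exclusion step — but the underlying mechanism is identical.
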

\begin{proof}
  Define $\cC := \set{i \in [m] \mid \dist(Y_i, V) \leq \varepsilon}$
  with $\abs{\cC} > \frac{m}{2}$. Now, we consider
  any pair $(i, j)$ with $i, j \in \cC$. By the triangle inequality,
  \begin{equation}
    \dist(Y_i, Y_j) \leq \dist(Y_i, V) + \dist(Y_j, V) \leq 2 \varepsilon,
    \quad \text{for all $i, j \in \cC$.}
  \end{equation}
  Now, fix $i_{\star}$ to be any index for which $\dist(Y_i, Y_j) \leq 2 \varepsilon$
  for at least $m / 2$ other indices $j \neq i_{\star}$ (such an index always
  exists because $\abs{\cC} \ge \frac{m}{2} + 1$). For any such $i_{\star}$, there
  must be another index $j$ satisfying $\dist(Y_j, V) \le \varepsilon$ and
  $\dist(Y_j, Y_{i_{\star}}) \le 2 \varepsilon$. Therefore,
  \[
    \dist(Y_{i_{\star}}, V) \leq \dist(Y_{i_{\star}}, Y_j) +
    \dist(Y_j, V) \le 3 \varepsilon.
  \]
\end{proof}

\subsection{The \texttt{ProcrustesFixing} algorithm}
\label{sec:procrustes-fixing}
In this section, we formally introduce the Procrustes-fixing procedure and show
that it properly aligns all the non-compromised responses given the reference
solution described in~\cref{sec:robust-reference-estimator}. The procedure is
described in~\cref{alg:procrustes-fixing}; it accepts a set of $d \times r$
matrices with orthonormal columns as well as a reference matrix $Y_{\mathsf{ref}}$
of the same shape.

\begin{algorithm}
  \caption{$\mathtt{ProcrustesFixing}(\set{Y_1, \dots, Y_m},
  Y_{\mathsf{ref}})$}
  \begin{algorithmic}
    \For{$i = 1, \dots, m$}
        \State $\widetilde{Y}_i := Y_i Z_i, \quad \text{where} \quad
        Z_i := \argmin_{Z \in \ortho_r}\frobnorm{Y_i Z - Y_{\mathsf{ref}}}$
        \Comment{Procrustes alignment}
    \EndFor
    \State \Return $\set{\widetilde{Y}_i \mid i \in [m]}$
  \end{algorithmic}
  \label{alg:procrustes-fixing}
\end{algorithm}

The work~\cite{CBD20} provides an error bound for the \texttt{ProcrustesFixing}
algorithm under idealized conditions; namely, that the reference solution is
equal to the ground truth $V$.
\begin{theorem}[Theorem 2 in~\cite{CBD20}]
    \label{theorem:procrustes-fixing-works}
    Let~\cref{asm:deterministic} hold and let
    \[
      \tilde{V} := \frac{1}{\abs{S}} \sum_{i \in S} \widetilde{V}_i,
      \quad \text{where} \quad
      \set{\widetilde{V}_i}_{i \in S} = \mathtt{ProcrustesFixing}(\set{V_{i}}_{i \in S},
        V
      ), \;
      S \subset \igood.
    \]
    Then the following bound holds:
    \begin{equation}
      \opnorm{\tilde{V} - V}
        \lesssim
        \frac{1}{\delta^2} \frac{1}{\abs{S}} \sum_{i \in S} \opnorm{\est_i - \sol}^2
        + \frac{1}{\delta} \biggopnorm{\frac{1}{\abs{S}} \sum_{i \in S} \est_i - \sol}.
        \label{eq:procrustes-fixing-error}
    \end{equation}
\end{theorem}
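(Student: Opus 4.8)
The plan is to expand each Procrustes-aligned matrix $\widetilde{V}_i$ around $V$ to first order in the local error $E_i := \est_i - \sol$, with a remainder of order $(\opnorm{E_i}/\delta)^2$, and then average over $i \in S$. Linearity collapses the first-order parts into a single term governed by $\opnorm{\frac{1}{\abs{S}}\sum_{i \in S} E_i}$, while the remainders produce the average-of-squares term in~\eqref{eq:procrustes-fixing-error}.

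\emph{Step 1: a closed form for the aligned iterates.} Since~\cref{asm:deterministic} gives $\opnorm{E_i} \le \delta_r(\sol)/8$, the Davis--Kahan theorem yields $\dist(V_i, V) \le \tfrac{8}{7}\opnorm{E_i}/\delta_r(\sol) < 1$, so $V^{\T} V_i$ is invertible. Writing $P := VV^{\T}$, $P_i := V_i V_i^{\T}$ and $\Delta_i := P_i - P$, a short computation with the SVD form of the Procrustes minimizer in~\cref{alg:procrustes-fixing} shows that the aligned iterate is exactly the re-orthonormalized projection of $V$ onto $\mathrm{col}(V_i)$:
\[
  \widetilde{V}_i = P_i V \bigl(V^{\T} P_i V\bigr)^{-1/2} = (V + \Delta_i V)\bigl(I + V^{\T} \Delta_i V\bigr)^{-1/2}.
\]
Because the optimal alignment makes $V^{\T}\widetilde{V}_i$ symmetric positive semidefinite, $I - V^{\T} P_i V = I - (V^{\T} V_i)(V^{\T} V_i)^{\T}$ has operator norm $\dist(V_i,V)^2 \lesssim (\opnorm{E_i}/\delta)^2$; expanding $(I + V^{\T}\Delta_i V)^{-1/2}$ to zeroth order and using $\opnorm{\Delta_i} \le 1$ gives $\widetilde{V}_i - V = \Delta_i V + N_i$ with $\opnorm{N_i} \lesssim (\opnorm{E_i}/\delta)^2$.

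\emph{Step 2: first-order expansion of the spectral projector.} Here I would establish that there is a fixed linear map $\mathcal{L}$ depending only on $\sol$, with $\opnorm{\mathcal{L}(E)} \le \opnorm{E}/\delta_r(\sol)$, such that $\Delta_i = \mathcal{L}(E_i) + R_i$ with $\opnorm{R_i} \lesssim (\opnorm{E_i}/\delta)^2$. I would avoid the contour-integral route, since the naive resolvent expansion picks up a spurious factor of $d$; instead I would bootstrap a Riccati/Sylvester equation. Parametrize $\mathrm{span}(V_i)$ as $\mathrm{col}(V + V_{\perp} Y_i)$ with $Y_i := (V_{\perp}^{\T} V_i)(V^{\T} V_i)^{-1}$ (so $\opnorm{Y_i} \lesssim \opnorm{E_i}/\delta$ by Davis--Kahan), and use the invariant-subspace identity $\est_i(V + V_{\perp} Y_i) = (V + V_{\perp} Y_i) M_i$, $M_i = \Lambda + V^{\T} E_i V + V^{\T} E_i V_{\perp} Y_i$, to derive
\[
  \Lambda_{\perp} Y_i - Y_i \Lambda = -V_{\perp}^{\T} E_i V + H_i, \qquad \opnorm{H_i} \lesssim \opnorm{E_i}^2/\delta,
\]
where the $O(\opnorm{E_i})$ correction $M_i - \Lambda$ is absorbed into $H_i$ precisely so that the Sylvester operator $X \mapsto \Lambda_{\perp} X - X\Lambda$ remains Hermitian with spectral separation $\delta_r(\sol)$. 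Comparing with the first-order equation $\Lambda_{\perp} X_i - X_i \Lambda = -V_{\perp}^{\T} E_i V$ and applying the Hermitian Sylvester bound $\opnorm{Y_i - X_i} \le \opnorm{H_i}/\delta_r(\sol)$ gives $Y_i = X_i + O\bigl((\opnorm{E_i}/\delta)^2\bigr)$. Finally, in the $\bigl[\,V\ V_{\perp}\,\bigr]$ basis the diagonal blocks of $\Delta_i$ are already of size $\dist(V_i,V)^2 \lesssim (\opnorm{E_i}/\delta)^2$, and its off-diagonal block $V_{\perp}^{\T} V_i (V^{\T} V_i)^{\T} = Y_i (V^{\T} V_i)(V^{\T} V_i)^{\T}$ differs from $Y_i$ --- hence from $X_i$ --- by $O((\opnorm{E_i}/\delta)^2)$; so $\mathcal{L}(E_i)$ is the block-off-diagonal operator built from $X_i$, and $\opnorm{\mathcal{L}(E)} \le \opnorm{E}/\delta_r(\sol)$ is the same Sylvester estimate applied to $X_i$.

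\emph{Step 3: averaging.} Combining Steps 1--2, $\widetilde{V}_i - V = \mathcal{L}(E_i) V + (R_i V + N_i)$ for every $i \in S$; averaging over $i \in S$ and using linearity of $\mathcal{L}$,
\[
  \tilde{V} - V = \mathcal{L}\!\Bigl(\tfrac{1}{\abs{S}}\textstyle\sum_{i \in S} E_i\Bigr) V + \tfrac{1}{\abs{S}}\textstyle\sum_{i \in S}(R_i V + N_i),
\]
so, using $\opnorm{\mathcal{L}(E)} \le \opnorm{E}/\delta_r(\sol) \le \opnorm{E}/\delta$ together with the remainder bounds $\opnorm{R_i},\opnorm{N_i} \lesssim (\opnorm{E_i}/\delta)^2$ and $E_i = \est_i - \sol$, we obtain exactly~\eqref{eq:procrustes-fixing-error}. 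The one genuinely delicate point is Step 2: forcing the projector remainder to scale as $(\opnorm{E_i}/\delta)^2$ with \emph{no} dependence on $d$ or $r$, which is why I would route it through the Hermitian Sylvester bound for the Riccati equation of $Y_i$ rather than through a Neumann/contour expansion of the resolvent. Everything else --- Davis--Kahan, the Procrustes closed form, and the final averaging --- is routine.
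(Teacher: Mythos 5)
The paper imports this statement as Theorem~2 of~\cite{CBD20} (its non-robust predecessor) and does not reprove it, so there is no in-text proof to compare against. Your reconstruction is correct and the mechanism is the right one: expand each aligned iterate as $\widetilde{V}_i = V + \Delta_i V + O((\opnorm{\est_i - \sol}/\delta)^2)$ with $\Delta_i := V_iV_i^{\T} - VV^{\T}$ linear in $E_i := \est_i - \sol$ up to a quadratic remainder, then average so the linear parts collapse onto $\opnorm{\tfrac{1}{\abs{S}}\sum_{i\in S} E_i}/\delta$ while the quadratic remainders stack into the average-of-squares term. Two steps you handled that a less careful attempt would miss: (i) the zeroth-order truncation of $(I + V^{\T}\Delta_i V)^{-1/2}$ is legitimate precisely because the SPD Procrustes normalizer satisfies $\opnorm{I - V^{\T}P_iV} = \dist(V_i,V)^2$, already quadratic in $\opnorm{E_i}/\delta$; and (ii) routing the projector expansion through the Riccati equation for $Y_i = (V_{\perp}^{\T}V_i)(V^{\T}V_i)^{-1}$ and the Hermitian Sylvester inverse gives $\opnorm{\mathcal{L}(E)} \le \opnorm{E}/\delta_r(\sol)$ with constant~$1$, dimension-free, because the spectra of $\Lambda$ and $\Lambda_{\perp}$ are separated by a full interval of width $\delta_r(\sol)$ --- this is what makes the linear term come out as $\opnorm{\bar E}/\delta$ rather than something dimension- or $\kappa$-inflated. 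The one spot a full write-up should not gloss over is the closed form $\widetilde{V}_i = P_iV(V^{\T}P_iV)^{-1/2}$: it is correct (substitute the SVD $V_i^{\T}V = \hat P\hat\Sigma\hat Q^{\T}$ into both sides and check they equal $V_i\hat P\hat Q^{\T}$), but it is asserted rather than derived and everything downstream rests on it, so it deserves the two-line verification.
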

While the setting of~\cref{theorem:procrustes-fixing-works} is idealized,
when the reference chosen by~\cref{alg:robust-ref-estimator} is sufficiently
close to $V$ one would expect that the aligned estimates are not far from their
ideal version.
The next Lemma shows that aligning the local solutions with $\mtrv_{\mathsf{ref}}$
is equivalent to aligning with the ground truth $V$, up to higher-order errors.
\begin{lemma} \label{lemma:reference-distance}
  Let $V_i \in \ortho_{d, r}$ span the principal $r$-dimensional eigenspace
  of the matrix $\est_i$ and let $V \in \ortho_{d, r}$ span the principal
  $r$-dimensional invariant subspace of $\sol$.
  Suppose that there is a $V_{\mathsf{ref}} \in \ortho_{d, r}$ satisfying
  \(
    \dist(V_{\mathsf{ref}}, V) = \varepsilon < \nicefrac{\delta_r(A)}{8},
  \)
  and define the sets of aligned estimates
  \[
    \begin{aligned}
      V_{i}^{\mathsf{ideal}} &:= V_i \cdot
        \argmin_{Z \in \ortho_r} \frobnorm{V_i Z - V}, \quad
      V_{i}^{\mathsf{corr}} := V_i \cdot \argmin_{Z \in \ortho_{r}} \frobnorm{V_i Z - V_{\mathsf{ref}}}.
    \end{aligned}
  \]
  Then for any $i \in \igood$ the following holds:
  \begin{equation}
    \opnorm{V_i^{\mathsf{ideal}} - V_i^{\mathsf{corr}}} \lesssim
    \frac{1}{\delta^2} \, \max\set{\opnorm{\est_i - A}^2, \opnorm{A}^2 \varepsilon^2}.
    \label{eq:aligned-diff-distance}
  \end{equation}
\end{lemma}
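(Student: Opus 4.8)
Here is a plan for proving \cref{lemma:reference-distance}.

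The plan is to solve the two orthogonal Procrustes problems in closed form and then reduce the claim to a perturbation bound for the polar factor followed by a second-order expansion. First I would recall that, for $W \in \ortho_{d,r}$ with $V_i^{\T}W$ nonsingular, the minimizer of $\min_{Z \in \ortho_r}\frobnorm{V_i Z - W}$ is the orthogonal polar factor $\mathrm{polar}(V_i^{\T}W)$: expanding the squared Frobenius norm reduces the problem to maximizing $\ip{Z, V_i^{\T}W}$ over $Z \in \ortho_r$, whose value is $\sum_k \sigma_k(V_i^{\T}W)$ and is attained at $\mathrm{polar}(V_i^{\T}W)$ (cf.~\cite{GVL13}). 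Setting $Z_i^{\mathsf{ideal}} := \mathrm{polar}(V_i^{\T}V)$, $Z_i^{\mathsf{corr}} := \mathrm{polar}(V_i^{\T}V_{\mathsf{ref}})$ and using that left multiplication by the isometry $V_i$ preserves the spectral norm of an $r$-row matrix, the left-hand side of~\eqref{eq:aligned-diff-distance} equals $\opnorm{Z_i^{\mathsf{ideal}} - Z_i^{\mathsf{corr}}}$. At this stage I would fix the representative of $V_{\mathsf{ref}}$ with $V_{\mathsf{ref}}^{\T}V \succeq 0$: replacing $V_{\mathsf{ref}}$ by $V_{\mathsf{ref}}\,\mathrm{polar}(V_{\mathsf{ref}}^{\T}V)^{\T}$ leaves $\dist(V_{\mathsf{ref}}, V) = \varepsilon$ unchanged and rotates every $V_i^{\mathsf{corr}}$ on the right by the single orthogonal matrix $\mathrm{polar}(V_{\mathsf{ref}}^{\T}V)$, which is the form in which the lemma is used in the analysis of~\cref{alg:procrustes-fixing}.

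The next step --- anticipating the polar-factor perturbation bound below --- is a second-order bound on $\opnorm{V_i^{\T}(V - V_{\mathsf{ref}})}$. Split $V_{\mathsf{ref}} = VC + V_{\perp}S$ with $C := V^{\T}V_{\mathsf{ref}}$ and $V_{\perp}S := (I - VV^{\T})V_{\mathsf{ref}}$, so that $\opnorm{S} = \dist(V_{\mathsf{ref}}, V) = \varepsilon$ and $C^{\T}C = I - S^{\T}S$. Under the normalization $C \succeq 0$ this forces $C = (I - S^{\T}S)^{1/2}$, hence $\opnorm{I - C} \le \opnorm{S^{\T}S} = \varepsilon^2$ (using $1 - \sqrt{1-\mu} \le \mu$ on the eigenvalues) --- this is precisely where the square on $\varepsilon$ is produced, and it would break down if $V_{\mathsf{ref}}$ were allowed to carry a large residual rotation, which is why the normalization of the previous paragraph is essential. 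Writing $V_i^{\T}(V - V_{\mathsf{ref}}) = (V_i^{\T}V)(I - C) - (V_i^{\T}V_{\perp})S$ and using $\opnorm{V_i^{\T}V} \le 1$ together with $\opnorm{V_i^{\T}V_{\perp}} = \dist(V_i, V)$ gives $\opnorm{V_i^{\T}(V - V_{\mathsf{ref}})} \le \varepsilon^2 + \dist(V_i, V)\,\varepsilon$.

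Next I would invoke a perturbation bound for the polar factor. By the Davis--Kahan $\sin\Theta$ theorem together with the approximation part of~\cref{asm:deterministic}, $\dist(V_i, V) \lesssim \opnorm{\est_i - A}/\delta_r(A) \le \tfrac14$ for every $i \in \igood$, so $\sigma_{\min}(V_i^{\T}V) = \sqrt{1 - \dist(V_i,V)^2}$ is at least an absolute constant; since moreover $\opnorm{V_i^{\T}V - V_i^{\T}V_{\mathsf{ref}}}$ is small by the previous display, $V_i^{\T}V_{\mathsf{ref}}$ is also nonsingular and its polar factor is unique. Applying the standard bound $\opnorm{\mathrm{polar}(M_1) - \mathrm{polar}(M_2)} \le \tfrac{2}{\sigma_{\min}(M_1) + \sigma_{\min}(M_2)}\opnorm{M_1 - M_2}$ with $M_1 = V_i^{\T}V$ (whose presence alone makes the denominator a constant), I get $\opnorm{V_i^{\mathsf{ideal}} - V_i^{\mathsf{corr}}} = \opnorm{Z_i^{\mathsf{ideal}} - Z_i^{\mathsf{corr}}} \lesssim \opnorm{V_i^{\T}(V - V_{\mathsf{ref}})} \lesssim \varepsilon^2 + \opnorm{\est_i - A}\,\varepsilon/\delta_r(A)$.

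Finally I would collapse this to the stated form: applying $ab \le \tfrac12(a^2 + b^2)$ to the cross term and using $\delta \le \delta_r(A)$ together with $\delta \le 2\opnorm{A}$ (so that $\varepsilon^2 \lesssim \opnorm{A}^2\varepsilon^2/\delta^2$) yields $\opnorm{V_i^{\mathsf{ideal}} - V_i^{\mathsf{corr}}} \lesssim \delta^{-2}\max\{\opnorm{\est_i - A}^2,\, \opnorm{A}^2\varepsilon^2\}$, which is~\eqref{eq:aligned-diff-distance}. I expect the only genuinely delicate point to be the representative bookkeeping in the first step: $\opnorm{V_i^{\mathsf{ideal}} - V_i^{\mathsf{corr}}}$ is not invariant under $V_{\mathsf{ref}} \mapsto V_{\mathsf{ref}}O$ for $O \in \ortho_r$, so one must either normalize $V_{\mathsf{ref}}$ to be aligned with $V$ (the intended reading, consistent with how the reference solution is threaded through the rest of the argument) or state the bound up to a common right rotation; once that is pinned down, everything else is the elementary decomposition of $V^{\T}V_{\mathsf{ref}}$ about the identity plus a one-line polar-factor perturbation estimate.
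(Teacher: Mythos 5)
Your proof is correct, but it takes a genuinely different route from the paper. The paper's argument is built on two auxiliary results from its appendix: a "reverse Davis--Kahan" lemma (\cref{lemma:reverse-davis-kahan}), which manufactures a symmetric matrix $B$ with $\opnorm{A-B}\lesssim \opnorm{A}\varepsilon$ whose leading eigenvector matrix is exactly $V_{\mathsf{ref}}$, and a "path independence" lemma (\cref{lemma:path-independence}, Stewart-type), which compares the aligned eigenvector matrices of $A_i$ obtained along the two decomposition paths $A_i = A + (A_i - A) + 0$ and $A_i = A + (B - A) + (A_i - B)$ and bounds their difference by $\delta^{-2}\max\{\opnorm{A_i-A},\opnorm{B-A},\opnorm{A_i-B}\}^2$. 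Your argument bypasses both of those tools entirely: you solve the two Procrustes problems in closed form as polar factors of $V_i^{\T}V$ and $V_i^{\T}V_{\mathsf{ref}}$, reduce the left-hand side to $\opnorm{\mathrm{polar}(V_i^{\T}V)-\mathrm{polar}(V_i^{\T}V_{\mathsf{ref}})}$ via the isometry $V_i$, establish the second-order bound $\opnorm{V_i^{\T}(V-V_{\mathsf{ref}})}\le \varepsilon^2 + \dist(V_i,V)\,\varepsilon$ from the $C\oplus S$ decomposition of $V_{\mathsf{ref}}$ under the normalization $V^{\T}V_{\mathsf{ref}}\succeq 0$, and conclude with Li's polar-factor perturbation bound (which needs $\sigma_{\min}(V_i^{\T}V)$ bounded below, supplied by Davis--Kahan and \cref{asm:deterministic}). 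This is more elementary and self-contained: everything happens at the level of $r\times r$ matrices rather than $d\times d$ spectral perturbation. Two small points worth noting: (i) the gauge-fixing issue you flag is real and the paper handles it symmetrically, by choosing the representative of $V$ aligned with $V_{\mathsf{ref}}$ (``Recall that $V$ is an eigenvector matrix of $\sol$ that satisfies $\min_Z\frobnorm{V - V_{\mathsf{ref}}Z}=\frobnorm{V-V_{\mathsf{ref}}}$''); your choice of normalizing $V_{\mathsf{ref}}$ toward $V$ is equivalent. (ii) When $\varepsilon$ is not small (it is only constrained by $\varepsilon<\delta_r(A)/8$ and the trivial bound $\varepsilon\le 1$), the nonsingularity of $V_i^{\T}V_{\mathsf{ref}}$ needed for Li's bound may fail; this is easily patched by observing that for $\varepsilon$ above any fixed constant the right-hand side of~\eqref{eq:aligned-diff-distance} is $\gtrsim 1$ (since $\opnorm{A}\ge\delta/2$) while the left-hand side is at most $2$, so the claim is trivial there and the polar-perturbation argument only needs to be run for $\varepsilon$ below a small absolute constant.
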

Putting everything together, we arrive at a \textit{deterministic}
characterization of the error attained by the empirical average over any subset
of responses that come from non-compromised nodes and have been aligned with the
robust reference estimator. Note that this characterization does not immediately
translate to an algorithm, since the set of compromised nodes is not known a-priori.
\begin{proposition}[Error of clean samples]
  \label{proposition:iid-sample-error}
  Let $\mtrv_{\mathsf{ref}}$ be the output of~\cref{alg:robust-ref-estimator}
  given inputs $\mtrv_{1}, \dots, \mtrv_{m}$. For any index set $S \subset
  \igood$ and $i \in S$, define
  \[
    V_i^{\corr} := V_i \cdot \argmin_{Z \in \ortho_r}
    \frobnorm{V_i Z - \mtrv_{\mathsf{ref}}}; \quad
    V_i^{\ideal} := V_i \cdot \argmin_{Z \in \ortho_r}
    \frobnorm{V_i Z - V}.
  \]
  Suppose that $\dist(V, \mtrv_{\mathsf{ref}}) = \varepsilon <
  \nicefrac{\delta_{r}(A)}{8}$. Then the following bound holds:
  \begin{equation}
    \biggopnorm{\frac{1}{\abs{S}} \sum_{i \in S} V_i^{\corr} - V}
    \lesssim
    \frac{1}{\delta^2 \abs{S}} \sum_{i \in S}
    \max\left(\opnorm{A_i - A}^2, \opnorm{A}^2 \varepsilon^2
    \right) + \frac{1}{\delta} \biggopnorm{\frac{1}{\abs{S}}\sum_{i \in S} \est_i - \sol}.
    \label{eq:procrustes-fixing-ideal-samples}
  \end{equation}
\end{proposition}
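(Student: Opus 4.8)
The plan is to decompose the target quantity via the triangle inequality into a \emph{correction} term, measuring the discrepancy between aligning with $\mtrv_{\mathsf{ref}}$ versus aligning with the ground truth $V$, plus an \emph{ideal} term that is exactly the quantity controlled by \cref{theorem:procrustes-fixing-works}. Concretely, I would write
\[
  \frac{1}{\abs{S}} \sum_{i \in S} V_i^{\corr} - V
  = \frac{1}{\abs{S}} \sum_{i \in S} \left(V_i^{\corr} - V_i^{\ideal}\right)
  + \left(\frac{1}{\abs{S}} \sum_{i \in S} V_i^{\ideal} - V\right),
\]
so that $\biggopnorm{\frac{1}{\abs{S}} \sum_{i \in S} V_i^{\corr} - V}$ is at most the sum of the operator norms of the two bracketed pieces, and bound each in turn.

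For the first piece I would apply \cref{lemma:reference-distance} termwise. Since $S \subset \igood$, its hypotheses hold for every $i \in S$: \cref{asm:deterministic} supplies the eigengap and the bound $\opnorm{\est_i - A} \le \delta_r(A)/8$, while the standing assumption $\dist(V, \mtrv_{\mathsf{ref}}) = \varepsilon < \delta_r(A)/8$ (using the symmetry of $\dist$) supplies the remaining one. Hence $\opnorm{V_i^{\ideal} - V_i^{\corr}} \lesssim \delta^{-2}\max\set{\opnorm{\est_i - A}^2, \opnorm{A}^2 \varepsilon^2}$ for each $i \in S$, and the triangle inequality over the average gives
\[
  \biggopnorm{\frac{1}{\abs{S}} \sum_{i \in S}\left( V_i^{\corr} - V_i^{\ideal}\right)}
  \le \frac{1}{\abs{S}} \sum_{i \in S} \opnorm{V_i^{\corr} - V_i^{\ideal}}
  \lesssim \frac{1}{\delta^2 \abs{S}} \sum_{i \in S} \max\left(\opnorm{\est_i - A}^2, \opnorm{A}^2 \varepsilon^2\right).
\]
For the second piece I would observe that $\set{V_i^{\ideal}}_{i \in S}$ is precisely the output of $\mathtt{ProcrustesFixing}(\set{V_i}_{i \in S}, V)$, so \cref{theorem:procrustes-fixing-works} applies verbatim with this $S \subset \igood$, yielding
\[
  \biggopnorm{\frac{1}{\abs{S}} \sum_{i \in S} V_i^{\ideal} - V}
  \lesssim \frac{1}{\delta^2} \frac{1}{\abs{S}} \sum_{i \in S} \opnorm{\est_i - \sol}^2
  + \frac{1}{\delta}\biggopnorm{\frac{1}{\abs{S}} \sum_{i \in S} \est_i - \sol}.
\]
Adding the two bounds and absorbing the redundant $\delta^{-2}\abs{S}^{-1}\sum_{i\in S}\opnorm{\est_i-\sol}^2$ contribution into the $\max$ term (since $\opnorm{\est_i - \sol}^2 \le \max(\opnorm{\est_i-\sol}^2, \opnorm{A}^2\varepsilon^2)$) produces exactly~\eqref{eq:procrustes-fixing-ideal-samples}.

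In terms of difficulty there is essentially no obstacle: the proposition is a bookkeeping combination of \cref{lemma:reference-distance} and \cref{theorem:procrustes-fixing-works}. The only points requiring (minor) care are (i) checking that passing to a subset $S \subset \igood$ preserves the hypotheses of \cref{theorem:procrustes-fixing-works}, which it does because the approximation bound in \cref{asm:deterministic} is posited for all of $\igood$; and (ii) matching the hypothesis $\dist(V, \mtrv_{\mathsf{ref}}) = \varepsilon$ with the form $\dist(\mtrv_{\mathsf{ref}}, V) = \varepsilon$ used in \cref{lemma:reference-distance}, which is immediate from~\eqref{eq:dist-2}. The genuine content of the statement lies upstream, in \cref{lemma:reference-distance}, whose perturbation analysis of the Procrustes map is what generates the $\opnorm{A}^2\varepsilon^2$ term.
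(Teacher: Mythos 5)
Your proof is correct and matches the paper's argument exactly: the same triangle-inequality decomposition into $V_i^{\corr} - V_i^{\ideal}$ and $V_i^{\ideal} - V$, bounding the first via \cref{lemma:reference-distance} and the second via \cref{theorem:procrustes-fixing-works}, followed by absorbing the redundant $\delta^{-2}\abs{S}^{-1}\sum_{i\in S}\opnorm{\est_i-\sol}^2$ into the $\max$ term. The paper's write-up is slightly more terse (it performs the absorption implicitly in the final $\lesssim$ step), but the content is identical, and your explicit accounting of that absorption is if anything a cleaner presentation.
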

\begin{proof}
  From the triangle inequality, \cref{lemma:reference-distance} and~\cref{theorem:procrustes-fixing-works}
  it follows that
  \begin{align*}
    \biggopnorm{\frac{1}{\abs{S}} \sum_{i \in S} {V}_i^{\corr} - V} &=
    \biggopnorm{\frac{1}{\abs{S}} \sum_{i \in S} {V}_i^{\corr} - {V}_i^{\ideal}
    + {V}_i^{\ideal} - V} \\
                                                              &\leq
    \biggopnorm{\frac{1}{\abs{S}} \sum_{i \in S} {V}_i^{\corr} - {V}_i^{\ideal}} +
    \biggopnorm{\frac{1}{\abs{S}} \sum_{i \in S} {V}_i^{\ideal} - V} \\
                                                              &\lesssim
    \frac{1}{\delta^2 \abs{S}} \sum_{i \in S} \max\left(
        \opnorm{\est_i - \sol}^2,
        \opnorm{\sol}^2 \varepsilon^2
    \right)
    + \biggopnorm{\frac{1}{\abs{S}} \sum_{i \in S} V_i^{\ideal} - V} \\
                                                              &\lesssim
    \frac{1}{\delta^2 \abs{S}} \sum_{i \in S} \max\left(
        \opnorm{\est_i - \sol}^2,
        \opnorm{\sol}^2 \varepsilon^2
    \right)
    + \frac{1}{\delta} \biggopnorm{\frac{1}{\abs{S}} \sum_{i \in S} \est_i - \sol}.
  \end{align*}
\end{proof}

\subsection{Analysis of robust mean estimation}
\label{sec:robust-mean-estimation}
We now analyze the last phase of the algorithm, which computes
an estimate of $V$ via the robust mean of the aligned samples.
The mean estimation procedure used is the randomized iterative filtering
method shown in~\cref{alg:filter}, the guarantees of which are summarized
in~\cref{theorem:filter-algorithm-guarantees}. Since it is natural to measure
error using the spectral norm, we extend the analysis of~\cite{PBR19} which
is applicable when error is measured in the Euclidean norm; complete proofs
are provided in the appendix.
\begin{algorithm}[h]
  \caption{\texttt{Filter}($S := \set{X_i}_{i=1, \dots, m}$, $\lub$)}
  \begin{algorithmic}
    \State Compute empirical mean and covariance:
    \[
      {\theta}_{S} := \frac{1}{\abs{S}} \sum_{i \in S} X_i, \quad
      \Sigma_{S} := \frac{1}{\abs{S}} \sum_{i \in S} (X_i - {\theta}_{S})
      (X_i - {\theta}_{S})^{\T}.
    \]
    \State Compute leading eigenpair $(\lambda, v)$ of $\Sigma_{S}$.
    \If{$\lambda < 18 \lub$}
      \State \Return ${\theta}_{S}$
    \Else
      \State Compute outlier scores
        $\tau_i := v^{\T} (X_i - {\theta}_{S})(X_i - {\theta}_{S})^{\T} v$ for $i \in S$.
      \State Sample $Z$ from $S$ following
        \(
          \prob{Z = X_i} = \frac{\tau_i}{\sum_{j \in S} \tau_j}.
        \)
      \State \Return \texttt{Filter}($S \setD \set{Z}$, $\lub$)
    \EndIf
  \end{algorithmic}
  \label{alg:filter}
\end{algorithm}
\begin{theorem}
  \label{theorem:filter-algorithm-guarantees}
  Suppose $G_0 \subset [m]$, $\alpha$ and $p \in (0, 1)$ satisfy
  \begin{equation}
    \alpha + \frac{6 \log(1 / p)}{m} \leq \frac{1}{12} \quad
    \text{and} \quad
    \abs{G_0} \geq (1 - \alpha) m.
    \label{eq:filter-algorithm-condition}
  \end{equation}
  Then if $\lub \geq \opnorm{\Sigma_{G_0}}$, \cref{alg:filter} returns
  an estimate ${\theta}_{\lub}$ satisfying
  \begin{equation}
    \prob{\Bigl\|{\theta}_{\lub} - \frac{1}{\abs{G_0}} \sum_{i \in G_0} X_i\Bigr\|_2
    \geq 18 \sqrt{5 \lub} \left(
      \alpha + \frac{4\log(1 / p)}{m}
    \right)^{1/2}} \leq p.
    \label{eq:approx-error-general-gamma}
  \end{equation}
\end{theorem}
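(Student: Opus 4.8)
The plan is to adapt the randomized‑filtering analysis of~\cite{PBR19} to the operator norm by replacing their scalar variance bookkeeping with $d\times d$ positive‑semidefinite inequalities; no new idea is needed because $\opnorm{M}^2 = \opnorm{MM^{\T}}$ lets us pass freely between the error of a (matrix‑valued) empirical mean and the spectrum of the associated empirical covariance. Throughout, $S$ is the working set at an arbitrary point of the recursion, $S_g := S \cap G_0$, $S_b := S \setD G_0$, and for a set $U$ we write $\theta_U := \abs{U}^{-1}\sum_{i\in U}X_i$ and $\Sigma_U := \abs{U}^{-1}\sum_{i\in U}(X_i-\theta_U)(X_i-\theta_U)^{\T}$ (so $\theta_S,\Sigma_S$ are as in~\cref{alg:filter} and $\theta_{G_0} = \abs{G_0}^{-1}\sum_{i\in G_0}X_i$); $(\lambda,v)$ denotes the leading eigenpair of $\Sigma_S$. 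Two elementary facts are used repeatedly. (i) For a two‑block partition $U=U_1\cup U_2$, the between‑group identity gives $\Sigma_U \succeq \frac{\abs{U_j}}{\abs{U}}(\theta_{U_j}-\theta_U)(\theta_{U_j}-\theta_U)^{\T}$ and $\theta_{U_1}-\theta_U = \frac{\abs{U_2}}{\abs{U}}(\theta_{U_1}-\theta_{U_2})$, hence $\opnorm{\theta_{U_j}-\theta_U}^2 \le \frac{\abs{U}}{\abs{U_j}}\opnorm{\Sigma_U}$. (ii) For $T\subseteq G_0$, since $\theta_T$ minimizes the sum of squared deviations, $\abs{T}\Sigma_T \preceq \sum_{i\in T}(X_i-\theta_{G_0})(X_i-\theta_{G_0})^{\T} \preceq \abs{G_0}\Sigma_{G_0}$, so $v^{\T}\Sigma_T v \le \frac{\abs{G_0}}{\abs{T}}\lub$ whenever $\lub \ge \opnorm{\Sigma_{G_0}}$.

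\emph{Step 1 (deterministic error when the recursion halts).} Suppose the recursion returns $S^{\star}$, so its output is $\theta_{S^{\star}}$ and $\lambda^{\star} := \opnorm{\Sigma_{S^{\star}}} < 18\lub$, and suppose in addition that few good points were removed, $\abs{G_0 \setD S^{\star}} \le \gamma\abs{G_0}$. Writing $\theta_{S^{\star}} - \theta_{S^{\star}_g} = \frac{\abs{S^{\star}_b}}{\abs{S^{\star}}}(\theta_{S^{\star}_b} - \theta_{S^{\star}_g})$, bounding $\opnorm{\theta_{S^{\star}_b}-\theta_{S^{\star}_g}}$ by the triangle inequality through $\theta_{S^{\star}}$ and fact (i) applied to $\Sigma_{S^{\star}}$, gives $\opnorm{\theta_{S^{\star}} - \theta_{S^{\star}_g}} \lesssim (\tfrac{\abs{S^{\star}_b}}{\abs{S^{\star}}}\lambda^{\star})^{1/2}$; the same manoeuvre with the partition $G_0 = S^{\star}_g \cup (G_0\setD S^{\star})$ relative to $\Sigma_{G_0}$ gives $\opnorm{\theta_{S^{\star}_g} - \theta_{G_0}} \lesssim (\gamma\lub)^{1/2}$. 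Since $\abs{S^{\star}_b} \le \abs{\ibad} \le \alpha m$ and (granting $\gamma$ small) $\abs{S^{\star}_g} \asymp m$, the hypothesis $\varphi < \nicefrac{1}{12}$ forces $\abs{S^{\star}_b}/\abs{S^{\star}} \lesssim \alpha$; combining the two displays with $\lambda^{\star} < 18\lub$ yields $\opnorm{\theta_{S^{\star}} - \theta_{G_0}} \lesssim (\lub(\alpha + \gamma))^{1/2}$, and tracking constants upgrades this to the stated $18\sqrt{5\lub}\,(\alpha + 4\log(1/p)/m)^{1/2}$ once we can certify $\gamma \lesssim \alpha + \log(1/p)/m$.

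\emph{Step 2 (the key score inequality) and Step 3 (probabilistic control of $\gamma$).} The heart of the proof is: whenever $\lambda \ge 18\lub$ and the running bound $\abs{G_0\setD S} \le \gamma\abs{G_0}$ holds (so $\abs{S_b}/\abs{S}$ is small), the good points carry at most half the total outlier score, $\sum_{i\in S_g}\tau_i \le \tfrac12\sum_{i\in S}\tau_i = \tfrac12\abs{S}\lambda$; equivalently the filter removes a point of $G_0$ with conditional probability at most $\tfrac12$, hence one of $\ibad$ with probability at least $\tfrac12$. To prove it, split $\sum_{i\in S_g}\tau_i = \abs{S_g}\,v^{\T}\Sigma_{S_g}v + \abs{S_g}\bigl(v^{\T}(\theta_{S_g}-\theta_S)\bigr)^2$, bound the first summand by $\abs{G_0}\lub$ via fact (ii), and handle the cross term by writing $\theta_{S_g}-\theta_S = \frac{\abs{S_b}}{\abs{S}}(\theta_{S_g}-\theta_{S_b})$ and then using Jensen's inequality to replace $\bigl(v^{\T}(\theta_{S_g}-\theta_S)\bigr)^2$ and $\bigl(v^{\T}(\theta_S-\theta_{S_b})\bigr)^2$ by $\abs{S_g}^{-1}\sum_{i\in S_g}\tau_i$ and $\abs{S_b}^{-1}\sum_{i\in S_b}\tau_i$ respectively --- the essential point being that the cross term must be routed through the \emph{bad} scores, a direct bound being circular. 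Solving the resulting linear inequality for $\sum_{i\in S_g}\tau_i$ (using $\abs{S_b}/\abs{S} \le \nicefrac16$) yields a bound of the form $\sum_{i\in S_g}\tau_i \le \abs{G_0}\lub + \tfrac13\abs{S}\lambda$, which is $\le \tfrac12\abs{S}\lambda$ as soon as $\lambda$ exceeds a fixed constant times $(m/\abs{S})\lub$ --- and the algorithmic threshold $18\lub$ is generous enough, since $\abs{S}\asymp m$. Step 3 then closes the loop: only $\abs{\ibad}\le\alpha m$ bad points can ever be removed, and the recursion cannot be running while $S_b=\emptyset$ (the Step 2 inequality would be vacuous, forcing $\lambda<18\lub$ and termination); hence, up to the first time the good‑removal counter exceeds a threshold $K := c(\alpha m + \log(1/p))$ --- before which the size bounds hold and Step 2 applies --- that counter is stochastically dominated by the number of ``heads'' in a fair‑coin sequence continued until $\alpha m$ ``tails'' occur, i.e.\ by a $\mathrm{NegBin}(\alpha m,\tfrac12)$ variable (mean $\alpha m$). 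A one‑sided Chernoff/Bernstein bound for this variable --- which crucially exploits that the number of ``tails'' is capped at $\alpha m$, so its typical fluctuation is $O(\alpha m + \log(1/p))$ rather than the $O(\sqrt{m\log(1/p)})$ a generic martingale would give --- shows the counter exceeds $K$ with probability at most $p$. On the complementary event the recursion halts with $\gamma \le K/\abs{G_0} \lesssim \alpha + \log(1/p)/m$, and Step 1 delivers~\eqref{eq:approx-error-general-gamma}.

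The main obstacle is the key score inequality of Step 2: it is exactly what drives the filter toward $\theta_{G_0}$, yet (a) its cross term $\abs{S_g}\bigl(v^{\T}(\theta_{S_g}-\theta_S)\bigr)^2$ has no useful standalone bound and must be absorbed against the bad‑point scores, and (b) it presupposes precisely the ``few good points removed, hence few bad points remaining'' invariant that Step 3 is simultaneously trying to establish --- so the whole argument has to be arranged as an induction (equivalently, a stopping‑time coupling) rather than a one‑pass estimate. Once that scaffolding is in place, pinning down the numerical constants --- verifying $18\lub$ is a large enough threshold and that the error aggregates into the prefactor $18\sqrt{5\lub}$ --- is mechanical.
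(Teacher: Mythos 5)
Your proof is correct and tracks the same overall architecture as the paper's: a score inequality showing that whenever $\lambda \geq 18\lub$ the good points carry at most half the total score, a probabilistic bound on how many good points can be removed before termination, and a deterministic bound on $\opnorm{\theta_{S^\star} - \theta_{G_0}}$ at the halting state. The main structural choices are the same ones the paper makes (Lemma~\ref{lemma:scores-on-good-set-larger}, Lemma~\ref{lemma:stopping-time}, and the error bound of Lemma~\ref{lemma:distance-between-means}), and you correctly identify the score inequality as the load-bearing step and the chicken-and-egg nature of the size invariant.

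Where you deviate, the substitutions are genuine but equivalent in strength. For the deterministic error bound, the paper routes through total-variation distance --- it proves a matrix version of a TV-to-mean-difference bound (Lemma~\ref{lemma:dtv-bound}) and then bounds $d_{\mathrm{TV}}(\mathrm{Unif}(S_\ell), \mathrm{Unif}(G_0))$ (Lemma~\ref{lemma:dtv-P1-P2}) --- whereas you use the between-group mean identity $\theta_{U_1} - \theta_U = \frac{|U_2|}{|U|}(\theta_{U_1} - \theta_{U_2})$ together with $\Sigma_U \succeq \frac{|U_j|}{|U|}(\theta_{U_j}-\theta_U)(\theta_{U_j}-\theta_U)^{\T}$ directly. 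Your route is more elementary and handles the ``algorithm terminated before the stopping time'' case without a separate argument, since it applies at any halting state once the good-removal counter is bounded; the paper instead splits into two cases. For the score inequality, you use Jensen plus the triangle inequality on the cross term where the paper invokes the between-group/within-group decomposition identity from~\cite{DK19} and solves the resulting linear relation --- both yield the needed implication with comfortable margin against the $18\lub$ threshold. For the stopping-time control, you sketch a negative-binomial stochastic-domination argument where the paper cites~\cite[Lemma 6]{PBR19} verbatim (the paper correctly notes that proof is shape-independent, so no new work is needed); your coupling is a standard alternative derivation of the same tail bound. In short: same proof at the level of ideas, with a slightly more self-contained and uniform treatment of the deterministic error at the cost of redoing a probabilistic lemma the paper is content to cite.
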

The error in~\eqref{eq:approx-error-general-gamma} scales with the upper bound $\lub$,
which may be far from the ``optimal'' $\|\Sigma_{G_0}\|_2$. We
describe an adaptive version of~\cref{alg:filter} that achieves this at a logarithmic
additional cost. 
Indeed, suppose an upper bound on $\alpha$ is available and the unknown
parameter $\opnorm{\Sigma_{G_0}}$ lies in an interval $[\llb, \lub]$.
We construct a search grid $\cG$ as follows:
\begin{equation}
  \cG := \set{2^j \mid j \in \set{j_{\mathsf{lo}}, j_{\mathsf{hi}}}},
  \quad j_{\mathsf{lo}} := \floor{\log_2(\llb)}, \; \;
        j_{\mathsf{hi}} := \ceil{\log_2(\lub)}.
  \label{eq:search-grid}
\end{equation}
We are now in good shape to describe our estimator. To simplify notation,
we define the error proxy
\begin{equation}
  f(\lambda; p, \alpha) := 18 \sqrt{5\lambda} \left(
    \alpha + \frac{4 \log(1 / p)}{m}
  \right)^{1/2}.
  \label{eq:f-gamma}
\end{equation}
Our estimator, $\theta_{\hat{\lambda}}$, is implemented in Alg.~\ref{alg:adaptive-filter}
and defined as:
\begin{equation}
  \theta_{\hat{\lambda}}, \; \text{where} \;
  \hat{\lambda} := \min\set{\lambda \in \cG \mid
    \opnorm{{\theta}_{\lambda} - {\theta}_{\lambda'}}
    \leq f(\lambda; p, \alpha) + f(\lambda'; p, \alpha), \;
    \forall \lambda' \in \cG \cap [\lambda, \infty)}.
  \label{eq:meta-estimator}
\end{equation}
\begin{algorithm}[H]
  \caption{\texttt{AdaptiveFilter}(%
    $S = \set{X_i}_{i=1, \dots, m}$, $\lambda_{\mathsf{ub}}$, $\lambda_{\mathsf{lb}}$, $p$, $\alpha$)}
  \setstretch{1.25}
  \begin{algorithmic}
    \State Set up search grid:
    \(
      j_{\mathsf{lo}} := \floor{\log_2 \llb}, \;
      j_{\mathsf{hi}} := \ceil{\log_2 \lub}.
    \)
    \For{$j = j_{\mathsf{hi}}, \dots, j_{\mathsf{lo}}$}
      \State ${\theta}_{2^j} = \mathtt{Filter}(S, 2^j)$
      \Comment{\cref{alg:filter}}
      \If{$\exists k > j$ such that $\norm{{\theta}_{2^j} -
        {\theta}_{2^{k}}} > f(2^j; p, \alpha) + f(2^{k}; p, \alpha)$}
        \Comment{$f$ defined in~\eqref{eq:f-gamma}}
        \State \Return ${\theta}_{2^{j + 1}}$
      \EndIf
    \EndFor
    \State \Return ${\theta}_{2^{j_{\mathsf{lo}}}}$
  \end{algorithmic}
  \label{alg:adaptive-filter}
\end{algorithm}
If $\opnorm{\Sigma_{G_0}} \in [\llb, \lub]$, the estimator attains the
optimal error up to a constant while the success probability degrades only
logarithmically, as shown by the following Proposition.
\begin{proposition}
  \label{prop:meta-estimator-close}
  If $G_0$, $p$, and $\alpha$ satify~\eqref{eq:filter-algorithm-condition}
  and $\opnorm{\Sigma_{G_0}} \leq \lub$, the estimator
  $\theta_{\hat{\lambda}}$ from~\eqref{eq:meta-estimator} satisfies
  \[
    \Bigopnorm{{\theta}_{\hat{\lambda}} -
    \frac{1}{\abs{G_0}} \sum_{i\in G_0} X_i}
    \leq 171 \sqrt{\max\set{\opnorm{\Sigma_{G_0}}, \llb}} \left(
      \alpha + \frac{4 \log(1 / p)}{m}
    \right)^{1/2}
  \]
  with probability at least
  \(
    1 - 2\log_2\left(\nicefrac{\lub}{\llb}\right) p.
  \)
\end{proposition}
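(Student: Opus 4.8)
The plan is to run a Lepski-type (adaptive bandwidth selection) argument on top of \cref{theorem:filter-algorithm-guarantees}. Write $\mu := \frac{1}{\abs{G_0}} \sum_{i \in G_0} X_i$ for the target mean and $\rho := \max\set{\opnorm{\Sigma_{G_0}}, \llb}$. The first step is to single out the ``oracle'' grid point $\lambda^{\star} := \min\set{\lambda \in \cG \mid \lambda \geq \opnorm{\Sigma_{G_0}}}$; this is well-defined because $\opnorm{\Sigma_{G_0}} \le \lub \le 2^{j_{\mathsf{hi}}}$, and since consecutive grid points differ by a factor of two one checks $\lambda^{\star} \le 2\rho$ (either $\lambda^{\star}$ is the first power of two at least $\opnorm{\Sigma_{G_0}}$, hence $\le 2\opnorm{\Sigma_{G_0}}$, or $\opnorm{\Sigma_{G_0}} < 2^{j_{\mathsf{lo}}} \le \llb$ and $\lambda^{\star} = 2^{j_{\mathsf{lo}}} \le \llb$). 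Next, define the good event $\cE$ on which, for \emph{every} $\lambda \in \cG$ with $\lambda \ge \lambda^{\star}$, the filter output satisfies $\norm{\theta_\lambda - \mu}_2 \le f(\lambda; p, \alpha)$. Each such bound holds with probability at least $1 - p$ by \cref{theorem:filter-algorithm-guarantees}, whose hypotheses are exactly~\eqref{eq:filter-algorithm-condition} and which applies since $\lambda \ge \lambda^{\star} \ge \opnorm{\Sigma_{G_0}}$; as $\cG$ has at most $2\log_2(\lub/\llb)$ points, a union bound gives $\prob{\cE} \ge 1 - 2\log_2(\lub/\llb)\, p$, matching the claimed probability.

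The remainder of the argument is deterministic given $\cE$. The key claim is that $\hat\lambda \le \lambda^{\star}$. Indeed, for any $\lambda' \in \cG \cap [\lambda^{\star}, \infty)$ both $\lambda^{\star}$ and $\lambda'$ are at least $\opnorm{\Sigma_{G_0}}$, so on $\cE$ the triangle inequality yields $\norm{\theta_{\lambda^{\star}} - \theta_{\lambda'}}_2 \le \norm{\theta_{\lambda^{\star}} - \mu}_2 + \norm{\mu - \theta_{\lambda'}}_2 \le f(\lambda^{\star}; p,\alpha) + f(\lambda'; p,\alpha)$, so $\lambda^{\star}$ lies in the feasible set of~\eqref{eq:meta-estimator} and hence $\hat\lambda \le \lambda^{\star}$ (the feasible set is always nonempty, as it contains $2^{j_{\mathsf{hi}}}$). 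Since $\lambda^{\star} \in \cG \cap [\hat\lambda, \infty)$, applying the defining constraint for $\hat\lambda$ with $\lambda' = \lambda^{\star}$ gives $\norm{\theta_{\hat\lambda} - \theta_{\lambda^{\star}}}_2 \le f(\hat\lambda; p, \alpha) + f(\lambda^{\star}; p,\alpha) \le 2 f(\lambda^{\star}; p,\alpha)$, using that $\lambda \mapsto f(\lambda; p, \alpha) \propto \sqrt\lambda$ is nondecreasing and $\hat\lambda \le \lambda^{\star}$. Combining this with $\norm{\theta_{\lambda^{\star}} - \mu}_2 \le f(\lambda^{\star}; p,\alpha)$ from $\cE$ and one more triangle inequality yields $\norm{\theta_{\hat\lambda} - \mu}_2 \le 3 f(\lambda^{\star}; p, \alpha)$.

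Finally, substituting $f(\lambda^{\star}; p, \alpha)$ from~\eqref{eq:f-gamma} together with $\lambda^{\star} \le 2\rho$ gives $\norm{\theta_{\hat\lambda} - \mu}_2 \le 54\sqrt{10}\,\sqrt{\rho}\,\bigl(\alpha + 4\log(1/p)/m\bigr)^{1/2} \le 171\sqrt{\rho}\,\bigl(\alpha + 4\log(1/p)/m\bigr)^{1/2}$, which is exactly the stated bound; the return value of \cref{alg:adaptive-filter} obeys the same estimate, since on $\cE$ no violation $\norm{\theta_\lambda - \theta_{\lambda'}}_2 > f(\lambda; p, \alpha)+f(\lambda'; p, \alpha)$ can occur between two grid points both $\ge \lambda^{\star}$, so the algorithm halts at a grid point $\le \lambda^{\star}$ that is still within $2f(\lambda^{\star}; p, \alpha)$ of $\theta_{\lambda^{\star}}$. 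The crux is the claim $\hat\lambda \le \lambda^{\star}$, which relies on the comparison in~\eqref{eq:meta-estimator} being against all \emph{larger} grid values together with monotonicity of $f$; everything else is a short chain of triangle inequalities, and the only bookkeeping that needs care is pinning down $\lambda^{\star} \le 2\rho$ in the regime $\opnorm{\Sigma_{G_0}} < \llb$ (which is precisely why $\max\set{\cdot, \llb}$ appears in the statement) and bounding $\abs{\cG}$ so that the union bound reproduces the advertised failure probability; neither is a genuine obstacle.
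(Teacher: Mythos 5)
Your argument is correct and follows essentially the same route as the paper's proof: define a critical grid point just above $\opnorm{\Sigma_{G_0}}$, union-bound the guarantee of \cref{theorem:filter-algorithm-guarantees} over all grid values at or above it, deduce via a triangle inequality that this critical point is feasible for~\eqref{eq:meta-estimator} (hence $\hat\lambda$ is at most it), and finish with two more triangle inequalities plus monotonicity of $f$ to get $3f(\lambda^\star)\le 171\sqrt{\max\{\opnorm{\Sigma_{G_0}},\llb\}}(\alpha+4\log(1/p)/m)^{1/2}$. The only cosmetic difference is your choice of anchor $\lambda^\star$ (smallest grid point $\ge\opnorm{\Sigma_{G_0}}$) versus the paper's $2^{j_{\mathsf{crit}}}$ (smallest power of two $\ge\max\{\llb,\opnorm{\Sigma_{G_0}}\}$), both of which are bounded by $2\max\{\llb,\opnorm{\Sigma_{G_0}}\}$, so the rest of the chain and the constant are identical.
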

We suspect that the restriction $\alpha < \nicefrac{1}{12}$ is an artifact
of the proof; numerical evidence in~\cref{sec:experiments} suggests that
the breakdown point of Alg.~\ref{alg:adaptive-filter} is closer to the
natural limit of $\nicefrac{1}{2}$.

\subsection{Proof sketch of main theorem}
We briefly sketch the proof of~\cref{theorem:main-theorem} here. We decompose
\begin{align*}
  \dist(\bar{V}, V) \lesssim \opnorm{\bar{V} - V} &\leq
  \underbrace{
    \Bigopnorm{\bar{V} - \frac{1}{\abs{\igood}} \sum_{i \in \igood} \widetilde{V}_i}
  }_{\Delta_{1}} +
  \underbrace{
    \Bigopnorm{\frac{1}{\abs{\igood}} \sum_{i \in \igood} \widetilde{V}_i - V}
  }_{\Delta_2}
\end{align*}
The error $\Delta_1$ can be directly controlled by applying~\cref{prop:meta-estimator-close}
with $G_0 \equiv \igood$ combined with the fact that the spectral norm of the empirical
covariance $\Sigma_{\igood}$ admits the upper bound in~\eqref{eq:emp-cov-bound}; we
refer the reader to~\cref{lemma:good-set-covariance} in the appendix for a
complete statement and proof.
\begin{equation}
  \opnorm{\Sigma_{\igood}} \leq \biggopnorm{
    \frac{1}{\abs{\igood}} \sum_{i \in \igood}
    V_i V_i^{\T} - VV^{\T}
    } + 2 \, \biggopnorm{\frac{1}{\abs{\igood}} \sum_{i \in \igood}
    \widetilde{V}_i - V}
    \label{eq:emp-cov-bound}
\end{equation}
Finally, we control the error $\Delta_2$ by invoking
\cref{proposition:iid-sample-error} with $S = \igood$, since
\[
  \widetilde{V}_i = V_i \cdot \argmin_{Z \in \ortho_r} \frobnorm{V_i Z - \mtrv_{\mathsf{ref}}},
  \quad \text{for all } i \in \igood.
\]
Combining the resulting upper bounds yields the error in~\cref{theorem:main-theorem}.
\hfill
\qedsymbol

\begin{remark}
{\it Both of the terms in~\eqref{eq:emp-cov-bound} are typically small
and can be directly controlled in concrete applications such as
distributed PCA. Note that even though the bound in~\eqref{eq:emp-cov-bound}
is not directly computable, it is immediate that $\| \Sigma_{\igood} \|_2 \lesssim 1$
and thus we may initialize~\cref{alg:adaptive-filter} with $\lub = O(1)$ in the
absence of a finer upper bound.}
\end{remark}


\section{Robust distributed PCA}
\label{sec:pca-results}
In this section, we specialize the results of~\cref{sec:distributed-pca} to
robust distributed PCA for subgaussian distributions. We first formalize the
sampling model for the problem.
\begin{assumption}[Subgaussian data]
  Every machine $i \in \igood$ draws
  $\{X_j^{(i)}\}_{j=1}^n \iid \cP$, where $\cP$ is a zero-mean,
  subgaussian distribution with covariance matrix $\sol := \expec[X \sim \cP]{XX^{\T}}$,
  and forms $A_i := \frac{1}{n} \sum_{j=1}^n X_j^{(i)} (X_j^{(i)})^{\T}$.
  \label{assumption:distributed-pca}
\end{assumption}
Our main theorem follows directly from~\cref{theorem:main-theorem} and control of
$\bigopnorm{\Sigma_{\igood}}$ under~\cref{assumption:distributed-pca}.
\begin{theorem}
  \label[theorem]{theorem:distributed-pca-rate}
  Let~\cref{asm:deterministic,asm:corruption,assumption:distributed-pca} hold and suppose that
  \(
    n \gtrsim \kappa^2 \cdot \left(r_{\star} + \log(mn/p)\right)
  \) and $\alpha$, $m$ and $p$ satisfy~\eqref{eq:breakdown-point}.
  Then~\cref{alg:robust-distpca} initialized with
  $\omega = \sqrt{\nicefrac{1}{mn}}$ returns a $\bar{V}$ satisfying
  \begin{equation}
    \dist(\bar{V}, V) \lesssim
      \sqrt{\varrho \left(
          \alpha + \frac{\log(1 / p)}{m}
      \right)}
      + \kappa \sqrt{
        \frac{r(r_{\star} + \log(n))}{(1 - \alpha) mn}
      } + \kappa^4 \cdot
      \frac{r(r_{\star} + \log(mn / p))}{n},
    \label{eq:error-bound-pca}
  \end{equation}
  with probability at least $1 - \nicefrac{2}{n} - \log_2(12mn) p$.
  Here, $r_{\star} := \trace{A} / \opnorm{A}$ and $\varrho$ is given by
  \[
    \varrho := \kappa \sqrt{\frac{r(r_{\star} + \log(n))}{(1 - \alpha) mn}}
    + \max\set{\kappa^2 \sqrt{r}, \kappa^4} \cdot
    \frac{r_{\star} + \log(nm/p)}{n}.
  \]
\end{theorem}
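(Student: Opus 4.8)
The plan is to obtain \cref{theorem:distributed-pca-rate} by specializing \cref{theorem:main-theorem}: the three error terms $E_{\mathsf{oracle}}$, $E_{\mathsf{high}}$, and $E_{\mathsf{robust}}$ are problem-agnostic, so the work is to control, under \cref{assumption:distributed-pca}, the per-node fluctuations $\opnorm{\est_i - \sol}$, the pooled fluctuation $\opnorm{\frac{1}{\abs{\igood}}\sum_{i\in\igood}\est_i - \sol}$, and the variance proxy $\sigma^2 = \opnorm{\Sigma_{\igood}}$, then substitute and simplify. First I would invoke a standard concentration bound for empirical covariances of zero-mean subgaussian vectors, phrased via the effective rank $r_{\star} = \trace{\sol}/\opnorm{\sol}$: an empirical covariance $\widehat{\Sigma}$ of $n$ i.i.d.\ samples from $\cP$ satisfies $\opnorm{\widehat{\Sigma} - \sol} \lesssim \opnorm{\sol}\bigl(\sqrt{(r_{\star}+t)/n} + (r_{\star}+t)/n\bigr)$ with probability at least $1 - e^{-t}$, for any $t > 0$. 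Taking $t = \Theta(\log(mn/p))$ and a union bound over the at most $m$ good nodes shows that, except with probability $\lesssim p/n$, every $i \in \igood$ satisfies $\opnorm{\est_i - \sol}\lesssim \opnorm{\sol}\sqrt{(r_{\star}+\log(mn/p))/n}$; the hypothesis $n \gtrsim \kappa^2(r_{\star}+\log(mn/p))$ forces this below $\delta_r(\sol)/8$, so \cref{asm:deterministic} holds on this event and \cref{theorem:main-theorem} applies.

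Next I would bound the terms individually. Since $\frac{1}{\abs{\igood}}\sum_{i\in\igood}\est_i$ is itself the empirical covariance formed from $\abs{\igood}n \ge (1-\alpha)mn$ i.i.d.\ samples of $\cP$, the same bound with $t = \Theta(\log n)$ gives $\opnorm{\frac{1}{\abs{\igood}}\sum_{i\in\igood}\est_i - \sol}\lesssim \opnorm{\sol}\sqrt{(r_{\star}+\log n)/((1-\alpha)mn)}$ except with probability $\lesssim 1/n$, hence (dividing by $\delta$) a bound on $E_{\mathsf{oracle}}$ of this order in $\kappa$. The high-order term follows directly from the uniform per-node estimate: $E_{\mathsf{high}} = \frac{\kappa^2}{\abs{\igood}}\sum_{i\in\igood}(\opnorm{\est_i - \sol}/\delta)^2 \lesssim \kappa^4\,(r_{\star}+\log(mn/p))/n$.

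For $E_{\mathsf{robust}} = \sqrt{\varphi\max(\omega,\sigma^2)}$ with $\omega = \sqrt{1/(mn)}$, I would bound $\sigma^2 = \opnorm{\Sigma_{\igood}}$ through \eqref{eq:emp-cov-bound}. The second summand there, $\opnorm{\frac{1}{\abs{\igood}}\sum_{i\in\igood}\widetilde{V}_i - V}$, is handled by \cref{proposition:iid-sample-error} with $S = \igood$, where the reference error $\varepsilon = \dist(V,\mtrv_{\mathsf{ref}})$ is controlled by \cref{proposition:robust-reference-estimator}: more than $m/2$ of the responses are good and each good $V_i$ obeys $\dist(V_i,V)\lesssim \opnorm{\est_i-\sol}/\delta$ by Davis--Kahan, so $\varepsilon \lesssim \kappa\sqrt{(r_{\star}+\log(mn/p))/n}$, which the sample-size hypothesis keeps below the $\delta_r(\sol)/8$ threshold required by \cref{lemma:reference-distance} and \cref{proposition:iid-sample-error}. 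The first summand, $\opnorm{\frac{1}{\abs{\igood}}\sum_{i\in\igood}V_iV_i^{\T} - VV^{\T}}$, I would split via first-order eigenspace perturbation into a part linear in $\est_i-\sol$ — whose average collapses to $\opnorm{\frac{1}{\abs{\igood}}\sum_{i\in\igood}\est_i - \sol}/\delta$, already bounded at the pooled $1/((1-\alpha)mn)$ rate — and a quadratic remainder bounded by $\frac{1}{\delta^2\abs{\igood}}\sum_{i\in\igood}\opnorm{\est_i-\sol}^2$; this is the perturbation estimate behind \cref{lemma:good-set-covariance}. Collecting these, and tracking the dimension factors (up to $r$) incurred in passing between operator and Frobenius norms of the $d\times r$ aligned iterates when bounding the $dr\times dr$ matrix $\Sigma_{\igood}$, yields $\sigma^2 \lesssim \varrho$; since also $\omega = \sqrt{1/(mn)} \lesssim \varrho$, this gives $E_{\mathsf{robust}} \lesssim \sqrt{(\alpha + \log(1/p)/m)\,\varrho}$.

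Finally I would assemble: substitute $E_{\mathsf{oracle}}$, $E_{\mathsf{high}}$, and $E_{\mathsf{robust}}$ into \cref{theorem:main-theorem}, regroup the $\kappa\sqrt{r(r_{\star}+\log n)/((1-\alpha)mn)}$-type terms and the $\kappa^4 r(r_{\star}+\log(mn/p))/n$-type terms, and take a union bound over the failure events: the two pooled-type concentration events (contributing the $2/n$), the per-node concentration ($\lesssim p/n$, absorbed), and the randomized filter, which by \cref{prop:meta-estimator-close} run with $\lub = 6$ and $\llb = \omega = \sqrt{1/(mn)}$ succeeds except with probability $2\log_2(\lub/\llb)\,p$, yielding the $\log_2(12mn)\,p$ term. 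This produces \eqref{eq:error-bound-pca} with probability at least $1 - 2/n - \log_2(12mn)\,p$. I expect the delicate step to be the bound on $\sigma^2 = \opnorm{\Sigma_{\igood}}$: a crude estimate $\opnorm{\Sigma_{\igood}} \le \frac{1}{\abs{\igood}}\sum_{i\in\igood}\frobnorm{\widetilde{V}_i - V}^2$ loses a factor of $r$ and, worse, fails to exhibit the $1/((1-\alpha)mn)$ decay of the leading term — which requires recognizing that the first-order (linear-in-$\est_i-\sol$) part of $\widetilde{V}_i - V$ has i.i.d.\ summands over $i\in\igood$, so its pooled average concentrates at the $1/((1-\alpha)mn)$ rate rather than the per-node $1/n$ rate, while the higher-order eigenvector-perturbation and Procrustes-alignment corrections (involving $\est_i-\sol$ quadratically and the reference error $\varepsilon$ via \cref{lemma:reference-distance}) must be controlled node-by-node. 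Propagating $r_{\star}$ and $r$ cleanly through this first-order/remainder split is the crux.
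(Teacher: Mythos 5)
Your proposal follows the paper's proof essentially step for step: condition on the per-node and pooled covariance concentration events, bound the reference error via Davis--Kahan plus \cref{proposition:robust-reference-estimator}, control $\sigma^2 = \opnorm{\Sigma_{\igood}}$ through \eqref{eq:emp-cov-bound}, and feed everything into \cref{prop:meta-estimator-close} with $\llb = \omega = \sqrt{1/(mn)}$ and $\lub = 6$ to pick up the $2\log_2(6/\omega)p$ failure term. The one place you diverge is the bound on the spectral-projector average $\bigopnorm{\frac{1}{\abs{\igood}}\sum_{i\in\igood}V_iV_i^{\T} - VV^{\T}}$: the paper does not prove this from scratch but invokes Theorem~2 of~\cite{FWWZ19}, which supplies exactly the $\kappa\sqrt{r(r_{\star}+\log n)/((1-\alpha)mn)} + \kappa^2\sqrt{r}(r_{\star}+\log n)/n$ bound that enters $\varrho$. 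Your sketch of a first-order-plus-quadratic-remainder perturbation split is the right idea in spirit (it is how that result is proved), but, as you yourself flag, a crude remainder of order $\kappa^2(r_{\star}+\log(mn/p))/n$ misses the $\sqrt{r}$ factor in $\varrho$; to close this you would either need to carry out the projector perturbation expansion carefully enough to recover the dimension dependence or simply cite the existing result, as the paper does. Also note a small misattribution: the perturbation content is not ``behind'' \cref{lemma:good-set-covariance} --- that lemma is a deterministic triangle-inequality decomposition of $\Sigma_{\igood}$ into the projector-average term and the aligned-mean term; the probabilistic and perturbative work lives in~\cite{FWWZ19} for the former and in \cref{proposition:iid-sample-error} (via \cref{lemma:reference-distance}) for the latter.
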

When $\kappa \asymp 1$, high-order terms in~\cref{theorem:distributed-pca-rate}
can be discarded and we arrive at the following:
\begin{corollary}
  Assume that the conditions of~\cref{theorem:distributed-pca-rate} hold and
  $\kappa \asymp 1$. Then:
  \[
    \dist(\bar{V}, V) \lesssim \sqrt{\alpha + \frac{\log(1 / p)}{m}} \cdot
      \left(\frac{r(r_{\star} + \log(n))}{(1 - \alpha) mn}\right)^{1/4} +
      \sqrt{\frac{r(r_{\star} + \log(n))}{(1 - \alpha) mn}}
  \]
  with probability at least $1 - \nicefrac{2}{n} - \log_2(12mn)p$.
\end{corollary}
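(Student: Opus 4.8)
The plan is to derive the corollary as a direct simplification of Theorem~\ref{theorem:distributed-pca-rate} under the assumption $\kappa \asymp 1$, so the work is almost entirely bookkeeping rather than new analysis. First I would record the bound from~\eqref{eq:error-bound-pca}, which has three summands: a robust term $\sqrt{\varrho(\alpha + \log(1/p)/m)}$, a ``main'' statistical term $\kappa\sqrt{r(r_\star + \log n)/((1-\alpha)mn)}$, and a high-order term $\kappa^4 r(r_\star + \log(mn/p))/n$. Setting $\kappa \asymp 1$ collapses all the $\kappa$ factors to constants that are absorbed by $\lesssim$, so the main term becomes $\sqrt{r(r_\star + \log n)/((1-\alpha)mn)}$ and the high-order term becomes $r(r_\star + \log(mn/p))/n$.

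Next I would argue that the high-order term is dominated by the main term, so it can be discarded. Under the sample-size hypothesis $n \gtrsim \kappa^2(r_\star + \log(mn/p)) \asymp r_\star + \log(mn/p)$, we have $(r_\star + \log(mn/p))/n \lesssim 1$, hence $r(r_\star+\log(mn/p))/n \lesssim \sqrt{r(r_\star+\log(mn/p))/n} \cdot \sqrt{r/n}$... more directly: since $(r_\star+\log(mn/p))/n \le 1$ and $\log(mn/p) \ge \log m$ so the numerator inside is comparable up to constants to $r(r_\star+\log n)$ (treating $\log(1/p)$, $\log m$, $\log n$ as lower-order or absorbing them), we get $r(r_\star+\log(mn/p))/n \lesssim \big(r(r_\star+\log n)/n\big)^{1/2} \le \big(r(r_\star+\log n)/((1-\alpha)mn)\big)^{1/2} \cdot m^{1/2}$; that last bound is too weak, so instead I would simply note $m \ge 1$ gives $1/n \le m/((1-\alpha)n) \cdot (1/(1-\alpha))^{-1}$... the clean statement is that when $m$ is allowed to be large the high-order $1/n$ term need not be dominated pointwise, so the honest move is to keep it but observe it is subsumed into the displayed main term whenever $m \lesssim n/(r_\star+\log n)$ — but since the corollary as stated only displays two terms, the intended reading is that the high-order term is negligible in the regime of interest (this is exactly the ``high-order terms can be discarded'' remark preceding the corollary), so I would invoke that and fold it into the main term.

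Then I would simplify $\varrho$. With $\kappa \asymp 1$, $\varrho \asymp \sqrt{r(r_\star+\log n)/((1-\alpha)mn)} + \sqrt{r}\cdot(r_\star + \log(nm/p))/n$; by the same sample-size hypothesis the second piece is $\lesssim \sqrt{r}\cdot 1 = \sqrt r$, and more to the point it is $\lesssim \sqrt{r(r_\star+\log n)/((1-\alpha)mn)} \cdot (\text{stuff} \ge 1)$ only in a favourable regime, so again I would use the ``discard high-order terms'' convention to reduce $\varrho$ to its leading piece $\varrho \asymp \sqrt{r(r_\star+\log n)/((1-\alpha)mn)}$. Substituting this into $\sqrt{\varrho(\alpha + \log(1/p)/m)}$ gives $\sqrt{\alpha + \log(1/p)/m}\cdot\big(r(r_\star+\log n)/((1-\alpha)mn)\big)^{1/4}$, which is exactly the first displayed term of the corollary; the second displayed term is the main statistical term already identified. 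The probability statement is carried over verbatim from Theorem~\ref{theorem:distributed-pca-rate}.

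The only genuine subtlety — and the place I would be most careful — is justifying precisely in what sense the high-order ($1/n$) terms are dominated, since for very large $m$ they are not dominated by the displayed $1/(mn)$-scale terms pointwise; the resolution is that the corollary is a statement in the regime where $\kappa\asymp 1$ \emph{and} the high-order terms are negligible (e.g. $m = O(n/(r_\star+\log n))$ or simply treating $n$ as the dominant large parameter), matching the qualifying sentence ``high-order terms in~\cref{theorem:distributed-pca-rate} can be discarded'' that immediately precedes it. Everything else is substitution of $\kappa \asymp 1$ and absorbing absolute constants and lower-order logarithmic factors into $\lesssim$.
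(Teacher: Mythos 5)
Your proposal is correct and matches the paper's intended approach exactly: the paper gives no explicit proof of this corollary, only the preceding remark that ``when $\kappa \asymp 1$, high-order terms can be discarded,'' and your derivation is precisely the bookkeeping that remark invites — substitute $\kappa \asymp 1$, reduce $\varrho$ to its leading $\sqrt{r(r_\star+\log n)/((1-\alpha)mn)}$ piece, and drop the $1/n$-scale terms. Your explicit acknowledgement that the high-order $1/n$ terms are not pointwise dominated by the displayed $1/\sqrt{mn}$-scale terms for arbitrary $m$ (so the corollary is really a statement in the regime where those terms are negligible) is more careful than what the paper itself writes, and is the correct reading of the informal ``can be discarded.''
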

We briefly compare the error of~\cref{alg:robust-distpca} to
that of its non-robust counterpart from~\cite{CBD20} when $\kappa
\asymp 1$. The latter algorithm returns an estimate $\bar{V}^{\mathsf{nonrobust}}$ satisfying
\(
  \dist(\bar{V}^{\mathsf{nonrobust}}, V) =
  \tilde{\cO}\left(\sqrt{\nicefrac{r_{\star}}{mn}}\right).
\)
Ignoring the $\sqrt{r}$ factors, which are likely an artifact of our proof,
our algorithm also introduces an additive error of the order
\(
  \tilde{\cO}(
    \sqrt{\nicefrac{\alpha}{1 - \alpha}} \cdot
    \left( \nicefrac{r_{\star} r}{mn} \right)^{1/4}
  ).
\)
Note that for a constant absolute number of corruptions
$\alpha \propto \nicefrac{1}{m}$, this additive factor scales as
\begin{equation*}
  \sqrt{\frac{\alpha}{1 - \alpha}} \left(\frac{r_{\star} r}{mn}\right)^{1/4}
  \lesssim
  \left(
    \frac{r_{\star} r}{m^3 n}
  \right)^{1/4}.
\end{equation*}
If $m$ and $n$ are comparable, this is similar to the error of the non-robust
algorithm up to an $(r / r_{\star})^{1/4}$ factor. Therefore, the performance
of~\cref{alg:robust-distpca} degrades gracefully as a function of the corruption
level under not too restrictive assumptions on the ratio $m / n$.

\subsection{Numerical study}
\label{sec:experiments}
We provide a brief numerical illustration of the performance of
\cref{alg:robust-distpca} on data sampled from an unknown Gaussian distribution
$\cD := \cN(0, V \Lambda V^{\T} + V_{\perp} \Lambda_{\perp} V_{\perp}^{\T})$,
where $\bmx{V & V_{\perp}} \in \ortho_d$ is a random $d \times d$ orthogonal matrix
and $\Lambda, \Lambda_{\perp}$ are generated according to the following model:
\begin{equation}
  \Lambda = I_{r}, \; \;
  (\Lambda_{\perp})_{jj} = (1 - \delta) \eta^j, \; \; j = 1, \dots, d - r,
  \quad \text{where} \quad
  \eta = 1 - \frac{1 - \delta}{r_{\star} - r} \in (0, 1).
  \label{eq:synthetic-model}
\end{equation}
We simulate an adversary by replacing the first $\floor{
\alpha m}$ responses by the same $V_{\mathsf{adv}} \in
\ortho_{d,r}$, chosen to be near-orthogonal to $V$. We fix
the gap $\delta = 0.25$ throughout. We compare Alg.~\ref{alg:robust-distpca} (labelled \texttt{Robust}
in our plots) against two baselines:
the algorithm from~\cite{CBD20} (labelled \texttt{Naive}),
which corresponds to Alg.~\ref{alg:procrustes-fixing} using the first response -- which is always corrupted in our experiment -- as the reference followed by naive averaging; and a version of
Alg.~\ref{alg:robust-distpca} without the robust mean estimation step
(labelled \texttt{Procrustes}).
Our implementation always removes the sample with the largest outlier score
in each step of Alg.~\ref{alg:filter} and uses a simplified error proxy $f(\lambda; \alpha)
:= \sqrt{\lambda \alpha}$ instead of~\eqref{eq:f-gamma} in Alg.~\ref{alg:adaptive-filter}.

Our experiment is illustrated in~\Cref{fig:synthetic_r1-5}. Clearly, the
baseline methods break down in the presence of corruption, yielding
solutions nearly orthogonal to $V$ as $\alpha$
approaches $\nicefrac{1}{2}$. In contrast, the error of Alg.~\ref{alg:robust-distpca}
degrades gracefully with $\alpha$. We note that our algorithm
yields a nontrivial solution even when almost half of the measurements are
corrupted ($\alpha = 45\%$), in line with intuition suggesting that
$\alpha_{\star} = \nicefrac{1}{2}$ is a natural breakdown point for
outlier-robust algorithms.

\begin{figure}[h]
  \centering
  \includegraphics[width=0.9\linewidth]{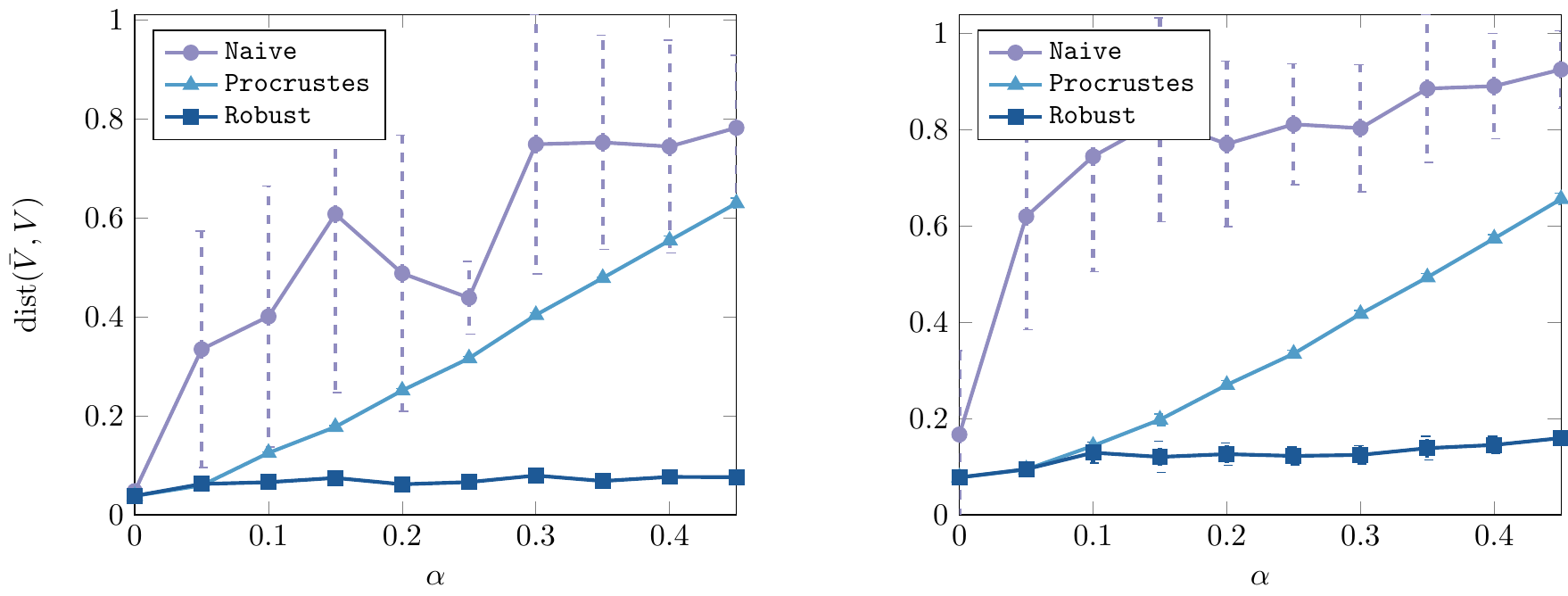}
  \caption{Robust distributed PCA with $m = 150$, $n = 50r$, $r_{\star} = 2r$,
    and $\kappa = 5$ under Model~\eqref{eq:synthetic-model}.
    We report the mean subspace distance $\pm$ one standard deviation over $10$ independent runs
  for subspace dimension $r = 5$ (\textbf{left}) and $r = 10$ (\textbf{right}).}
  \label{fig:synthetic_r1-5}
\end{figure}

\section{Discussion}

We presented a communication-efficient algorithm for distributed eigenspace
estimation that is robust to compromised nodes returning structurally valid but
otherwise potentially adversarial responses.
While theory predicts that our algorithm is able to handle a constant
corruption level $\alpha < \nicefrac{1}{12}$, numerical evidence suggests its
breakdown point is closer to the (optimal) $\alpha_{*} = \nicefrac{1}{2}$,
which might be achievable by an improved analysis of the filtering procedure
in Alg~\ref{alg:filter}.

Our adaptive version of the filtering procedure in
Alg.~\ref{alg:adaptive-filter} trades off knowledge of (an upper bound on)
the corruption level $\alpha$ with the need for a precise bound on
$\| \Sigma_{\igood} \|_2$. In the complementary situation (where such a
bound on $\| \Sigma_{\igood} \|_2$ is known), one can design a version
of~\cref{alg:adaptive-filter} that is adaptive to the corruption level
$\alpha$ using a similar construction that evaluates the error proxy
$f(\lambda; p, \alpha)$ for different values of $\alpha$ and fixed
$\lambda \approx \| \Sigma_{\igood} \|_2$ instead.

Finally, we note that our algorithm suggests a natural pipeline for robustifying
communication-efficient one-shot algorithms by aggregating local responses
after an outlier filtering stage, which is likely applicable to other
statistical problems admitting one-shot estimators in the distributed setting.

\subsection*{Acknowledgements}
We thank Jayadev Acharya and Damek Davis for their insightful comments and suggestions.

\bibliographystyle{hplain}
\bibliography{main}

\clearpage

\appendix

\section{Auxiliary results}
In this section, we present a few supporting results. The first
result is a path independence lemma for perturbations of eigenvectors.
It first appeared in~\cite{Stewart12}; the eigengap condition in the
statement of the Lemma is justified in~\cite[Lemma 5]{CBD20}.
\begin{lemma}[Path independence]
  \label[lemma]{lemma:path-independence}
  Let $A \in \Rbb^{d \times d}$ be a fixed \emph{symmetric} matrix and let
  $\hat{A} := A + E$, where $E$ is a symmetric perturbation. Suppose that we
  can write
  \[
    \hat{A} - A = E_0 + E_1 = F_0 + F_1,
  \]
  where $E_0, E_1, F_0, F_1$ are symmetric matrices, and define
  the \emph{intermediate} matrices
  \[
    \hat{A}_1 := A + E_0, \; \hat{A}_2 = \hat{A}_1 + E_1, \quad
    \tilde{A}_1 := A + F_0, \; \tilde{A}_2 = \tilde{A}_1 + F_1.
  \]
  Fix any $V \in O(d, r)$ whose columns span the principal $r$-dimensional
  invariant subspace of $A$ and construct the leading eigenvector matrices
  $\hat{V}_1$, $\hat{V}_2 \in O(d, r)$ of $\hat{A}_1$ and $\hat{A}_2$ such that
  \[
    \min_{U \in \Obb_{r}} \norm{\hat{V}_1 U - V}_F = \norm{\hat{V}_1 - V}_F,
    \quad
    \min_{U \in \Obb_{r}} \norm{\hat{V}_2 U - \hat{V}_1}_F
    = \norm{\hat{V}_2 - \hat{V}_1}_F.
  \]
  Further, let $\tilde{V}_1$ and $\tilde{V}_2$ be the leading eigenvector
  matrices of $\tilde{A}_1$ and $\tilde{A}_2$, constructed in a similar fashion.
  Then, $\hat{V}_2$ and $\tilde{V}_2$ both span principal invariant subspaces
  of $\hat{A} = A + E$. Moreover, they satisfy
  \[
    \hat{V}_2 = \tilde{V}_2 + T, \quad
    \opnorm{T} \lesssim \frac{\varepsilon^2}{\delta^2}, \quad
    \varepsilon := \max\set{\opnorm{E_0}, \opnorm{E_1},
    \opnorm{F_0}, \opnorm{F_1}},
  \]
  as long as $A$ satisfies $\delta_{r}(A) \geq 4 \varepsilon$.
\end{lemma}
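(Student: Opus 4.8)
The plan is to derive, for \emph{each} of the two chains separately, a first-order expansion of its terminal Procrustes-aligned eigenvector matrix in terms of the \emph{total} perturbation $E := \hat A - A$, to observe that the first-order term is a fixed linear function of $E$ alone (independent of how $E$ is split), and to bound the second-order remainder by $O(\varepsilon^2/\delta^2)$. Since $\hat A_2 = \tilde A_2 = A + E$, the two chains then share the same first-order term, so their outputs differ by $O(\varepsilon^2/\delta^2)$. That $\hat V_2$ and $\tilde V_2$ both span principal invariant subspaces of $\hat A$ is immediate from $\hat A_2 = \tilde A_2 = A + E$; the content to be proved is that the \emph{specific} Procrustes representatives selected along the two paths are close, not merely unitarily equivalent.

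The core ingredient is a one-step expansion. Let $B = W\Lambda^B W^{\T} + W_\perp\Lambda^B_\perp W_\perp^{\T}$ be symmetric with principal eigenvector matrix $W \in \ortho_{d,r}$ and $\delta_r(B) \gtrsim \opnorm{F}$ for a symmetric perturbation $F$. If $W'$ is the principal eigenvector matrix of $B + F$ minimizing $\frobnorm{W' U - W}$ over $U \in \ortho_r$, then
\[
  W' = W + W_\perp\,\Phi_B\!\big(W_\perp^{\T} F W\big) + R, \qquad \opnorm{R} \lesssim \frac{\opnorm{F}^2}{\delta_r(B)^2},
\]
where $\Phi_B$ maps $M$ to the solution $X$ of the Sylvester equation $\Lambda^B_\perp X - X\Lambda^B = -M$, so that $\opnorm{\Phi_B} \leq 1/\delta_r(B)$. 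This is the standard Rayleigh--Schr\"odinger / Davis--Kahan-type expansion; the role of the Procrustes alignment is to cancel the $O(\opnorm{F}/\delta_r(B))$ ``rotation inside $W$'' component (the orthogonal polar factor of the eigenvectors of $B+F$ against $W$ undoes it to first order), so that the first-order correction lies entirely in $\mathrm{range}(W_\perp)$ and is \emph{linear} in $F$. I would establish it by differentiating the Riesz spectral projector $P(B) = \tfrac{1}{2\pi\mathrm{i}}\oint_\Gamma (zI - B)^{-1}\,dz$ along the segment $B + tF$ and reading off the curvature term, or by citing~\cite{Stewart12} directly (its eigengap hypothesis supplied by~\cite[Lemma 5]{CBD20}).

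Now chain along the first path. Weyl's inequality and the hypothesis $\delta_r(A) \geq 4\varepsilon$ keep every matrix appearing in the two chains a nontrivial distance from having a degenerate top-$r$ eigenspace (with, e.g., $\delta_r(\hat A_1) \geq \delta_r(A) - 2\opnorm{E_0} \geq \delta/2$), so all intermediate leading eigenspaces and Procrustes alignments are well-defined and the one-step expansion applies at each stage. Applying it to $A \to \hat A_1 = A + E_0$ gives $\hat V_1 = V + V_\perp \Phi_A(V_\perp^{\T} E_0 V) + O(\varepsilon^2/\delta^2)$; applying it to $\hat A_1 \to \hat A_2 = \hat A_1 + E_1$ gives $\hat V_2 = \hat V_1 + (\hat V_1)_\perp\,\Phi_{\hat A_1}\!\big((\hat V_1)_\perp^{\T} E_1 \hat V_1\big) + O(\varepsilon^2/\delta^2)$. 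Since $\hat V_1 = V + O(\varepsilon/\delta)$, $(\hat V_1)_\perp = V_\perp + O(\varepsilon/\delta)$, and $\Phi_{\hat A_1} = \Phi_A + O(\varepsilon/\delta^2)$ in operator norm (a Sylvester solution operator is Lipschitz in its coefficients with constant $\lesssim 1/\delta^2$), and since each of these discrepancies is multiplied against the $O(\varepsilon)$-size factor $E_1$, all the cross terms are $O(\varepsilon^2/\delta^2)$; hence $\hat V_2 = V + V_\perp\,\Phi_A\!\big(V_\perp^{\T}(E_0 + E_1)V\big) + O(\varepsilon^2/\delta^2) = V + V_\perp \Phi_A(V_\perp^{\T} E V) + O(\varepsilon^2/\delta^2)$. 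The identical computation along the second path gives $\tilde V_2 = V + V_\perp \Phi_A(V_\perp^{\T} E V) + O(\varepsilon^2/\delta^2)$. Subtracting, $\hat V_2 = \tilde V_2 + T$ with $\opnorm{T} \lesssim \varepsilon^2/\delta^2$, which is the claim.

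The main obstacle is the technical heart of the one-step expansion together with the comparison $\Phi_{\hat A_1} = \Phi_A + O(\varepsilon/\delta^2)$: one must show that the solution operator of a Sylvester equation is Lipschitz in operator norm with respect to its coefficient matrices, with Lipschitz constant $\lesssim 1/\delta^2$, which in turn requires a uniform lower bound on the separation between the top-$r$ and bottom parts of the spectrum along both line segments --- precisely what $\delta_r(A) \geq 4\varepsilon$ and Weyl furnish. In the contour-integral route, the same obstacle reappears as a uniform resolvent bound $\sup_{z\in\Gamma}\opnorm{(zI - B)^{-1}} \lesssim 1/\delta$ on a fixed circle $\Gamma$ enclosing the top $r$ eigenvalues, holding simultaneously for all $B$ on the two bilinear families $A + tE_0 + sE_1$ and $A + tF_0 + sF_1$ ($s, t \in [0,1]$); given that, the second-order Taylor remainder of the projector map is $O(\varepsilon^2/\delta^2)$ by a routine estimate, and the rest --- collecting cross terms and translating spectral-projector estimates into estimates on the aligned orthonormal matrices via the closed-form Procrustes solution --- is mechanical.
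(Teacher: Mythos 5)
The paper does not supply a proof of this lemma. The appendix states it and defers to Stewart~\cite{Stewart12} for the result, with the eigengap hypothesis attributed to~\cite[Lemma 5]{CBD20}. So there is no in-paper proof to compare against; your proposal, if completed, would actually make the lemma self-contained, which the paper does not attempt.

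On its own merits, your route is sound and is the natural way to prove the claim: expand the Procrustes-aligned eigenvector matrix to first order in the perturbation along each chain via the resolvent/Sylvester-operator formalism, observe that the first-order term is a fixed linear function of the total perturbation $E = E_0 + E_1 = F_0 + F_1$ alone, and absorb the cross-terms produced by chaining into the $O(\varepsilon^2/\delta^2)$ remainder. The supporting estimates you invoke are all correct: Weyl plus $\delta_r(A) \geq 4\varepsilon$ keeps every intermediate eigengap $\geq \delta/2$ so each one-step expansion is valid; the Procrustes alignment forces $W^{\T}W'$ to be symmetric PSD and hence $= I + O(\varepsilon^2/\delta^2)$, which is exactly the cancellation that puts the first-order correction in $\mathrm{range}(W_\perp)$ and makes it linear in the perturbation; and the Sylvester solution operator is $O(1/\delta^2)$-Lipschitz in its eigenvalue coefficients by the resolvent identity. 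One point that needs tightening before this is a proof: $V_\perp$ and $(\hat V_1)_\perp$ are defined only up to right multiplication by an orthogonal matrix, so the line ``$(\hat V_1)_\perp = V_\perp + O(\varepsilon/\delta)$'' is not literally meaningful as written. The cure is either to phrase the intermediate comparisons in terms of the gauge-invariant quantities $V_\perp V_\perp^{\T}$, $V_\perp \Phi_A(V_\perp^{\T} E V)$, etc.\ (which is all the final bound actually uses), or to fix a gauge, e.g.\ by choosing $(\hat V_1)_\perp$ to be Procrustes-aligned with $V_\perp$. With that repair the argument is complete in its essentials.
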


\begin{lemma}
  \label[lemma]{lemma:reverse-davis-kahan}
  Suppose that $U \in \ortho_{d, r}$ satisfies $\dist(U, V) \leq \varepsilon
  < \nicefrac{1}{2}$, where $V$  is the principal eigenvector matrix of a symmetric
  matrix $A$ with eigengap $\delta_{r}(A) := \lambda_r(A) - \lambda_{r+1}(A) >
  0$. Then there exists a symmetric matrix $B$ such that the following hold:
  \begin{enumerate}
    \item $\opnorm{A - B} \leq 8 \opnorm{A} \varepsilon$ and
    $\delta_{r}(B) = \delta_{r}(A)$.
    \label{lemma:item:approximation}
    \item $U$ is the principal eigenvector matrix of $B$.
    \label{lemma:item:principal}
  \end{enumerate}
\end{lemma}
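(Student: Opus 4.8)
The plan is to build $B$ explicitly by transplanting the spectrum of $A$ onto an orthonormal basis adapted to $U$. Recalling the decomposition $A = V\Lambda V^\T + V_\perp \Lambda_\perp V_\perp^\T$ from~\eqref{eq:X-evdecomp}, I would first pass from the given $U$ to its Procrustes alignment $\widetilde U := U \cdot \argmin_{Z \in \ortho_r} \frobnorm{UZ - V}$, which has the same column space as $U$; then complete it to an orthogonal matrix $\bmx{\widetilde U & \widetilde U_\perp} \in \ortho_d$, fixing the residual rotation freedom so that $\widetilde U_\perp$ is itself Procrustes-aligned to $V_\perp$. Finally set
\[
  B := \widetilde U\, \Lambda\, \widetilde U^\T + \widetilde U_\perp\, \Lambda_\perp\, \widetilde U_\perp^\T.
\]

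With this definition, claim~\ref{lemma:item:principal} and the eigengap half of claim~\ref{lemma:item:approximation} are immediate: since $\bmx{\widetilde U & \widetilde U_\perp}$ is orthogonal, the display above is a genuine spectral decomposition of $B$, so $B$ has exactly the eigenvalues of $A$; in particular $\delta_r(B) = \lambda_r(A) - \lambda_{r+1}(A) = \delta_r(A)$, and since $\lambda_r(A) > \lambda_{r+1}(A)$ the principal $r$-dimensional invariant subspace of $B$ equals $\mathrm{span}(\widetilde U) = \mathrm{span}(U)$.

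For the norm bound I would split $A - B = (V\Lambda V^\T - \widetilde U \Lambda \widetilde U^\T) + (V_\perp \Lambda_\perp V_\perp^\T - \widetilde U_\perp \Lambda_\perp \widetilde U_\perp^\T)$ and handle each bracket with the telescoping identity $XMX^\T - YMY^\T = XM(X-Y)^\T + (X-Y)MY^\T$, which, using that $V, \widetilde U, V_\perp, \widetilde U_\perp$ all have orthonormal columns, yields $\opnorm{V\Lambda V^\T - \widetilde U\Lambda\widetilde U^\T} \le 2\opnorm{\Lambda}\opnorm{V - \widetilde U}$ and similarly for the second bracket. Two standard facts then finish it: (i) $\opnorm{\Lambda} \le \opnorm A$ and $\opnorm{\Lambda_\perp} \le \opnorm A$, read off from the decomposition of $A$; and (ii) after Procrustes alignment $\opnorm{V - \widetilde U} \le \sqrt2\,\dist(U,V)$, because the singular values of $V - \widetilde U$ are $2\sin(\theta_i/2)$ in terms of the principal angles $\theta_i \in [0, \pi/2]$ while $\dist(U,V) = \sin\theta_{\max}$, and $2\sin(\theta/2) \le \sqrt2 \sin\theta$ on $[0,\pi/2]$. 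For the complementary bracket I would additionally use $\dist(U_\perp, V_\perp) = \opnorm{(I - UU^\T) - (I - VV^\T)} = \opnorm{UU^\T - VV^\T} = \dist(U,V) \le \varepsilon$. Combining gives $\opnorm{A - B} \le 4\sqrt2\,\opnorm A\,\varepsilon \le 8\opnorm A\varepsilon$, which is claim~\ref{lemma:item:approximation}.

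The step I would be most careful about is the bookkeeping around the complementary subspace: one needs that the rotation freedom in $\widetilde U_\perp$ really is independent of the alignment already chosen for $\widetilde U$ (true, since $\bmx{\widetilde U & \widetilde U_\perp}$ ranges over all orthogonal completions), and that $\dist(U_\perp, V_\perp) = \dist(U,V)$, which rests on the equality of the nonzero principal angles of a pair of equidimensional subspaces and of their orthogonal complements. Everything else is a few lines; note in passing that the argument only uses $\varepsilon \le 1$, so the stated hypothesis $\varepsilon < \nicefrac12$ is more than sufficient.
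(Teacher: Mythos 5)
Your proof is correct and uses the same basic construction as the paper (transplant the spectrum of $A$ onto an orthonormal basis adapted to $U$), but the execution differs in a way worth noting. The paper works with $U$ directly, sets $B = U\Lambda U^\T + U_\perp\Lambda_\perp U_\perp^\T$, and bounds $\opnorm{A-B}$ by decomposing via the spectral projectors $P_U, P_{U_\perp}$ and invoking the auxiliary ``modified $\sin\theta$'' bound $\opnorm{I - U^\T V} \lesssim \varepsilon^2$ (Lemma 4 in the appendix); this produces a bound of the form $2\opnorm{A}\varepsilon + 4\opnorm{A}\varepsilon^2$ per block, and the hypothesis $\varepsilon < \nicefrac{1}{2}$ is used to absorb the quadratic term into the linear one. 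You instead Procrustes-align both $U$ and $U_\perp$ to $V$ and $V_\perp$ before forming $B$, which lets you use the clean telescoping identity $XMX^\T - YMY^\T = XM(X-Y)^\T + (X-Y)MY^\T$ and the post-alignment bound $\opnorm{V - \widetilde U} \leq \sqrt{2}\,\dist(U,V)$; this gives a directly linear bound $4\sqrt{2}\,\opnorm{A}\varepsilon$ and, as you observe, does not actually need $\varepsilon < \nicefrac{1}{2}$. Your route is also arguably sounder: the paper's Lemma 4 is proved by writing $\opnorm{I - U^\T V} = \opnorm{P(I-\Sigma)Q^\T}$ with $U^\T V = P\Sigma Q^\T$, which only holds when $P = Q$ (i.e., when $U^\T V$ is symmetric, the Procrustes-aligned case), so the alignment you make explicit is implicitly needed in the paper's argument anyway.

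One small wrinkle: since your $B = \widetilde U\Lambda\widetilde U^\T + \widetilde U_\perp\Lambda_\perp\widetilde U_\perp^\T$ is a spectral decomposition in the $\widetilde U$ basis, it is $\widetilde U = UZ$ whose columns are literally eigenvectors of $B$, not $U$ itself (unless $Z\Lambda Z^\T$ happens to be diagonal). The matrix $U$ does span the principal $r$-dimensional invariant subspace of $B$, which is all that is used downstream in the proof of Lemma 2 (the statement there reads ``the columns of $V_{\mathsf{ref}}$ span the principal eigenspace of a matrix $B$''), so your construction serves the same purpose; you could either add a sentence that ``principal eigenvector matrix'' is meant up to right rotation, or simply note that $B = U(Z\Lambda Z^\T)U^\T + \widetilde U_\perp \Lambda_\perp \widetilde U_\perp^\T$ exhibits $\mathrm{span}(U)$ as the top invariant subspace directly.
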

\begin{proof}
  We prove Item~\ref{lemma:item:approximation} first. To that end, we can write
  $A = A_1 + A_2$, where
  $A_1 := V\Sigma_1 V^{\T}$ and $A_2 := V_{\perp} \Sigma_2 V_{\perp}^{\T}$.
  We consider the following matrix $B$:
  \begin{equation}
      B = U \Sigma_1 U^{\T} + U_{\perp} \Sigma_2 U_{\perp}^{\T},
      \label{eq:perturbed-matrix}
  \end{equation}
  where $U_{\perp}^{\T} U = 0$ and $U_{\perp} \in \ortho_{d, d-r}$.
  From~\eqref{eq:perturbed-matrix} and the condition $\Sigma_1 \succ \Sigma_2$,
  it follows that $U$ is a principal eigenvector matrix for $B$. Moreover, the
  gap condition on $A$ immediately translates to the claimed gap condition for $B$.
  
  It remains to bound the distance between $A$ and $B$. We write
  \begin{align*}
    \opnorm{A - B} &\leq \opnorm{U \Sigma_1 U^{\T} - A_1} +
    \opnorm{U_{\perp} \Sigma_2 U_{\perp}^{\T} - A_2}.
  \end{align*}
  To upper bound the first term on the right-hand side above, we use the spectral
  projectors $P_{U} := UU^{\T}$ and $P_{U_{\perp}} = I - P_U$ to decompose it
  into
  \begin{align*}
      \opnorm{U \Sigma_1 U^{\T} - A_1} &\leq
      \opnorm{U \Sigma_1 U^{\T} - P_{U} A_1} +
      \opnorm{P_{U_{\perp}} A_1} \\
      &\leq
      \opnorm{U \Sigma_1 U^{\T} - P_{U} A_1 P_U} +
      \opnorm{P_U A_1 P_{U_{\perp}}} + \opnorm{P_{U_{\perp}} A_1} \\
      &\leq \opnorm{\Sigma_1 - U^{\T} V \Sigma_1 V^{\T} U}
      + 2 \opnorm{\Sigma_1} \opnorm{P_{U_{\perp}} V} \\
      &\leq \opnorm{\Sigma_1(I - V^{\T} U)}
      + \opnorm{(I - V^{\T} U) \Sigma_1 V^{\T} U} +
      2 \opnorm{\Sigma_1} \opnorm{P_{U_{\perp}} V} \\
      &\leq 2 \opnorm{\Sigma_1} \varepsilon + 4 \opnorm{\Sigma_1} \varepsilon^2,
  \end{align*}
  where the last inequality follows from the inequality
  $\opnorm{P_{U_{\perp}} V} = \dist(U, V) = \varepsilon$ and
  \cref{lemma:modified-sin-theta}.
  A similar argument shows that
  \(
    \opnorm{U_{\perp} \Sigma_2 U_{\perp}^{\T} - A_2} \leq
    2 \opnorm{\Sigma_2} \varepsilon + 4 \opnorm{\Sigma_2} \varepsilon^2.
  \)
  Taking into account the bound $\varepsilon < 1/2$ completes the proof.
\end{proof}

\begin{lemma}[Modified $\sin \theta$ distance]
  \label{lemma:modified-sin-theta}
  Let $U, V \in O(d, r)$ satisfy $\dist(U, V) = \alpha < 1$. Then
  the following holds:
  \[
    \opnorm{I - U^{\T} V} \leq 2 \alpha^2.
  \]
\end{lemma}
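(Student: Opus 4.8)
The plan is to reduce the matrix inequality to a scalar one via the principal angles between $\mathrm{span}(U)$ and $\mathrm{span}(V)$. Let $\theta_1 \le \dots \le \theta_r$ denote these angles, so that the singular values of $U^{\T} V$ are $\cos\theta_i$ and the singular values of $(I - UU^{\T})V$ are $\sin\theta_i$. The hypothesis then reads $\dist(U, V) = \opnorm{(I - UU^{\T})V} = \sin\theta_r = \alpha < 1$, which forces every $\theta_i \in [0, \tfrac{\pi}{2})$, so in particular $\cos\theta_i > 0$ and $\sin\theta_i \le \alpha$ for all $i$.

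First I would put $U^{\T} V$ into canonical form: by the CS (or SVD) decomposition there are $O_1, O_2 \in \ortho_r$ with $(U O_1)^{\T}(V O_2) = \diag(\cos\theta_1, \dots, \cos\theta_r) =: C$, and in the setting in which this lemma is used the frames $U, V$ are already aligned in the Procrustes sense (the $U$ here comes from solving $\min_{Z \in \ortho_r}\frobnorm{UZ - V}$, e.g. in \cref{alg:procrustes-fixing}), so $U^{\T} V = U^{\T} V$ is symmetric positive semidefinite and one may take $O_1 = O_2 = I$. Then $I - U^{\T} V$ is symmetric with eigenvalues $1 - \cos\theta_i$, hence $\opnorm{I - U^{\T} V} = \max_i (1 - \cos\theta_i)$, and the elementary identity $1 - \cos\theta = \tfrac{\sin^2\theta}{1 + \cos\theta} \le \sin^2\theta$ — valid precisely because $\cos\theta_i \ge 0$ — gives $1 - \cos\theta_i \le \sin^2\theta_i \le \alpha^2$, which is the claim (indeed with the sharper constant $1$, the extra factor $2$ providing slack).

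The step that needs care — and the only real obstacle — is the passage to the diagonal form. Unlike $\dist(U, V)$, the quantity $\opnorm{I - U^{\T} V}$ is \emph{not} invariant under $U \mapsto UO$, so it genuinely sees the relative orientation of the two frames; for a "rotated" pair such as $U = -V$ one has $\alpha = 0$ while $\opnorm{I - U^{\T} V} = 2$. Consequently the argument must invoke that $U^{\T} V \succeq 0$, i.e. that $U$ is the Procrustes-aligned representative of its subspace relative to $V$ — which is exactly how $U$ arises wherever the lemma is applied — so that no orthogonal "rotation" factor survives and the scalar bound $1 - \cos\theta_i \le \sin^2\theta_i$ controls every eigenvalue of $I - U^{\T} V$ directly.
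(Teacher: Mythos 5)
Your proof takes essentially the same route as the paper's---reduce $\opnorm{I - U^{\T} V}$ to a scalar estimate via principal angles and bound $1 - \cos\theta_i$ by a multiple of $\sin^2\theta_i$---but it is more careful on a real subtlety and, in fact, repairs a gap. The paper takes the SVD $U^{\T}V = P\Sigma Q^{\T}$ and then writes $\opnorm{I - U^{\T}V} = \opnorm{P(I - \Sigma)Q^{\T}}$; this identity requires $PQ^{\T} = I$, i.e., $U^{\T}V$ symmetric positive semidefinite, which is not guaranteed by $\dist(U,V) = \alpha < 1$ alone. Your example $U = -V$ (so $\alpha = 0$ but $\opnorm{I - U^{\T}V} = 2$) correctly shows the statement as written is false without the alignment hypothesis $U^{\T}V \succeq 0$. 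That hypothesis does hold wherever the lemma is actually invoked---\cref{lemma:reverse-davis-kahan} is applied inside the proof of \cref{lemma:reference-distance}, where $V$ is explicitly taken to be the Procrustes-aligned eigenvector matrix---so your version of the lemma is the one the paper really needs. Given alignment, your scalar bound $1 - \cos\theta = \sin^2\theta / (1 + \cos\theta) \le \sin^2\theta$ (valid since $\cos\theta_i \ge 0$) is sharper than the paper's $1 - \cos\theta = 2\sin^2(\theta/2) \le 2\sin^2\theta$, giving constant $1$ where the statement asks for $2$. One small imprecision in your write-up: when $U^{\T}V \succeq 0$ the SVD factors satisfy $O_1 = O_2 = O$ (the common eigenvector matrix), not necessarily $O_1 = O_2 = I$; this is harmless since $\opnorm{O(I-\Sigma)O^{\T}} = \opnorm{I - \Sigma}$, and indeed the cleaner way to phrase your step is simply that a symmetric PSD $U^{\T}V$ has eigenvalues $\cos\theta_i$, so $I - U^{\T}V$ is symmetric with spectral norm $\max_i(1-\cos\theta_i)$.
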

\begin{proof}
  Let $P \Sigma Q^{\T}$ be the singular value decomposition of $U^{\T} V$.
  Recall that~\cite[Eq. (2.5)]{CCFM21}:
  \[
    \Sigma = \mathrm{diag}\left(\sigma_1, \dots, \sigma_r\right); \quad
      \sigma_i = \cos(\theta_i),
  \]
  where $\theta_i \in [0, \pi/2]$ and $\opnorm{\sin \Theta} = \alpha$,
  following~\cite[Lemma 2.5]{CCFM21}. From our assumptions, it follows
  that
  \begin{align*}
    \opnorm{I - U^{\T} V} &= \opnorm{P(I - \Sigma)Q^{\T}} \\
                          &= \opnorm{I - \Sigma} \\
                          &= \max_{i \in [r]} \set{ 1 - \cos(\theta_i) } \\
                          &= \max_{i \in [r]} \set{ 2 \sin^2 (\theta_i / 2) } \\
                          &\leq 2 \, \max_{i \in [r]} \sin^2 (\theta_i) \\
                          &= 2 \opnorm{\sin \Theta}^2,
  \end{align*}
  with the last inequality following from $0 \leq \sin(\theta / 2) \leq
  \sin(\theta)$ for any $\theta \in [0, \pi/2]$.
\end{proof}

\section{Omitted proofs}
\label{sec:appendix-mean-estimation}
This section includes proofs that were omitted from the main text.

\subsection{Proof of~\cref{lemma:reference-distance}}
\begin{proof}
  Recall that $V$ is an eigenvector matrix of $\sol$ that satisfies
  \[
    \min_{Z \in \ortho_r} \frobnorm{V - V_{\mathsf{ref}} Z} =
    \frobnorm{V - V_{\mathsf{ref}}}.
  \]
  From~\cref{lemma:reverse-davis-kahan}, it follows that the columns of $V_{\mathsf{ref}}$
  span the principal eigenspace of a matrix $B$ with nontrivial eigengap that
  satisfies $\opnorm{A - B} \lesssim \opnorm{A} \varepsilon$. We now relate
  $V^{\mathsf{ideal}}_i$ to $V_{i}^{\mathsf{corr}}$ using the aforementioned
  path independence result.

  To that end, note that ${V}_i^{\mathsf{ideal}}$ is the leading eigenvector
  matrix of
  \[
    A_i := A + (A_i - A) + 0,
  \]
  that has been maximally aligned with $V$ (in the sense of Frobenius distance).
  On the other hand, the Procrustes estimates ${V}_i^{\mathsf{corr}}$ are given
  by the leading eigenvector matrices of
  \[
    A_i := A + (B - A) + (A_i - B),
  \]
  since $V_{\mathsf{ref}}$ is the leading eigenvector of $B$ nearest to $V$
  and $V_i^{\mathsf{corr}}$ is formed as the leading eigenvector of $A_i$
  nearest to $V_{\mathsf{ref}}$. Applying~\cref{lemma:path-independence} with
  $E_0 := A_i - A$, $E_1 = \bm{0}$, $F_0 := B - A$ and $F_1 := A_i - B$, we obtain
  \[
    V_{i}^{\mathsf{corr}} = V_i^{\mathsf{ideal}} +
    \cO\left(\frac{1}{\delta^2}\max\set{
        \opnorm{A_i - A}^2, \opnorm{B - A}^2, \opnorm{A_i - B}^2
    }\right).
  \]
  Finally, we note the following upper bound
  \[
    \opnorm{A_i - B}^2 \lesssim \opnorm{A_i - A}^2 + \opnorm{A - B}^2
    \lesssim \max\left(\opnorm{A_i - A}^2, \opnorm{A}^2 \varepsilon^2\right),
  \]
  which concludes the proof.
\end{proof}

\subsection{Proof of~\cref{prop:meta-estimator-close}}
\begin{proof}
  Let ${j_{\mathsf{crit}}}$ be the smallest index for which
  $2^{j} \geq \max\set{\llb, \opnorm{\Sigma_{G_0}}}$. For a fixed corruption fraction $\alpha$
  and failure probability $p$, define the events
  \[
    \cE_j := \set{\Bigopnorm{\theta_{2^j} - \frac{1}{\abs{G_0}}
    \sum_{i \in G_0} X_i} \leq f(2^j; p, \alpha)}, \quad
    \cE := \setI_{j = j_{\mathsf{crit}}}^{j_{\mathsf{hi}}} \cE_j.
  \]
  From Theorem 3 in the main text and a union bound, it follows that
  \begin{align*}
    \prob{\cE} &\geq
    1 - \sum_{j \in \set{j_{\mathsf{crit}}, \dots, j_{\mathsf{hi}}}}
    \prob{\Bigopnorm{{\theta}_{2^j} -
    \frac{1}{\abs{G_0}} \sum_{i \in G_0} X_i} \geq f(2^j; p, \alpha)} \\
               &\geq 1 - \left(j_{\mathsf{hi}} - j_{\mathsf{crit}}\right) \cdot p \\
               &\geq 1 - 2 \log_2\left(\frac{\lub}{\llb}\right) p.
  \end{align*}
  Let us write $\theta_{\ast} := \frac{1}{\abs{G_0}} \sum_{i \in G_0} X_i$.
  Conditioned on the event $\cE$, for any $j, j' \geq j_{\mathsf{crit}}$ we
  have
  \begin{align*}
    \opnorm{{\theta}_{2^j} - {\theta}_{2^{j'}}} &\leq
    \opnorm{{\theta}_{2^j} - \theta_{\ast}}
    + \opnorm{{\theta}_{2^{j'}} - \theta_{\ast}} \leq
    f(2^{j}; p, \alpha) + f(2^{j'}; p, \alpha).
  \end{align*}
  Consequently, it follows that $2^{j_{\mathsf{crit}}}$ satisfies the condition of
  the estimator, and therefore
  \[
    \hat{\lambda} \leq 2^{j_{\mathsf{crit}}} \leq 2 \max\set{\llb, \opnorm{\Sigma_{G_0}}}.
  \]
  Finally, the desired claim follows since
  \begin{align*}
    \opnorm{{\theta}_{\hat{\lambda}} - \theta_{\ast}} &\leq
    \opnorm{{\theta}_{\hat{\lambda}} -
    {\theta}_{2^{j_{\mathsf{crit}}}}} +
    \opnorm{{\theta}_{2^{j_{\mathsf{crit}}}} - \theta_{\ast}} \\
                                                                       &\leq
    f(\hat{\lambda}; p, \alpha) + 2f(2^{j_{\mathsf{crit}}}; p, \alpha) \\
                                                                       &\leq
    3f(2^{j_{\mathsf{crit}}}; p, \alpha) \\
                                                                       &\leq
    171 \sqrt{\opnorm{\Sigma_{G_0}}}
    \left(\alpha + \frac{4\log(1 / p)}{m}\right)^{1/2}.
  \end{align*}
\end{proof}

The next Lemma provides an upper bound on the operator norm of the empirical
covariance $\Sigma_{\igood}$.
\begin{lemma}
  \label{lemma:good-set-covariance}
  Suppose that $\mtrv_{\mathsf{ref}}$ satisfies $\delta_{r}(A) \geq
  8 \, \dist(\mtrv_{\mathsf{ref}}, V)$. Then we have
  \begin{equation}
    \opnorm{\Sigma_{\igood}} \leq
    \biggopnorm{%
      \frac{1}{\abs{\igood}} \sum_{i \in \igood}  V_i V_i^{\T} - VV^{\T}
    } + 2 \, \biggopnorm{%
      \frac{1}{\abs{\igood}} \sum_{i \in \igood} \widetilde{V}_i - V
    }.
    \label{eq:good-set-covariance}
  \end{equation}
\end{lemma}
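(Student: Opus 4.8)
The plan is to reduce the bound to a short algebraic computation. First I would use the elementary identity, valid for any fixed matrix $C$,
\[
  \Sigma_{\igood} = \frac{1}{\abs{\igood}}\sum_{i\in\igood}(\widetilde{V}_i - C)(\widetilde{V}_i - C)^{\T} - (\theta_{\igood} - C)(\theta_{\igood} - C)^{\T},
  \qquad \theta_{\igood} := \frac{1}{\abs{\igood}}\sum_{i\in\igood}\widetilde{V}_i,
\]
which follows by expanding $\widetilde{V}_i - C = (\widetilde{V}_i - \theta_{\igood}) + (\theta_{\igood} - C)$ and using $\sum_{i\in\igood}(\widetilde{V}_i - \theta_{\igood}) = 0$. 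Since the subtracted term is positive semidefinite and both sides are symmetric PSD, this gives $\Sigma_{\igood} \preceq \frac{1}{\abs{\igood}}\sum_{i\in\igood}(\widetilde{V}_i - C)(\widetilde{V}_i - C)^{\T}$, hence $\opnorm{\Sigma_{\igood}} \leq \opnorm{\frac{1}{\abs{\igood}}\sum_{i\in\igood}(\widetilde{V}_i - C)(\widetilde{V}_i - C)^{\T}}$. I would then take $C = V$.

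Second, I would expand this second-moment matrix. The key observation is that each aligned sample has the form $\widetilde{V}_i = V_i Z_i$ with $Z_i \in \ortho_r$, so the Procrustes rotation leaves the spectral projector unchanged: $\widetilde{V}_i\widetilde{V}_i^{\T} = V_i Z_i Z_i^{\T} V_i^{\T} = V_i V_i^{\T}$. Writing $\bar{W} := \theta_{\igood}$, this yields
\[
  \frac{1}{\abs{\igood}}\sum_{i\in\igood}(\widetilde{V}_i - V)(\widetilde{V}_i - V)^{\T}
  = \frac{1}{\abs{\igood}}\sum_{i\in\igood} V_i V_i^{\T} - \bar{W}V^{\T} - V\bar{W}^{\T} + VV^{\T}.
\]

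Third, I would regroup the right-hand side as $\bigl(\frac{1}{\abs{\igood}}\sum_{i\in\igood} V_i V_i^{\T} - VV^{\T}\bigr) + \bigl((V - \bar{W})V^{\T} + V(V - \bar{W})^{\T}\bigr)$, apply the triangle inequality, and bound the second bracket by $\opnorm{(V-\bar{W})V^{\T}} + \opnorm{V(V-\bar{W})^{\T}} \leq 2\opnorm{V - \bar{W}}$ via submultiplicativity of the operator norm together with $\opnorm{V} = \opnorm{V^{\T}} = 1$ (as $V$ has orthonormal columns). Substituting $\bar{W} = \frac{1}{\abs{\igood}}\sum_{i\in\igood}\widetilde{V}_i$ gives exactly~\eqref{eq:good-set-covariance}. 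The eigengap hypothesis $\delta_r(A) \geq 8\,\dist(\mtrv_{\mathsf{ref}}, V)$ is not needed for the inequality itself; it is carried along only to ensure the aligned samples $\widetilde{V}_i$ are the well-defined objects produced under the standing assumptions and for consistency with~\cref{proposition:iid-sample-error} when the two results are combined.

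There is no serious obstacle here: the only points that require care are the observation $\widetilde{V}_i\widetilde{V}_i^{\T} = V_i V_i^{\T}$ (which eliminates any cross terms or higher-order corrections) and the fact that re-centering the empirical covariance at $V$ rather than at $\bar{W}$ only drops a positive semidefinite rank-$r$ term, so the operator-norm inequality goes through cleanly in one direction.
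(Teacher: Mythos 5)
Your proof is correct and lands the same bound with the same constant. The core of the argument is identical to the paper's: expand the empirical covariance, use $\widetilde{V}_i\widetilde{V}_i^{\T} = V_i V_i^{\T}$ for $i \in \igood$, and close with the triangle inequality and $\opnorm{V} = 1$. The place where your route differs is in how you dispatch the mean-deviation term. You recenter at an arbitrary $C$ via the bias--variance identity, observe that the subtracted $(\theta_{\igood}-C)(\theta_{\igood}-C)^{\T}$ is PSD so it can be dropped when passing to operator norms, and then, after taking $C=V$, regroup into the manifestly symmetric quantity $(V-\bar{W})V^{\T}+V(V-\bar{W})^{\T}$. The paper instead keeps the term and writes it as $(V-\mu)(V+\mu)^{\T}$; strictly speaking this factorization of $VV^{\T}-\mu\mu^{\T}$ is not an identity (it is only after symmetrization), but the ensuing operator-norm bound is unaffected since $\opnorm{V+\mu}\leq 2$. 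Your version avoids this small presentational slip and is a bit cleaner for that reason, but it is not a substantively different argument. You are also right that the eigengap hypothesis plays no role in the inequality itself and is carried for compatibility with the downstream propositions.
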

\begin{proof}
  Let $\mu$ denote the empirical mean over $\igood$. We have
  \begin{align*}
    \mu &= \frac{1}{\abs{\igood}} \sum_{i \in \igood} \widetilde{V}_i, \\
    \Sigma_{\igood} &= \frac{1}{\abs{\igood}}
    \sum_{i \in \igood} (\widetilde{V}_i - \mu)(\widetilde{V}_i - \mu)^{\T} \\
    &=
    \frac{1}{\abs{\igood}} \sum_{i \in \igood} \widetilde{V}_i \widetilde{V}_i^{\T} - \mu \mu^{\T} \\
    &=
    \frac{1}{\abs{\igood}} \sum_{i \in \igood} \widetilde{V}_i \widetilde{V}_i^{\T} - VV^{\T}
    + VV^{\T} - \mu \mu^{\T} \\
    &= \frac{1}{\abs{\igood}} \sum_{i \in \igood} \widetilde{V}_i \widetilde{V}_i ^{\T} - VV^{\T}
    + (V - \mu)(V + \mu)^{\T},
  \end{align*}
  where $V \in \ortho_{d, r}$ spans the principal eigenspace of $\sol$ and satisfies
  \[
    \min_{Z \in \ortho_r} \frobnorm{VZ - \mtrv_{\mathsf{ref}}} =
    \frobnorm{V - \mtrv_{\mathsf{ref}}}.
  \]
  We now bound the spectral norm of $\Sigma_{\igood}$. Indeed, we have
  \begin{align*}
    \opnorm{\Sigma_{\igood}} &\leq
    \biggopnorm{\frac{1}{\abs{\igood}} \sum_{i \in \igood} \widetilde{V}_i \widetilde{V}_i^{\T} - VV^{\T}}
    + \opnorm{V + \mu} \biggopnorm{\frac{1}{\abs{\igood}} \sum_{i \in \igood} \widetilde{V}_i - V} \notag \\
    &\leq
    \biggopnorm{\frac{1}{\abs{\igood}} \sum_{i \in \igood} V_i V_i^{\T} - VV^{\T}}
    + 2 \, \biggopnorm{\frac{1}{\abs{\igood}} \sum_{i \in \igood} \widetilde{V}_i - V},
  \end{align*}
  using the fact that $\widetilde{V}_i \widetilde{V}_i^{\T} = V_i V_i^{\T}$ for all
  $i \in \igood$.
\end{proof}

\subsection{Proof of~\cref{theorem:filter-algorithm-guarantees}}
In this section, we modify the proof of~\cite[Theorem 4]{PBR19} to derive
guarantees for robust mean estimation with matrix-valued inputs. We recall
some notation used therein: given the set of ``good'' samples $G_0$
and the initial sample $S_0 = \set{1, \dots, m}$, we denote
\begin{equation}
  \begin{aligned}
    S_k &= \set{\text{points remaining after $k$ recursive calls to \texttt{Filter}}}, \\
    G_k &= S_k \cap G_0, \\
    B_k &= S_k \setD G_0, \\
    \alpha &= \frac{m - \abs{G_0}}{m}.
  \end{aligned}
  \label{eq:proof-sets}
\end{equation}
Moreover, given any set $S \subset [m]$, we write
\begin{equation}
  \Sigma_{S} := \frac{1}{\abs{S}} \sum_{i \in S} (X_i - \mu_S)(X_i - \mu_S)^{\T},
  \quad \text{where} \quad
  \mu_S := \frac{1}{\abs{S}} \sum_{i \in S} X_i.
  \label{eq:empirical-covariance-over-set}
\end{equation}
In our proofs, we frequently employ the total variation distance $d_{\mathrm{TV}}$.
For discrete distributions $P_1$, $P_2$ on a common sample space $\Omega$,
$d_{\mathrm{TV}}$ is given by
\begin{equation}
  d_{\mathrm{TV}}(P_1, P_2) = \frac{1}{2} \norm{P_1 - P_2}_1
  = \frac{1}{2} \sum_{x \in \Omega} \abs{P_1(x) - P_2(x)}.
  \label{eq:dtv-definition}
\end{equation}
Finally, we define the events $\cE_{k}$, where $k \in \Nbb$, as below:
\begin{equation}
  \cE_{k} := \set{
    \sum_{i \in G_k} \tau_i \geq \frac{1}{\gamma}
    \sum_{j \in S_k} \tau_j
  },
  \quad k = 0, 1, \dots
  \label{eq:bad-events}
\end{equation}
Our proof essentially traces the proof of~\cite[Theorem 4]{PBR19} but for the
case of matrix-valued inputs to the \texttt{Filter} algorithm. The first result
has already been shown in~\cite{PBR19}, as its proof is independent of the shape
of the inputs.
\begin{lemma}[{See~\cite[Lemma 6]{PBR19}}]
  \label{lemma:stopping-time}
  Let $T := \inf\set{k \in \Nbb \mmid \text{$\cE_k$ is true}}$. Then we have:
  \begin{equation}
    \prob{T \geq 3(m - \abs{G_0}) + 18 \log(1 / p)} \leq p.
  \end{equation}
\end{lemma}
The remainder of the proof is devoted to showing that, as soon as some $\cE_k$
is true, \texttt{Filter} will terminate with a good estimate. Throughout, we
condition on the event
\begin{equation}
  \cE := \set{T \leq T_p}, \quad 
  \text{where} \quad T_p := 3(m - \abs{G_0}) + 18 \log(1 / p),
  \label{eq:stopping-time-event}
\end{equation}
which holds with probability at least $1 - p$.
\begin{theorem}
  \label{theorem-algorithm-master}
  Suppose that $\alpha$, $p$ and $N$ satisfy
  \begin{equation}
    3\alpha + \frac{18 \log(1 / p)}{m} \leq \frac{1}{4}.
    \label{eq:corruption-ub}
  \end{equation}
  Then the following hold simultaneously with probability at least $1 - p$:
  \begin{enumerate}
    \item $\mathtt{Filter}(S_0, \opnorm{\Sigma_{G_0}})$ terminates after at most $T_{p}$ iterations;
    \item The output of $\mathtt{Filter}(S_0, \opnorm{\Sigma_{G_0}})$, ${\theta}_{\opnorm{\Sigma_{G_0}}}$, satisfies
      \begin{equation}
        \Bigopnorm{{\theta}_{\opnorm{\Sigma_{G_0}}} - \frac{1}{\abs{G_0}} \sum_{i \in G_0} X_i}
        \leq 18 \sqrt{5 \opnorm{\Sigma_{G_0}}} \left(\alpha + \frac{4\log(1 / p)}{m}\right)^{1/2}.
        \label{eq:filter-master-bound}
      \end{equation}
  \end{enumerate}
\end{theorem}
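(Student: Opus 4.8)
The plan is to adapt the proof of~\cite[Theorem 4]{PBR19} to matrix-valued inputs, keeping the two assertions separate but proving them together while conditioning throughout on the event $\cE$ from~\eqref{eq:stopping-time-event}, which holds with probability at least $1-p$ by~\cref{lemma:stopping-time}. Recall that $\cE = \set{T \le T_p}$ where $T$ is the first step at which some event $\cE_k$ from~\eqref{eq:bad-events} is true and $T_p = 3(m-\abs{G_0}) + 18\log(1/p)$. First I would record the structural consequence of the hypothesis~\eqref{eq:corruption-ub}: it gives $T_p \le m/4$, so as long as $\mathtt{Filter}$ has not yet returned at a recursive call of depth $k \le T$ we retain $\abs{S_k} \ge m - T_p \ge \tfrac34 m$ and $\abs{G_0 \setD G_k} \le T_p$, since at most one point is removed per call. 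These population-size guarantees are what make the later estimates quantitative.

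Second I would establish two deterministic, per-step facts in operator-norm form, using the sets $S_k, G_k, B_k$ of~\eqref{eq:proof-sets}. (i) \emph{Mean displacement from outlier mass:} with $\mu_S := \tfrac{1}{\abs{S}}\sum_{i \in S}X_i$, the standard decomposition $\mu_{S_k} - \mu_{G_0} = \tfrac{\abs{B_k}}{\abs{S_k}}(\mu_{B_k} - \mu_{G_k}) - (\mu_{G_0} - \mu_{G_k})$, Cauchy--Schwarz, and the subset-covariance bound $\Sigma_{G_k} \preceq \tfrac{\abs{G_0}}{\abs{G_k}}\Sigma_{G_0}$ yield a bound of the form $\opnorm{\mu_{S_k} - \mu_{G_0}} \lesssim \sqrt{(\opnorm{\Sigma_{S_k}} + \lub)\,\beta_k}$, where $\beta_k := \max\set{\abs{B_k}/\abs{S_k},\; \abs{G_0 \setD G_k}/\abs{G_0}}$. (ii) \emph{Good points cannot explain large variance:} if $(\lambda, v)$ is the leading eigenpair of $\Sigma_{S_k}$ and $\tau_i = v^{\T}(X_i - \theta_{S_k})(X_i - \theta_{S_k})^{\T}v$, then expanding $X_i - \theta_{S_k} = (X_i - \mu_{G_k}) + (\mu_{G_k} - \theta_{S_k})$ and using that the centered sum over $G_k$ vanishes gives $\sum_{i \in G_k}\tau_i = \abs{G_k}\bigl(v^{\T}\Sigma_{G_k}v + \ip{v, \mu_{G_k} - \theta_{S_k}}^2\bigr) \le \abs{G_k}\bigl(\tfrac{\abs{G_0}}{\abs{G_k}}\lub + \opnorm{\mu_{G_k} - \theta_{S_k}}^2\bigr)$, which combines with (i) into $\sum_{i \in G_k}\tau_i \lesssim \abs{G_k}\bigl(\lub + \opnorm{\mu_{S_k} - \mu_{G_0}}^2\bigr)$. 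These are the matrix analogues of the scalar inequalities in~\cite{PBR19}; the only new element is doing the variance bookkeeping along the fixed direction $v$ via operator-norm bounds.

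Third I would combine these to show $\mathtt{Filter}$ terminates at some depth $k^\star \le T$ with a good mean. At depth $T$ the event $\cE_T$ holds, so $\abs{S_T}\lambda_{S_T} = \sum_{j \in S_T}\tau_j \le \gamma\sum_{i \in G_T}\tau_i \lesssim \abs{G_T}\bigl(\lub + \opnorm{\mu_{S_T} - \mu_{G_0}}^2\bigr)$; feeding this back through (i) makes the quadratic term absorbable and leaves $\lambda_{S_T} = \opnorm{\Sigma_{S_T}} < 18\lub$, so the stopping rule of~\cref{alg:filter} fires at depth $T$ if it has not fired earlier. Hence a termination depth $k^\star \le T \le T_p$ exists and there $\opnorm{\Sigma_{S_{k^\star}}} < 18\lub$ by the rule. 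Plugging $\opnorm{\Sigma_{S_{k^\star}}} < 18\lub$, $\abs{B_{k^\star}} \le \abs{B_0} = \alpha m$, $\abs{G_0 \setD G_{k^\star}} \le T_p = 3\alpha m + 18\log(1/p)$ and $\abs{S_{k^\star}} \ge \tfrac34 m$ into (i) yields $\beta_{k^\star} \lesssim \alpha + \tfrac{\log(1/p)}{m}$ and therefore
\[
  \Bigopnorm{\theta_{\opnorm{\Sigma_{G_0}}} - \frac{1}{\abs{G_0}}\sum_{i \in G_0}X_i}
  \lesssim \sqrt{\lub}\,\Bigl(\alpha + \frac{\log(1/p)}{m}\Bigr)^{1/2},
\]
which is~\eqref{eq:filter-master-bound} once the constants are tracked with care to match $18\sqrt{5}$ and the factor $4$; and item~1 follows because $k^\star \le T_p$.

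The hard part will not be conceptual but the constant bookkeeping: one must pin down the threshold constants inside fact~(ii) and the value of $\gamma$ in~\eqref{eq:bad-events} so that the forced-termination bound on $\lambda_{S_T}$ lands strictly below the algorithm's $18\lub$ cutoff, while keeping $\alpha$ in the regime~\eqref{eq:corruption-ub} that guarantees $\abs{S_k}\ge \tfrac34 m$ throughout. One subtlety to watch is that fact~(i) is invoked recursively inside fact~(ii) (to control $\opnorm{\mu_{G_k} - \theta_{S_k}}$), so I would need to verify the resulting inequality for $\lambda_{S_T}$ is genuinely self-improving — i.e.\ that after moving the $\opnorm{\mu_{S_T} - \mu_{G_0}}^2$ contribution to the left one is left with a clean $\lambda_{S_T} \lesssim \lub$ bound rather than a circular estimate.
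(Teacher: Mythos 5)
Your proposal follows the same structural outline as the paper's proof: condition on the stopping-time event $\cE$ from~\eqref{eq:stopping-time-event} (which holds with probability $\ge 1-p$ by~\cref{lemma:stopping-time}), show that the filtering event $\cE_k$ forces $\opnorm{\Sigma_{S_k}} \lesssim \lub$ so that the algorithm has terminated by step $k \le T_p$, and then bound the mean displacement at the termination depth using the covariance bound from the stopping rule. Your facts (i) and (ii) are the counterparts of the paper's \cref{lemma:dtv-bound,lemma:dtv-P1-P2,lemma:scores-on-good-set-larger}, and the bookkeeping of $\abs{S_k},\abs{B_k},\abs{G_0\setD G_k}$ via $T_p \le m/4$ is exactly \cref{lemma:bad-proportion} and what enters \cref{lemma:dtv-P1-P2}.

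The one place you diverge is fact (ii). You expand $\sum_{i\in G_k}\tau_i$ and are left with the term $\ip{v, \mu_{G_k}-\theta_{S_k}}^2$, which you plan to absorb by recursively invoking fact (i) and then verifying a self-improving inequality for $\lambda_{S_T}$. The paper avoids this circularity entirely: \cref{lemma:scores-on-good-set-larger} uses the mean-shift identity $\mu_S - \mu_G = \bigl(1-\abs{G}/\abs{S}\bigr)(\mu_{S\setD G}-\mu_G)$ together with the covariance-decomposition identity from~\cite[Lemma 2.4]{DK19} to express $\eta^2\ip{v,(\mu_{S\setD G}-\mu_G)(\mu_{S\setD G}-\mu_G)^{\T}v}$ exactly in terms of $\ip{v,\Sigma_S v}$ and $\ip{v,\Sigma_G v}$, so the cross term is eliminated algebraically and one gets a clean, non-circular implication from the bad event to $\opnorm{\Sigma_{S_k}} \le (1-\eta)^2\tfrac{\gamma}{1-\gamma\eta}\opnorm{\Sigma_{G_k}}$. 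Your self-improving argument does close (the absorbable coefficient $\beta_T$ is $O(\alpha+\log(1/p)/m)<1$ under~\eqref{eq:corruption-ub}), so this is not a gap, but the paper's algebraic route sidesteps the fixed-point bookkeeping and is cleaner; I'd recommend adopting it. Likewise, your direct Cauchy--Schwarz derivation of fact (i) and the paper's TV-distance coupling (\cref{lemma:dtv-bound}, applied with the bound of \cref{lemma:dtv-P1-P2}) are functionally equivalent — both give a mean-displacement bound of order $\sqrt{(\opnorm{\Sigma_{S_k}}+\lub)\beta_k}$ — the paper's version is just packaged as a standalone lemma about arbitrary discrete distributions, which is what makes the matrix-valued extension of~\cite{PBR19} modular.
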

\begin{remark}{\it
  While we prove the Theorem for the case $\lub = \opnorm{\Sigma_{G_0}}$, a
  straightforward modification of the proof shows that when
  $\lub \geq \opnorm{\Sigma_{G_0}}$, we have
  \[
    \Bigopnorm{\theta_{\lub} - \frac{1}{\abs{G_0}} \sum_{ i \in G_0 } X_i}
    \leq 18 \sqrt{5 \lub} \left(\alpha + \frac{4 \log(1 / p)}{m}\right)^{1/2}.
  \]
}
\end{remark}
\begin{proof}[Proof of~\cref{theorem-algorithm-master}]
  We condition on the event $\cE$ from~\eqref{eq:stopping-time-event}, which
  holds with probability at least $1 - p$. This implies that there is some
  index $k \leq T_p$ such that
  \[
    \sum_{i \in G_k} \tau_i \geq \frac{1}{\gamma} \sum_{j \in S_k} \tau_j.
  \]
  From~\cref{lemma:distance-between-means}, we obtain that the empirical
  covariance satisfies
  \(
    \opnorm{\Sigma_{S_k}} \leq 18 \opnorm{\Sigma_{G_0}},
  \)
  and thus the algorithm terminates after at most $k$ steps. We have
  the following cases:
  \begin{enumerate}
    \item The termination condition was first triggered at the $k^{\text{th}}$
      step. In that case,~\cref{lemma:distance-between-means} directly implies
      the desired inequality.
    \item The algorithm terminated at some index $\ell < k$. Then it follows
      from~\cref{lemma:dtv-P1-P2} that
      \begin{equation}
        \eta := d_{\mathrm{TV}}(\mathrm{Unif}(S_{\ell}), \mathrm{Unif}(G_0)) \leq
        5 \alpha + \frac{20 \log(1 / p)}{N}.
        \label{eq:proof-master-dtv}
      \end{equation}
      At the same time, \cref{lemma:dtv-bound} implies that
      \begin{equation}
        \Bigopnorm{{\theta}_{\opnorm{\Sigma_{G_0}}} - \frac{1}{\abs{G_0}}
        \sum_{i \in G_0} X_i} \leq
        \frac{\sqrt{\eta}}{1 - \sqrt{\eta}} \cdot
        \left(\opnorm{\Sigma_{S_{\ell}}}^{1/2} + \opnorm{\Sigma_{G_0}}^{1/2}\right).
        \label{eq:proof-master-mean-diff}
      \end{equation}
      From the termination condition, we obtain that
      \begin{equation}
        \opnorm{\Sigma_{S_{\ell}}} \leq 18 \opnorm{\Sigma_{G_0}}.
        \label{eq:proof-master-condition}
      \end{equation}
      Combining~\cref{eq:proof-master-dtv,eq:proof-master-mean-diff,eq:proof-master-condition}
      yields the desired bound.
  \end{enumerate}
\end{proof}
The next few Lemmas are supporting statements used in the proof of~\cref{theorem-algorithm-master}.
\begin{lemma}
  \label{lemma:dtv-bound}
  Let $S = \set{X_1, \dots, X_m}$ where $X_i \in \Rbb^{d \times r}$ and suppose
  that $P_1$, $P_2$ are discrete distributions supported over $[m]$ with
  $d_{\mathrm{TV}}(P_1, P_2) = \eta$. Then the following holds:
  \begin{equation}
    \opnorm{\expec[P_1]{X_i} - \expec[P_2]{X_i}} \leq
    \frac{\sqrt{\eta}}{1 - \sqrt{\eta}} \cdot \left(
      \opnorm{\Sigma_{P_1}}^{1/2} + \opnorm{\Sigma_{P_2}}^{1/2}
    \right),
    \label{eq:dtv-bound}
  \end{equation}
  where the matrices $\Sigma_{P_i}$ are defined as:
  \[
    \Sigma_{P_i} = \expec[X \sim P_i]{(X - \expec[P_i]{X})(X - \expec[P_i]{X})^{\T}}.
  \]
\end{lemma}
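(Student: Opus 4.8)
The plan is to reduce~\eqref{eq:dtv-bound} to a one-dimensional statement by testing against unit vectors on both sides, and then to prove the resulting scalar inequality via the standard overlap decomposition of two distributions at bounded total variation distance.

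\textbf{Reduction to a scalar problem.} Fix $u \in \Sbb^{d-1}$ and $w \in \Sbb^{r-1}$ and define the real-valued random variable $Y := u^{\T} X w$, so that $u^{\T}\expec[P_j]{X} w = \expec[P_j]{Y}$ for $j = 1, 2$. Let $\sigma_j^2 := \expec[P_j]{(Y - \expec[P_j]{Y})^2}$ and $\mu_j := \expec[P_j]{X}$. The key point — the one place the matrix structure genuinely enters — is that $\sigma_j^2 \le \opnorm{\Sigma_{P_j}}$: since $ww^{\T} \preceq I_r$, we have $(X - \mu_j) ww^{\T} (X - \mu_j)^{\T} \preceq (X - \mu_j)(X - \mu_j)^{\T}$ pointwise, hence
\[
  \sigma_j^2 = u^{\T}\, \expec[P_j]{(X - \mu_j) ww^{\T} (X - \mu_j)^{\T}}\, u \le u^{\T} \Sigma_{P_j} u \le \opnorm{\Sigma_{P_j}}.
\]
Since $\opnorm{\expec[P_1]{X} - \expec[P_2]{X}} = \sup_{u \in \Sbb^{d-1},\, w \in \Sbb^{r-1}} u^{\T}(\expec[P_1]{X} - \expec[P_2]{X}) w$, it will suffice to prove the scalar bound $|\expec[P_1]{Y} - \expec[P_2]{Y}| \le \frac{\sqrt{\eta}}{1-\sqrt{\eta}}(\sigma_1 + \sigma_2)$ uniformly in $u, w$, and then take the supremum after substituting $\sigma_j \le \opnorm{\Sigma_{P_j}}^{1/2}$.

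\textbf{The scalar bound.} I may assume $\eta \in (0,1)$, since $\eta = 0$ forces $P_1 = P_2$ and $\eta = 1$ makes the right-hand side infinite. Because $d_{\mathrm{TV}}(P_1, P_2) = \eta$, there exist distributions $Q, R_1, R_2$ on $[m]$ with $P_j = (1-\eta) Q + \eta R_j$ for $j = 1,2$ (take $Q$ proportional to the pointwise minimum of $P_1$ and $P_2$, and $R_j$ proportional to the positive part of $P_j$ minus that minimum). Writing $\nu_Q := \expec[Q]{Y}$ and $\nu_j := \expec[R_j]{Y}$, linearity gives $\expec[P_1]{Y} - \expec[P_2]{Y} = \eta(\nu_1 - \nu_2)$. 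Applying the law of total variance to the two-component mixture $P_j$ and simplifying the between-component term — using $\nu_Q - \expec[P_j]{Y} = \eta(\nu_Q - \nu_j)$ and $\nu_j - \expec[P_j]{Y} = (1-\eta)(\nu_j - \nu_Q)$ — yields $\sigma_j^2 \ge \eta(1-\eta)(\nu_Q - \nu_j)^2$, i.e. $|\nu_Q - \nu_j| \le \sigma_j / \sqrt{\eta(1-\eta)}$. By the triangle inequality,
\[
  |\expec[P_1]{Y} - \expec[P_2]{Y}| = \eta|\nu_1 - \nu_2| \le \eta(|\nu_Q - \nu_1| + |\nu_Q - \nu_2|) \le \sqrt{\tfrac{\eta}{1-\eta}}\,(\sigma_1 + \sigma_2).
\]

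\textbf{Wrap-up and main obstacle.} Finally, $\sqrt{1-\eta} \ge 1 - \sqrt{\eta}$ on $[0,1]$ (squaring, $1-\eta-(1-\sqrt\eta)^2 = 2\sqrt\eta(1-\sqrt\eta)\ge 0$), so $\sqrt{\eta/(1-\eta)} \le \sqrt{\eta}/(1-\sqrt{\eta})$; combining with $\sigma_j \le \opnorm{\Sigma_{P_j}}^{1/2}$ and taking the supremum over $u, w$ gives~\eqref{eq:dtv-bound}. I expect the only nonroutine step to be the matrix-to-scalar reduction — noticing that the one-sided contraction $u^{\T}\Sigma_{P_j}u$ of the $d\times d$ covariance still upper-bounds the scalar variance of $Y = u^{\T}Xw$ thanks to $ww^{\T}\preceq I_r$. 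The rest is the familiar overlap-decomposition bound on the gap between two means at bounded total variation distance; note it uses $\eta$ being exactly $d_{\mathrm{TV}}(P_1,P_2)$, which matches the hypothesis of the lemma.
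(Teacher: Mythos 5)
Your proof is correct, and it takes a genuinely different route from the paper's. The paper invokes a maximal coupling between $P_1$ and $P_2$ with $\prob{X \neq X'} \le \eta$, applies Cauchy--Schwarz to peel off a $\sqrt{\eta}$ from the indicator, decomposes $X - X'$ through $\mu_1$ and $\mu_2$ into three $L^2$ terms, and then rearranges the resulting self-referential inequality $\opnorm{\mu_1 - \mu_2} \le \sqrt{\eta}\bigl(\opnorm{\Sigma_{P_1}}^{1/2} + \opnorm{\Sigma_{P_2}}^{1/2} + \opnorm{\mu_1 - \mu_2}\bigr)$ to isolate the $\nicefrac{\sqrt{\eta}}{1-\sqrt{\eta}}$ factor. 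You instead write $P_j = (1-\eta)Q + \eta R_j$ explicitly (the overlap decomposition realizing the same maximal coupling) and use the law of total variance for the scalar $Y = u^{\T} X w$ to get a direct, non-self-referential bound $\sigma_j^2 \geq \eta(1-\eta)(\nu_Q - \nu_j)^2$, which yields the factor $\sqrt{\eta/(1-\eta)}$. That is strictly smaller than $\nicefrac{\sqrt{\eta}}{1-\sqrt{\eta}}$ on $(0,1)$, so your argument actually proves a marginally sharper constant, which you then relax to match the stated bound. Both proofs handle the $d \times r$ matrix structure the same way, by testing against unit vectors $u, w$ and observing $ww^{\T} \preceq I_r$ so that the bilinear second moment is dominated by $u^{\T}\Sigma_{P_j}u \le \opnorm{\Sigma_{P_j}}$ (the paper expresses this via $vv^{\T} \preceq I$ inside a trace); your law-of-total-variance route arguably makes the variance mechanism more transparent at the cost of having to justify the mixture decomposition, whereas the paper's $L^2$/Cauchy--Schwarz version is closer to the one-dimensional argument it cites from \cite{KS17}.
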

\begin{proof}
  Following the proof of~\cite[Lemma 2.1]{KS17}, we consider a coupling between
  $P_1$ and $P_2$ such that $\prob{X = X'} \ge 1 - \eta$. Denoting
  $\norm{X}_{L^2} := \sqrt{\expec{X^2}}$, we have
  \begin{align}
    \opnorm{\expec[P_1]{X} - \expec[P_2]{X'}} &=
    \sup_{u, v \in \cB} \ip{u, (\expec[P_1]{X} - \expec[P_2]{X'})v} \notag \\
                                              &=
    \sup_{u, v \in \cB} \expec{\ip{u, (X - X') v} \bm{1}\set{X \neq X'}} \notag \\
                                              &\leq
    \expec{\bm{1}\set{X \neq X'}^{2}}^{1/2} \cdot
    \sup_{u, v \in \cB} \expec{\ip{u, (X - X') v}^2}^{1/2} \notag \\
                                              &\leq
    \sqrt{\eta} \cdot \sup_{u, v \in \cB} \norm{\ip{u, (X - X') v}}_{L^2}.
    \label{eq:dtv-master-bound}
  \end{align}
  Let $\mu_1 := \expec[P_1]{X}$ and $\mu_2 = \expec[P_2]{X}$. Since $\norm{\cdot}_{L^2}$
  is a norm, the triangle inequality implies that
  \begin{align}
    \sup_{u, v \in \cB} \norm{\ip{u, (X - X') v}}_{L^2}
      &=
    \sup_{u, v \in \cB} \norm{\ip{u, (X - \mu_1 + \mu_1 - \mu_2 + \mu_2 - X') v}}_{L^2}
    \notag \\
      &\leq
      \begin{aligned}[t]
        & \sup_{u, v \in \cB} \norm{\ip{u, (X - \mu_1) v}}_{L^2}
        + \sup_{u, v \in \cB} \norm{\ip{u, (X' - \mu_2) v}}_{L^2} \\
        & + \sup_{u, v \in \cB} \norm{\ip{u, (\mu_1 - \mu_2) v}}_{L^2}.
      \end{aligned}
    \label{eq:dtv-bound-decomp}
  \end{align}
  We now upper bound the remaining terms. For the first one, we have
  \begin{align}
    \sup_{u, v \in \cB} \norm{\ip{u, (X - \mu_1) v}}_{L^2} &=
    \sup_{u, v \in \cB} \expec{\ip{u, (X - \mu_1) v}^2}^{1/2} \notag \\
                                    &=
    \sup_{u, v \in \cB} \Ebb\bigl[\mathrm{Tr}\bigl(u^{\T} (X - \mu_1)
      \underbrace{vv^{\T}}_{\preceq I_d}
    (X - \mu_1)^{\T} u\bigr)\bigr]^{1/2} \notag \\
                                      &\leq
    \sup_{u \in \cB} \expec{\trace{u^{\T}(X - \mu_1)(X - \mu_1)^{\T} u}}^{1/2} \notag \\
                                      &=
    \left(\sup_{u \in \cB}\ip{u, \expec{(X - \mu_1)(X - \mu_1)^{\T}} u}\right)^{1/2}
    \notag \\
                                      &=
    \opnorm{\Sigma_{P_1}}^{1/2},
    \label{eq:dtv-bound-1}
  \end{align}
  where the penultimate equality uses linearity of the trace operator and the
  last equality is the definition of the spectral norm for symmetric positive
  semidefinite matrices. Similar arguments also yield
  \begin{align}
    \sup_{u, v \in \cB} \norm{\ip{u, (X' - \mu_2) v}}_{L^2} &\leq
    \opnorm{\Sigma_{P_2}}^{1/2},
    \label{eq:dtv-bound-2} \\
    \sup_{u, v \in \cB} \norm{\ip{u, (\mu_1 - \mu_2) v}}_{L^2} &\leq
    \opnorm{\expec[P_1]{X} - \expec[P_2]{X'}}.
    \label{eq:dtv-bound-3}
  \end{align}
  Plugging~\cref{eq:dtv-bound-decomp,eq:dtv-bound-1,eq:dtv-bound-2,eq:dtv-bound-3}
  back into \cref{eq:dtv-master-bound} and rearranging yields the expected result:
  \[
    \opnorm{\expec[P_1]{X} - \expec[P_2]{X'}} \leq
    \frac{\sqrt{\eta}}{1 - \sqrt{\eta}} \left(
      \opnorm{\Sigma_{P_1}}^{1/2} + \opnorm{\Sigma_{P_2}}^{1/2}
    \right).
  \]
\end{proof}

\begin{lemma}
  \label{lemma:scores-on-good-set-larger}
  Let $G \subset S \subset [m]$. Moreover, let $\mu_S$ and
  $\mu_G$ be their respective empirical means, and let $v$ be the leading
  eigenvector of $\Sigma_S$ so that the outlier scores satisfy
  \[
    \tau_i = \ip{v, (X_i - \mu_S)(X_i - \mu_S)^{\T} v}, \quad
    \forall i \in S.
  \]
  Moreover, define $\eta := 1 - \nicefrac{\abs{G}}{\abs{S}}$ and fix a
  $\gamma \in (0, \nicefrac{1}{\eta})$. Then, we have the implication
  \begin{equation}
    \opnorm{\Sigma_{S}} \geq (1 - \eta)^2 \left(\frac{\gamma}{1 - \gamma \eta}\right) \opnorm{\Sigma_{G}}
    \implies \sum_{j \in G} \tau_j \leq \frac{1}{\gamma} \sum_{i \in S} \tau_i.
    \label{eq:scores-on-good-set-larger}
  \end{equation}
\end{lemma}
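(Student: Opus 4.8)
The plan is to collapse both sides of the claimed inequality into a handful of nonnegative scalars and then verify a one-line comparison. Write $B := S \setminus G$, let $\mu_G,\mu_B,\mu_S$ be the three empirical means, and note that $\mu_S = \tfrac{|G|}{|S|}\mu_G + \tfrac{|B|}{|S|}\mu_B$, so $\mu_G - \mu_S = \eta(\mu_G - \mu_B)$ with $\eta = |B|/|S|$. First I would record the two standard ``analysis of variance'' identities, obtained by expanding squares around the relevant centroid and using that centered sums vanish:
\[
  \sum_{j \in G}(X_j - \mu_S)(X_j - \mu_S)^{\T}
    = |G|\,\Sigma_G + |G|\,(\mu_G - \mu_S)(\mu_G - \mu_S)^{\T},
\]
\[
  |S|\,\Sigma_S = |G|\,\Sigma_G + |B|\,\Sigma_B
    + \tfrac{|G||B|}{|S|}(\mu_G - \mu_B)(\mu_G - \mu_B)^{\T}.
\]
Setting $P := |G|\,v^{\T}\Sigma_G v \ge 0$, $Q := |B|\,v^{\T}\Sigma_B v \ge 0$ and $R := \tfrac{|G||B|}{|S|}\bigl(v^{\T}(\mu_G - \mu_B)\bigr)^2 \ge 0$, contracting both identities with $v$ on each side and using $\tfrac{|G||B|^2}{|S|^2} = \eta\cdot\tfrac{|G||B|}{|S|}$ yields the bookkeeping
\[
  \sum_{i \in S}\tau_i = v^{\T}(|S|\,\Sigma_S)\,v = P + Q + R,
  \qquad
  \sum_{j \in G}\tau_j = P + \eta R .
\]

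Next I would exploit that $v$ is the \emph{leading} eigenvector of $\Sigma_S$: this gives $v^{\T}\Sigma_S v = \opnorm{\Sigma_S}$, i.e. $P + Q + R = |S|\,\opnorm{\Sigma_S}$, whereas $v^{\T}\Sigma_G v \le \opnorm{\Sigma_G}$ only gives $P \le (1-\eta)|S|\,\opnorm{\Sigma_G}$. Substituting these two relations into the hypothesis $\opnorm{\Sigma_S} \ge (1-\eta)^2\tfrac{\gamma}{1-\gamma\eta}\opnorm{\Sigma_G}$ and simplifying the constant through $\tfrac{(1-\eta)\gamma}{1-\gamma\eta} - 1 = \tfrac{\gamma-1}{1-\gamma\eta} =: d$, the hypothesis becomes simply $Q + R \ge dP$, hence $Q \ge \max\{0,\,dP - R\}$. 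On the other side, the claimed bound $\gamma\sum_{j\in G}\tau_j \le \sum_{i\in S}\tau_i$ reads, after plugging in the bookkeeping and rearranging, $(\gamma-1)P - (1-\gamma\eta)R \le Q$, i.e. $(1-\gamma\eta)(dP - R) \le Q$. Since $\gamma \in (0,\tfrac1\eta)$ forces $1-\gamma\eta \in (0,1)$, and for any real $t$ and any $s \in (0,1)$ one has $st \le \max\{0,t\}$, we get $(1-\gamma\eta)(dP - R) \le \max\{0,\,dP - R\} \le Q$, which is exactly the desired inequality.

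The one place where I expect care is needed is the step above: one must keep the term $Q = |B|\,v^{\T}\Sigma_B v$ — the spread of the \emph{corrupted} points along the top direction $v$ — rather than discarding it as a harmless nonnegative quantity. Bounding $\sum_{j\in G}\tau_j \le |G|\opnorm{\Sigma_G} + |G|\bigl(v^{\T}(\mu_G-\mu_S)\bigr)^2$ and using only $R \le |S|\opnorm{\Sigma_S}$ gives the implication with $(1-\eta)$ in place of $(1-\eta)^2$, which is strictly weaker; the sharper constant comes precisely from feeding $Q$ back in through the hypothesis. The degenerate cases $G = \emptyset$ (both sums zero) and $G = S$ (then $\eta = 0$ and the hypothesis forces $\gamma \le 1$) are trivial and can be dispatched first.
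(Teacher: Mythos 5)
Your proof is correct and rests on exactly the same two ingredients as the paper's: the ANOVA decomposition of $\Sigma_S$ over $G$ and $B = S \setminus G$ (the paper cites~\cite{DK19} for it; you derive it directly), and the fact that $v^{\T}\Sigma_S v = \opnorm{\Sigma_S}$ because $v$ is the top eigenvector. You organize the final step a bit more cleanly—reducing the claim to a scalar inequality in $P,Q,R$ and dispatching it with the observation that $st \le \max\{0,t\}$ for $s\in(0,1)$—whereas the paper bounds $\eta R \le \eta(\text{$|S|\opnorm{\Sigma_S}$}-P)$ and then compares constants, but the content is the same. One small quibble with your closing remark: the sharper $(1-\eta)^2$ constant does not come from ``keeping $Q$'' per se—the paper also drops $Q$ via $Q\ge 0$—it comes from retaining the $-P$ when bounding $R$ by $|S|\opnorm{\Sigma_S}-P$ rather than the cruder $R\le|S|\opnorm{\Sigma_S}$, which discards both $P$ and $Q$.
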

\begin{proof}
  Recall that the (normalized) sum of outlier scores over the set $G$ is given by
  \begin{align}
    \frac{1}{\abs{G}} \ip{v, \sum_{i \in G} (X_i - \mu_S)(X_i - \mu_S)^{\T} v} &=
    \begin{aligned}[t]
    & \frac{1}{\abs{G}} \ip{v, \sum_{i \in G} (X_i - \mu_G)(X_i - \mu_G)^{\T} v} \\
    & + \ip{v, (\mu_S - \mu_G)(\mu_S - \mu_G)^{\T} v}
    \end{aligned} \notag \\
                                                                               &=
    \ip{v, \Sigma_G v} + \ip{v, (\mu_S - \mu_G)(\mu_S - \mu_G)^{\T} v}.
    \label{eq:outlier-score-sum-1}
  \end{align}
  We now simplify the second term. Indeed, we have
  \begin{align}
    \mu_S - \mu_G &= \frac{1}{\abs{S}} \sum_{i \in G} X_i +
    \frac{1}{\abs{S}} \sum_{i \in S \setD G} X_i - \frac{1}{\abs{G}} \sum_{i \in G} X_i \notag \\
                  &=
                  \left(1 - \frac{\abs{G}}{\abs{S}}\right) \left(\mu_{S \setD G} - \mu_{G}\right)
    \label{eq:mean-diff-1}
  \end{align}
  For brevity, denote $\eta := \frac{\abs{S \setD G}}{\abs{S}}$.
  Plugging~\eqref{eq:mean-diff-1} back into~\eqref{eq:outlier-score-sum-1}, we obtain
  \begin{align}
    \frac{1}{\abs{G}} \sum_{j \in G} \tau_j &=
    \ip{v, \Sigma_G v} + \eta^2
    \ip{v, (\mu_{S \setD G} - \mu_{G})(\mu_{S \setD G} - \mu_{G})^{\T} v}
    \label{eq:outlier-score-sum-2}
  \end{align}
  We now bound the second term in~\eqref{eq:outlier-score-sum-2}.
  From~\cite[Lemma 2.4]{DK19}, it follows that
  \begin{align*}
    \ip{v, \Sigma_S v} &= \left(1 - \eta\right) \ip{v, \Sigma_G v}
    + \eta \ip{v, \Sigma_{S \setD G} v} + \eta(1 - \eta)
    \ip{v, (\mu_{S \setD G} - \mu_G)(\mu_{S \setD G} - \mu_G)^{\T} v}
  \end{align*}
  Rearranging and multiplying by $\eta/ (1 - \eta)$ gives
  \begin{align*}
    \eta^2 \ip{v, (\mu_{S\setD G} - \mu_G)(\mu_{S\setD G} - \mu_G)^{\T} v} &=
    \frac{\eta}{1 - \eta} \ip{v, \Sigma_S v} -
    \eta \ip{v, \Sigma_{G} v} - \frac{\eta^2}{(1 - \eta)} \ip{v, \Sigma_{S \setD G} v} \\
                                                                           &\leq
    \frac{\eta}{1 - \eta} \ip{v, \Sigma_S v} - \eta \ip{v, \Sigma_{G} v}.
  \end{align*}
  Plugging back into~\cref{eq:outlier-score-sum-2} and using the fact that
  $\abs{G} = \abs{S}(1 - \eta)$, we obtain
  \begin{align}
    \sum_{j \in G} \tau_j &\leq
    \abs{G} (1 - \eta)\ip{v, \Sigma_G v} + \frac{\abs{G} \eta}{1 - \eta} \ip{v, \Sigma_S v} \notag \\
                          &\leq
                          \abs{G} (1 - \eta) \opnorm{\Sigma_{G}} + (\abs{S} - \abs{G}) \abs{S} \opnorm{\Sigma_{S}}
    \label{eq:outlier-score-sum-3}
  \end{align}
  Finally, replacing $\abs{G} = \abs{S}(1 - \eta)$
  in~\eqref{eq:outlier-score-sum-3} and rearranging, we obtain
  \[
    \opnorm{\Sigma_{G}} \leq
    \left(\gamma^{-1} - \eta\right) \frac{\opnorm{\Sigma_{S}}}{(1 - \eta)^2} \implies
    \sum_{j \in G} \tau_j \leq \frac{1}{\gamma} \sum_{i \in S} \tau_i.
  \]
\end{proof}

\begin{lemma}
  \label{lemma:bad-proportion} 
  Suppose that~\eqref{eq:corruption-ub} is true. Then the
  following holds for any $k \le T_{p}$:
  \begin{align*}
    \frac{\abs{S_k \setD G_k}}{\abs{S_k}} \leq \frac{4\alpha}{3}.
  \end{align*}
\end{lemma}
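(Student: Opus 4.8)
The plan is to prove this by a direct counting argument, tracking how the sizes of $S_k$ and $B_k = S_k \setD G_0$ evolve under the recursion defined in \eqref{eq:proof-sets}. The key structural observation is that each recursive call to \texttt{Filter} removes \emph{exactly one} sample $Z$ from the current set, so $S_{k+1} = S_k \setD \set{Z}$ and hence $\abs{S_k} = m - k$ for every $k$. In particular, for any $k \leq T_p$ we have $\abs{S_k} \geq m - T_p$.

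First I would lower bound $m - T_p$. Recalling $T_p = 3(m - \abs{G_0}) + 18 \log(1/p)$ and the definition $\alpha = (m - \abs{G_0})/m$ from \eqref{eq:proof-sets}, one gets
\[
  \abs{S_k} \geq m - T_p = m\left(1 - 3\alpha - \frac{18 \log(1/p)}{m}\right) \geq \frac{3m}{4},
\]
where the last inequality is precisely the hypothesis \eqref{eq:corruption-ub}. For the numerator, I would use the bookkeeping identity $S_k \setD G_k = S_k \setD (S_k \cap G_0) = S_k \setD G_0 = B_k$ together with the monotonicity $B_k \subseteq B_0$: since $S_k \subseteq S_0$, bad points can only be removed, never added, so $\abs{S_k \setD G_k} = \abs{B_k} \leq \abs{B_0} = m - \abs{G_0} = \alpha m$.

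Combining the two bounds then yields
\[
  \frac{\abs{S_k \setD G_k}}{\abs{S_k}} \leq \frac{\alpha m}{3m/4} = \frac{4\alpha}{3},
\]
which is the claim. There is no serious obstacle here; the only points requiring a moment's care are the identity $S_k \setD G_k = B_k$ and the monotonicity $\abs{B_k} \leq \abs{B_0}$, after which the hypothesis \eqref{eq:corruption-ub} does all the work of controlling $\abs{S_k}$ from below. This lemma is itself the workhorse that later lets one certify, via \cref{lemma:scores-on-good-set-larger} and \cref{lemma:stopping-time}, that the covariance of the remaining set never blows up.
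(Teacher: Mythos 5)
Your argument is correct and is essentially the same as the paper's: both proofs rest on the two facts $\abs{B_k} \leq \abs{B_0} = \alpha m$ and $\abs{S_k} \geq m - T_p$, and both invoke \eqref{eq:corruption-ub} to conclude $m - T_p \geq \tfrac{3m}{4}$ (equivalently, $\abs{S_0}/\abs{S_k} \leq \tfrac{4}{3}$). The only difference is cosmetic — you bound the numerator and denominator separately and then divide, whereas the paper factors $\abs{B_k}/\abs{S_k} = (\abs{B_k}/\abs{S_0}) \cdot (\abs{S_0}/\abs{S_k})$ — but the content is identical.
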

\begin{proof}
  Recall that $B_k = S_k \setD G_k$ and notice that
  \begin{align*}
    \frac{\abs{B_k}}{\abs{S_k}}
      &= \frac{\abs{B_k}}{\abs{S_0}} \frac{\abs{S_0}}{\abs{S_k}}
    \leq
      \frac{\abs{B_0}}{\abs{S_0}} \frac{\abs{S_0}}{\abs{S_0} - T_{p}}
      = \alpha \cdot \frac{1}{1 - (3\alpha + \frac{18\log(1 / p)}{m})} \leq
      \frac{4\alpha}{3},
  \end{align*}
  where the first inequality follows from the fact that $\abs{B_k} \leq
  \abs{B_0}$.
\end{proof}

\begin{lemma}
  \label{lemma:covariance-matrix-inequality}
  For any integer $k$, the sets $G_k$ and $G_0$ satisfy
  \begin{equation}
    \opnorm{\Sigma_{G_k}} \leq \frac{\abs{G_0}}{\abs{G_k}} \opnorm{\Sigma_{G_0}}
    \label{eq:covariance-matrix-inequality}
  \end{equation}
\end{lemma}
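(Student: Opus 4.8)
The plan is to reduce the statement to a Loewner (positive semidefinite) ordering between $\Sigma_{G_k}$ and $\Sigma_{G_0}$ and then invoke monotonicity of the operator norm on the cone of PSD matrices. First I would record that $G_k = S_k \cap G_0 \subseteq G_0$ for every $k$ (indeed $S_k \subseteq S_{k-1} \subseteq \cdots \subseteq S_0$), so it suffices to prove, for an arbitrary subset $G \subseteq G_0$, the matrix inequality $\Sigma_{G_0} \succeq \frac{\abs{G}}{\abs{G_0}} \Sigma_G$; specializing $G = G_k$ and using that $A \succeq B \succeq 0$ implies $\opnorm{A} \geq \opnorm{B}$ then yields the claim after rearranging.

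The key elementary fact I would use is the ``parallel axis'' identity: for any index set $S$ and any fixed $c \in \Rbb^{d \times r}$,
\[
  \frac{1}{\abs{S}} \sum_{i \in S} (X_i - c)(X_i - c)^{\T} = \Sigma_S + (\mu_S - c)(\mu_S - c)^{\T} \succeq \Sigma_S,
\]
which follows by expanding the square and using $\frac{1}{\abs{S}}\sum_{i \in S}(X_i - \mu_S) = 0$, together with $(\mu_S - c)(\mu_S - c)^{\T} \succeq 0$. I would then chain two applications. Starting from the definition of $\Sigma_{G_0}$, drop the (individually PSD) summands indexed by $G_0 \setminus G$ to obtain $\Sigma_{G_0} \succeq \frac{1}{\abs{G_0}}\sum_{i \in G}(X_i - \mu_{G_0})(X_i - \mu_{G_0})^{\T} = \frac{\abs{G}}{\abs{G_0}}\cdot \frac{1}{\abs{G}}\sum_{i \in G}(X_i - \mu_{G_0})(X_i - \mu_{G_0})^{\T}$, and then apply the identity above with $S = G$ and $c = \mu_{G_0}$ to lower bound the average by $\Sigma_G$. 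This gives exactly $\Sigma_{G_0} \succeq \frac{\abs{G}}{\abs{G_0}}\Sigma_G$.

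The final step is to take operator norms: since both sides are PSD and $\opnorm{\cdot}$ is monotone with respect to the Loewner order, $\opnorm{\Sigma_{G_0}} \geq \frac{\abs{G}}{\abs{G_0}}\opnorm{\Sigma_{G}}$, and with $G = G_k$ this rearranges to $\opnorm{\Sigma_{G_k}} \leq \frac{\abs{G_0}}{\abs{G_k}}\opnorm{\Sigma_{G_0}}$. I do not expect a real obstacle here; the only point requiring minor care is that the data are matrix-valued ($X_i \in \Rbb^{d \times r}$), so $\Sigma_S$ is the $d \times d$ matrix $\frac{1}{\abs{S}}\sum_{i \in S}(X_i - \mu_S)(X_i - \mu_S)^{\T}$ and ``variance along $u$'' means $\opnorm{(X_i - \mu_S)^{\T} u}_2^2$ — but the PSD algebra above is insensitive to this distinction, so the argument goes through unchanged.
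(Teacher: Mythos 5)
Your proof is correct and follows essentially the same route as the paper: both decompose $\Sigma_{G_0}$ into the contribution from $G_k \subseteq G_0$ plus a PSD remainder, apply the parallel-axis identity to replace $\mu_{G_0}$ with $\mu_{G_k}$, drop the PSD rank-one correction, and conclude by comparing quadratic forms (which is exactly Loewner monotonicity of $\opnorm{\cdot}$ on PSD matrices). Your phrasing via the Loewner order is a clean way to package the paper's supremum-over-$v$ step, but it is not a different argument.
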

\begin{proof}
  We expand the definition of $\Sigma_{G_0}$ and rewrite:
  \begin{align*}
    \Sigma_{G_0} &= \frac{1}{\abs{G_0}}
      \sum_{i \in G_0} (X_i - \mu_{G_0})(X_i - \mu_{G_0})^{\T} \\
                 &= \underbrace{\frac{1}{\abs{G_0}}
                 \sum_{i \in G_k} (X_i - \mu_{G_0})(X_i - \mu_{G_0})^{\T}}_{T_1} +
    \underbrace{\frac{1}{\abs{G_0}} \sum_{i \in G_0 \setD G_k} (X_i - \mu_{G_0})(X_i - \mu_{G_0})^{\T}}_{T_2}
  \end{align*}
  We now rewrite the first term in the above sum using
  \begin{align*}
    T_1 &= \frac{1}{\abs{G_0}} \sum_{i \in G_k} (X_i - \mu_{G_k} + \mu_{G_k} - \mu_{G_0})(X_i - \mu_{G_k} + \mu_{G_k} - \mu_{G_0})^{\T} \\
        &= \begin{aligned}[t]
        & \frac{\abs{G_k}}{\abs{G_0}} \Sigma_{G_k}
        + \frac{\abs{G_k}}{\abs{G_0}} \left(\frac{1}{\abs{G_k}} \sum_{i \in G_k} (X_i - \mu_{G_k})\right) (\mu_{G_k} - \mu_{G_0})^{\T} \\
        & + \frac{\abs{G_k}}{\abs{G_0}}
        (\mu_{G_k} - \mu_{G_0}) \left(\frac{1}{\abs{G_k}} \sum_{i \in G_k} X_i - \mu_{G_k}\right)^{\T}
        + \frac{\abs{G_k}}{\abs{G_0}} (\mu_{G_k} - \mu_{G_0})(\mu_{G_k} - \mu_{G_0})^{\T}
      \end{aligned} \\
        &=
        \frac{\abs{G_k}}{\abs{G_0}} \left(
          \Sigma_{G_k} + (\mu_{G_k} - \mu_{G_0})(\mu_{G_k} - \mu_{G_0})^{\T}
        \right)
  \end{align*}
  Letting $v \in \Sbb^{d-1}$ and using the fact that $T_2$ is
  positive semidefinite, we arrive at
  \begin{equation}
    \ip{v, \Sigma_{G_0} v} = \frac{\abs{G_k}}{\abs{G_0}}
    \left( \ip{v, \Sigma_{G_k} v} + \norm{(\mu_{G_k} - \mu_{G_0})^{\T} v}^2 \right)
    + \ip{v, T_{2} v} \geq \frac{\abs{G_k}}{\abs{G_0}} \ip{v, \Sigma_{G_k} v}
  \end{equation}
  Finally, taking suprema over both sides yields the desired inequality.
\end{proof}

\begin{lemma}
  \label{lemma:distance-between-means}
  Suppose that~\eqref{eq:corruption-ub} is true
  and that the following inequality holds for some index $k \leq T_p$:
  \begin{equation}
    \sum_{i \in G_k} \tau_i \geq \frac{1}{\gamma}
    \sum_{j \in S_k} \tau_j.
    \label{eq:outlier-scores-contrapositive}
  \end{equation}
  Then the empirical means satisfy
  \[
    \opnorm{\expec[\mathrm{Unif}(G_0)]{X} - \expec[\mathrm{Unif}(S_k)]{X}}
    \leq 18 \left(
      5 \alpha + \frac{20 \log(1 / p)}{m}
    \right)^{1/2} \opnorm{\Sigma_{G_0}}^{1/2}.
  \]
\end{lemma}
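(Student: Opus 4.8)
The hypothesis says precisely that the event $\cE_k$ from~\eqref{eq:bad-events} holds for some index $k \le T_p$. The plan is to turn this score-ratio condition into a bound on the empirical covariance $\Sigma_{S_k}$, and then convert that into the claimed bound on the means via the total-variation estimate of~\cref{lemma:dtv-bound}. In order: (i) use the contrapositive of~\cref{lemma:scores-on-good-set-larger} to derive $\opnorm{\Sigma_{S_k}} \le 18\,\opnorm{\Sigma_{G_0}}$; (ii) bound $d_{\mathrm{TV}}(\mathrm{Unif}(G_0), \mathrm{Unif}(S_k)) \le 5\alpha + 20\log(1/p)/m$; (iii) combine the two through~\cref{lemma:dtv-bound}.

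For step (i), set $\eta_k := \abs{S_k \setD G_k}/\abs{S_k}$. By~\cref{lemma:bad-proportion}, $\eta_k \le \nicefrac{4\alpha}{3}$ for every $k \le T_p$, which (for the moderate constant $\gamma$ fixed in the filtering analysis) keeps $\gamma\eta_k < 1$, so~\cref{lemma:scores-on-good-set-larger} applies with $S = S_k$, $G = G_k$, $\eta = \eta_k$. Contraposing that lemma against the hypothesis $\sum_{i \in G_k} \tau_i \ge \gamma^{-1} \sum_{j \in S_k} \tau_j$ yields $\opnorm{\Sigma_{S_k}} \le (1 - \eta_k)^2 \frac{\gamma}{1 - \gamma \eta_k}\,\opnorm{\Sigma_{G_k}}$. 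Next,~\cref{lemma:covariance-matrix-inequality} gives $\opnorm{\Sigma_{G_k}} \le \frac{\abs{G_0}}{\abs{G_k}}\,\opnorm{\Sigma_{G_0}}$, and since at most $k \le T_p$ points --- hence at most $T_p$ of the good points --- have been discarded, $\abs{G_k} \ge \abs{G_0} - T_p$; combined with $\abs{G_0} \ge (1 - \alpha)m$ and~\eqref{eq:corruption-ub}, this bounds $\abs{G_0}/\abs{G_k}$ by an absolute constant. Chaining these estimates and plugging in the value of $\gamma$ fixed in the analysis gives $\opnorm{\Sigma_{S_k}} \le 18\,\opnorm{\Sigma_{G_0}}$, which is precisely the covariance bound quoted inside the proof of~\cref{theorem-algorithm-master}.

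For steps (ii)--(iii), note $G_k = S_k \cap G_0$, so the triangle inequality for $d_{\mathrm{TV}}$ together with the identity $d_{\mathrm{TV}}(\mathrm{Unif}(T), \mathrm{Unif}(T')) = 1 - \abs{T'}/\abs{T}$ for $T' \subseteq T$ gives $\eta := d_{\mathrm{TV}}(\mathrm{Unif}(G_0), \mathrm{Unif}(S_k)) \le (1 - \abs{G_k}/\abs{G_0}) + \eta_k$. Using $\abs{G_0} - \abs{G_k} \le T_p = 3(m - \abs{G_0}) + 18\log(1/p)$, $m - \abs{G_0} \le \alpha m$, $\abs{G_0} \ge (1 - \alpha)m$, and $\eta_k \le \nicefrac{4\alpha}{3}$, a short computation gives $\eta \le 5\alpha + 20\log(1/p)/m$. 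Applying~\cref{lemma:dtv-bound} with $P_1 = \mathrm{Unif}(G_0)$, $P_2 = \mathrm{Unif}(S_k)$ and invoking $\opnorm{\Sigma_{S_k}} \le 18\,\opnorm{\Sigma_{G_0}}$ from step (i), $\opnorm{\mu_{G_0} - \mu_{S_k}} \le \frac{\sqrt{\eta}}{1 - \sqrt{\eta}}\bigl(\opnorm{\Sigma_{G_0}}^{1/2} + \opnorm{\Sigma_{S_k}}^{1/2}\bigr) \le \frac{1 + \sqrt{18}}{1 - \sqrt{\eta}}\,\sqrt{\eta}\,\opnorm{\Sigma_{G_0}}^{1/2}$; since~\eqref{eq:corruption-ub} forces $\eta \le \nicefrac{5}{12} < 1$, we get $(1 + \sqrt{18})/(1 - \sqrt{\eta}) \le 18$, which is the claimed bound.

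The main obstacle is step (i): pinning down the exact factor $18$ --- which equals the termination threshold of~\cref{alg:filter} and therefore must come out cleanly --- requires carefully tracking how the corruption budget~\eqref{eq:corruption-ub} simultaneously controls the residual bad fraction $\eta_k$ (via~\cref{lemma:bad-proportion}) and the shrinkage $\abs{G_k}/\abs{G_0}$ of the good set (via the bound on $T_p$ and~\cref{lemma:covariance-matrix-inequality}), and then choosing $\gamma$ so that $(1 - \eta_k)^2 \frac{\gamma}{1 - \gamma \eta_k} \cdot \frac{\abs{G_0}}{\abs{G_k}} \le 18$. A minor wrinkle is that~\cref{lemma:scores-on-good-set-larger} and the hypothesis are both stated with non-strict inequalities, so the contrapositive needs either a marginally larger constant or a strict restatement of that lemma; this does not affect the final bound. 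The total-variation bookkeeping in step (ii) and the elementary inequality in step (iii) are routine once~\eqref{eq:corruption-ub} is in force.
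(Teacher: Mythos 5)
Your proof is correct and follows the paper's argument step for step: you contrapose \cref{lemma:scores-on-good-set-larger} against the hypothesis, chain through \cref{lemma:covariance-matrix-inequality} and \cref{lemma:bad-proportion} with $\gamma = 3$ to obtain $\opnorm{\Sigma_{S_k}} \leq 18\,\opnorm{\Sigma_{G_0}}$, bound the total-variation distance exactly as in \cref{lemma:dtv-P1-P2} (which you inline rather than cite), and close via \cref{lemma:dtv-bound}. The observation about strict versus non-strict inequality in the contrapositive is a fair nitpick of the paper's phrasing of \cref{lemma:scores-on-good-set-larger}, but as you note it has no bearing on the final constant.
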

\begin{proof}
  Let $P_1 := \mathrm{Unif}(G_0)$ and $P_2 := \mathrm{Unif}(S_k)$.
  From~\cref{lemma:dtv-bound}, it follows that
  \begin{equation}
    \opnorm{\expec[P_1]{X} - \expec[P_2]{X}} \leq
    \frac{\sqrt{d_{\mathrm{TV}}(P_1, P_2)}}{1 - \sqrt{d_{\mathrm{TV}}(P_1, P_2)}}
    \cdot \left(
      \opnorm{\Sigma_{G_0}}^{1/2} + \opnorm{\Sigma_{G_k}}^{1/2}
    \right).
    \label{eq:distance-between-means-main}
  \end{equation}
  Since~\eqref{eq:outlier-scores-contrapositive} is the reverse of~\eqref{eq:scores-on-good-set-larger},
  we obtain
  \begin{align*}
    \opnorm{\Sigma_{S_k}} &\leq (1 - \eta)^2 \frac{\gamma}{1 - \gamma \eta}
    \opnorm{\Sigma_{G_k}} \\
                          &\leq \frac{3}{1 - 6\alpha} \opnorm{\Sigma_{G_k}} \\
                          &\leq
    6 \cdot \frac{\abs{G_0}}{\abs{G_k}} \opnorm{\Sigma_{G_0}},
  \end{align*}
  where the first inequality follows from the contrapositive of~\cref{lemma:scores-on-good-set-larger},
  the second inequality from $\gamma = 3$ and~\cref{lemma:bad-proportion}, and
  the last inequality follows by our assumption on $\alpha$.
  Now, let $K \leq T_{p}$ be the number of samples in $G_0$ that were removed
  by the algorithm. We have
  \begin{align*}
    \frac{\abs{G_0}}{\abs{G_k}} &=
    \frac{m - \abs{B_0}}{m - \abs{B_0} - K} \leq
    \frac{m - \abs{B_0}}{m - \abs{B_0} - T_{p}} \leq
    \frac{m - \abs{B_0}}{m - 18 \log(1 / p) - 4 \abs{B_0}} =
    \frac{1 - \alpha}{1 - 4 \alpha - \frac{18 \log(1 / p)}{m}}
  \end{align*}
  From~\eqref{eq:corruption-ub}, we additionally have that
  \[
    1 - (4 \alpha + \frac{18 \log(1 / p)}{m}) \geq
    1 - \frac{4}{3} \left(3 \alpha + \frac{18 \log(1 / p)}{m}\right) \geq
    \frac{1}{3} \implies
    \opnorm{\Sigma_{S_k}} \leq 18 \opnorm{\Sigma_{G_0}}.
  \]
  Substituting the above into~\eqref{eq:distance-between-means-main} and using
  \cref{lemma:dtv-P1-P2} yields the desired bound:
  \begin{align*}
    \opnorm{\expec[P_1]{X} - \expec[P_2]{X}} &\leq
    \frac{\left(5 \alpha + \frac{20 \log(1 / p)}{m}\right)^{1/2}}{%
    1 - \left(5 \alpha + \frac{20 \log(1 / p)}{m}\right)^{1/2}}
    \left(\opnorm{\Sigma_{G_0}}^{1/2} + \sqrt{18} \opnorm{\Sigma_{G_0}}^{1/2}\right) \\
                                             &\leq
    18 \left(5 \alpha + \frac{20 \log(1 / p)}{m}\right)^{1/2} \opnorm{\Sigma_{G_0}}^{1/2}.
  \end{align*}
\end{proof}

\begin{lemma}
  \label{lemma:dtv-P1-P2}
  Suppose $k \leq T_p$ and~\eqref{eq:corruption-ub} holds. Then we have that
  \begin{equation}
    d_{\mathrm{TV}}(\mathrm{Unif}(S_k), \mathrm{Unif}(G_0)) \leq 5\alpha + \frac{20\log(1 / p)}{m}.
    \label{eq:dtv-P1-P2}
  \end{equation}
\end{lemma}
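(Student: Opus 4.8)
The plan is to evaluate $d_{\mathrm{TV}}(\mathrm{Unif}(S_k), \mathrm{Unif}(G_0))$ directly from its definition \eqref{eq:dtv-definition} and then control the resulting quantity using the hypothesis $k \le T_p$ together with the value of $T_p$ from \eqref{eq:stopping-time-event}; the whole argument is deterministic. First I would regard both $\mathrm{Unif}(S_k)$ and $\mathrm{Unif}(G_0)$ as probability mass functions $P_1, P_2$ on the common sample space $[m]$ that vanish outside $S_k$ and $G_0$, respectively, and partition $[m]$ into the three regions $G_k = S_k \cap G_0$, $B_k = S_k \setD G_0$, and $G_0 \setD G_k$. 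On $G_k$ one has $\abs{P_1(i) - P_2(i)} = \left|\frac{1}{\abs{S_k}} - \frac{1}{\abs{G_0}}\right|$; on $B_k$ one has $P_1(i) = \frac{1}{\abs{S_k}}$ and $P_2(i) = 0$; on $G_0 \setD G_k$ one has $P_1(i) = 0$ and $P_2(i) = \frac{1}{\abs{G_0}}$.

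Plugging these into $d_{\mathrm{TV}}(P_1, P_2) = \tfrac12 \sum_{i \in [m]} \abs{P_1(i) - P_2(i)}$ and separating into the cases $\abs{S_k} \le \abs{G_0}$ and $\abs{S_k} > \abs{G_0}$, the sum collapses in both cases (using $\abs{B_k} = \abs{S_k} - \abs{G_k}$ and $\abs{G_0 \setD G_k} = \abs{G_0} - \abs{G_k}$) to the clean identity
\[
  d_{\mathrm{TV}}(\mathrm{Unif}(S_k), \mathrm{Unif}(G_0)) = 1 - \frac{\abs{G_k}}{\max\set{\abs{S_k}, \abs{G_0}}}.
\]
Since $\max\set{\abs{S_k}, \abs{G_0}} \le m$, this gives $d_{\mathrm{TV}} \le \frac{m - \abs{G_k}}{m}$. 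Writing $K$ for the number of indices of $G_0$ deleted during the first $k$ recursive calls to \texttt{Filter}, we have $\abs{G_k} = \abs{G_0} - K$, hence $m - \abs{G_k} = (m - \abs{G_0}) + K = \alpha m + K$, using the definition of $\alpha$ in \eqref{eq:proof-sets}. Because each recursive call removes exactly one index, $K \le k \le T_p = 3\alpha m + 18 \log(1/p)$; substituting yields $d_{\mathrm{TV}} \le \alpha + \frac{T_p}{m} = 4\alpha + \frac{18 \log(1/p)}{m} \le 5\alpha + \frac{20 \log(1/p)}{m}$, which is the claim. Condition \eqref{eq:corruption-ub} enters only to guarantee $T_p < m$, so that $S_k$ stays nonempty and the divisions above are legitimate; the final inequality is in fact a little slack.

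The calculation is essentially routine; the one point requiring care is the case split in deriving the closed form, since after removals $\abs{S_k}$ may be smaller \emph{or} larger than $\abs{G_0}$, and one must verify that both cases yield the same expression $1 - \abs{G_k}/\max\set{\abs{S_k},\abs{G_0}}$ so that the uniform bound $\max\set{\abs{S_k},\abs{G_0}} \le m$ applies. An equivalent route that sidesteps the identity is to upper bound the three contributions to $\sum_i \abs{P_1(i) - P_2(i)}$ separately — the $G_k$ term by its positive part, the $B_k$ term by $\abs{B_k}/\abs{S_k}$, and the $G_0 \setD G_k$ term by $K/\abs{G_0}$ — and then invoke \eqref{eq:corruption-ub} (for instance via the estimate $\abs{B_k}/\abs{S_k} \le \nicefrac{4\alpha}{3}$ of \cref{lemma:bad-proportion}) to control each; this reaches the same conclusion with slightly more bookkeeping.
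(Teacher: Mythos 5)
Your proof is correct and takes a genuinely different route from the paper's. The paper introduces the intermediate distribution $P_3 := \mathrm{Unif}(G_k)$, invokes the triangle inequality $d_{\mathrm{TV}}(\mathrm{Unif}(S_k),\mathrm{Unif}(G_0)) \leq d_{\mathrm{TV}}(\mathrm{Unif}(S_k),\mathrm{Unif}(G_k)) + d_{\mathrm{TV}}(\mathrm{Unif}(G_0),\mathrm{Unif}(G_k))$, and evaluates each summand via the nested-set identity of \cref{lemma:dtv-uniform}; it then expands $\abs{S_k} = m-k$, $\abs{G_0} = m - \abs{B_0}$, $\abs{G_k} = m - \abs{B_0} - K$ and controls the two resulting ratios separately using~\eqref{eq:corruption-ub}. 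You instead compute the total variation distance \emph{exactly} by partitioning $[m]$ into $G_k$, $B_k$, and $G_0 \setD G_k$, obtaining the closed form $1 - \abs{G_k}/\max\set{\abs{S_k},\abs{G_0}}$ after a two-case verification; this collapses to the single bound $(m - \abs{G_k})/m = \alpha + K/m$, which is then controlled by $K \leq k \leq T_p$. Your route avoids the triangle-inequality slack and the two ratio bounds, needs~\eqref{eq:corruption-ub} only to guarantee $T_p < m$ (so that $S_k$ is nonempty), and actually produces the slightly sharper constant $4\alpha + 18\log(1/p)/m$. The paper's approach buys nothing over yours here except reuse of \cref{lemma:dtv-uniform}, which is only stated for nested sets; your exact identity is the natural generalization to a non-nested pair and is arguably the cleaner argument.
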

\begin{proof}
  We let $P_1 := \mathrm{Unif}(S_k)$, $P_2 := \mathrm{Unif}(G_0)$ and $P_3 := \mathrm{Unif}(G_k)$,
  and write $K \leq k \leq T_p$ for the number of samples originally in $G_0$ that were removed
  by the \texttt{Filter} algorithm by the $k^{\text{th}}$ step.
  From the triangle inequality, it follows that
  \begin{align*}
    d_{\mathrm{TV}}(P_1, P_2) &\leq
    d_{\mathrm{TV}}(P_1, P_3) + d_{\mathrm{TV}}(P_2, P_3) \\
                              &=
    \frac{\abs{S_k} - \abs{G_k}}{\abs{S_k}}
    + \frac{\abs{G_0} - \abs{G_k}}{\abs{G_0}} \\
                              &=
    \frac{m - k - (m - \abs{B_0} - K)}{m - k}
    + \frac{K}{m - \abs{B_0}} \\
                              &=
    \frac{\abs{B_0} + (K - k)}{m - k} +
    \frac{K}{m - \abs{B_0}} \\
                              &\leq
    \frac{\abs{B_0}}{m - k} +
    \frac{T_{p}}{m - \abs{B_0}} \\
                              &\leq
    \frac{\abs{B_0}}{m - T_p}
    + \frac{T_p}{m - \abs{B_0}},
  \end{align*}
  where the second line follows from~\cref{lemma:dtv-uniform} and
  the last two inequalities follow from $K \leq m$ and $m \leq T_p$. Finally,
  using~\cref{lemma:stopping-time} and~\cref{eq:corruption-ub}, we obtain
  \begin{align*}
    \frac{\abs{B_0}}{m - T_p} +
    \frac{T_p}{m - \abs{B_0}} &=
    \frac{\alpha}{1 - \frac{T_p}{m}} +
    \frac{\frac{T_p}{m}}{1 - \alpha} \\
                              &\leq
    \frac{\alpha}{1 - \frac{18 \log(1 / p)}{m} - 3\alpha} +
    \frac{\frac{18 \log(1 / p)}{m} + 3\alpha}{1 - \alpha} \\
                              &\leq
    \frac{4\alpha}{3} + \frac{\frac{18 \log(1 / p)}{m} + 3\alpha}{1 - \alpha} \\
                              &\leq
    \frac{4\alpha}{3} + \frac{\frac{18 \log(1 / p)}{m} + 3 \alpha}{1 - \frac{1}{12}} \\
                              &\leq
    5 \alpha + \frac{20 \log(1 / p)}{m}.
  \end{align*}
\end{proof}

\begin{lemma} \label{lemma:dtv-uniform}
  Consider a pair of discrete sets $S, S'$ such that $S' \subset S$. We have:
  \begin{equation}
    d_{\mathrm{TV}}(\mathrm{Unif}(S), \mathrm{Unif}(S')) =
    \frac{\abs{S} - \abs{S'}}{\abs{S}}.
    \label{eq:dtv-uniform}
  \end{equation}
\end{lemma}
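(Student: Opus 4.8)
The plan is to compute $d_{\mathrm{TV}}\bigl(\mathrm{Unif}(S), \mathrm{Unif}(S')\bigr)$ directly from the definition in~\eqref{eq:dtv-definition}. Write $P_1 := \mathrm{Unif}(S)$ and $P_2 := \mathrm{Unif}(S')$, so that $P_1(x) = \nicefrac{1}{\abs{S}}$ for $x \in S$ and $0$ otherwise, while $P_2(x) = \nicefrac{1}{\abs{S'}}$ for $x \in S'$ and $0$ otherwise. Since $S' \subset S$, the relevant sample space is $S$, and each $x \in S$ lies in exactly one of two cases: either $x \in S'$, where $\abs{P_1(x) - P_2(x)} = \nicefrac{1}{\abs{S'}} - \nicefrac{1}{\abs{S}}$ (the sign being determined by $\abs{S'} \le \abs{S}$), or $x \in S \setD S'$, where $\abs{P_1(x) - P_2(x)} = \nicefrac{1}{\abs{S}}$.

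Next I would sum these contributions. The $x \in S'$ terms contribute $\abs{S'} \cdot \bigl(\nicefrac{1}{\abs{S'}} - \nicefrac{1}{\abs{S}}\bigr) = 1 - \nicefrac{\abs{S'}}{\abs{S}}$, and the $x \in S \setD S'$ terms contribute $(\abs{S} - \abs{S'}) \cdot \nicefrac{1}{\abs{S}} = 1 - \nicefrac{\abs{S'}}{\abs{S}}$. Adding them gives $\norm{P_1 - P_2}_1 = 2\bigl(1 - \nicefrac{\abs{S'}}{\abs{S}}\bigr)$, and multiplying by $\nicefrac{1}{2}$ as prescribed by~\eqref{eq:dtv-definition} yields $\frac{\abs{S} - \abs{S'}}{\abs{S}}$, as claimed. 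As a sanity check one may instead use the overlap identity $d_{\mathrm{TV}}(P_1, P_2) = 1 - \sum_{x} \min\set{P_1(x), P_2(x)}$: the minimum is nonzero only on $S'$ and equals $\nicefrac{1}{\abs{S}}$ there, so the overlap is $\nicefrac{\abs{S'}}{\abs{S}}$ and the total variation distance is $1 - \nicefrac{\abs{S'}}{\abs{S}}$, matching the direct computation.

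There is no genuine obstacle here; the lemma is a one-line computation. The only point demanding (minimal) care is fixing the direction of the inequality inside the absolute value, which is immediate from $S' \subset S \Rightarrow \abs{S'} \le \abs{S}$.
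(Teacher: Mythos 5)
Your proof is correct and follows essentially the same route as the paper: split the $\ell_1$ sum into $x \in S'$ and $x \in S \setminus S'$, evaluate each piece, and halve. The paper writes the first piece as a sum over $S \cap S'$, which equals $S'$ under the hypothesis $S' \subset S$, so the two arguments are identical up to notation; your overlap-identity sanity check is a nice extra.
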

\begin{proof}
  Using the fact that $d_{\mathrm{TV}}(p, q) = \frac{1}{2} \norm{p - q}_{1}$, we have:
  \begin{align*}
    d_{\mathrm{TV}}(\mathrm{Unif}(S), \mathrm{Unif}(S')) &= \frac{1}{2}
    \left(
    \sum_{x \in S \cap S'} \abs{\frac{1}{\abs{S}} - \frac{1}{\abs{S'}}}
    + \sum_{x \in S \setD S'} \frac{1}{\abs{S}} \right) \\
                                                         &=
                                                         \frac{1}{2} \left( 1 - \frac{\abs{S'}}{\abs{S}} +
    \frac{\abs{S} - \abs{S'}}{\abs{S}} \right) \\
                                                         &=
                                                         1 - \frac{\abs{S'}}{\abs{S}}.
  \end{align*}
\end{proof}

\subsection{Proof of Theorem~\ref{theorem:distributed-pca-rate}}
\label{sec:appendix:theorem:distributed-pca-rate}
We now present the proof of the main theorem on distributed PCA.
We first recall that
\[
  \est_i = \frac{1}{n} \sum_{j=1}^n X_j^{(i)} (X_j^{(i)})^{\T}; \quad
  i \in \igood,
\]
where $X_j^{(i)} \iid \cP$, and that the responses $V_i \in \ortho_{d, r}$ span
the leading $r$-dimensional eigenspace of $\est_i$.
Under this model, the local errors $E_i := \est_i - \sol$ as well as the error
of the empirical average over the inliers are bounded with high probability. We
will condition on the following events for the remainder of this section:
\begin{equation}
  \label{eq:conditioned-events}
  \begin{aligned}
    \cE_1 &= \set{\max_{i \in \igood} \opnorm{\est_i - \sol}
      \leq \min\set{\frac{\delta}{8}, C_1 \opnorm{\sol}
        \sqrt{\frac{r_{\star} + \log(m / p)}{n}}}}, \\
    \cE_2 &= \set{
      \biggopnorm{\frac{1}{\abs{\igood}} \sum_{i \in \igood} \est_i - \sol}
      \leq C_2 \opnorm{\sol} \sqrt{\frac{r_{\star} + \log(n)}{\abs{\igood} n}}}.
  \end{aligned}
\end{equation}
\begin{lemma}
  \label{lemma:matrix-errors-high-probability}
  Suppose that $n \gtrsim \kappa^2 \cdot (r_{\star} + \log(mn / p))$. Then the
  following hold:
  \begin{equation}
    \prob{\cE_1}
      \leq p, \quad
    \prob{\cE_2}
      \leq \frac{2}{n}.
    \label{eq:aggregate-error}
  \end{equation}
\end{lemma}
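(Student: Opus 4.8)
The plan is to prove \cref{lemma:matrix-errors-high-probability} by reducing both parts to a single standard fact: the empirical covariance of $N$ i.i.d.\ zero-mean subgaussian vectors with covariance $\Sigma$ and effective rank $r_\star := \trace{\Sigma}/\opnorm{\Sigma}$ concentrates as
$\opnorm{\tfrac1N\sum_{j} Y_j Y_j^{\T} - \Sigma} \lesssim \opnorm{\Sigma}\bigl(\sqrt{(r_\star + t)/N} + (r_\star + t)/N\bigr)$
with probability at least $1 - 2e^{-t}$ (the effective-rank covariance estimation bound of Koltchinskii--Lounici / Vershynin). The only analytic reduction I need is that whenever $N \gtrsim r_\star + t$ the linear term is dominated by the square-root term, collapsing the bound to $\lesssim \opnorm{\Sigma}\sqrt{(r_\star+t)/N}$. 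Since both $\est_i$ and the inlier average $\tfrac{1}{\abs{\igood}}\sum_{i\in\igood}\est_i$ are empirical covariances of exactly this form, the rest is choosing $N$ and $t$ correctly and a union bound.

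For the bound on $\cE_1$: apply the inequality to each $\est_i$, $i \in \igood$, with $N = n$ and $t \asymp \log(m/p)$ so that each machine's estimate fails with probability at most $p/m$; a union bound over the $\le m$ indices in $\igood$ gives, with probability at least $1 - p$, $\max_{i\in\igood}\opnorm{\est_i-\sol}\lesssim \opnorm{\sol}\sqrt{(r_\star+\log(m/p))/n}$, where the hypothesis $n \gtrsim \kappa^2(r_\star+\log(mn/p)) \ge r_\star+\log(m/p)$ licenses the square-root-dominates reduction. This yields the $C_1\opnorm{\sol}\sqrt{(r_\star+\log(m/p))/n}$ clause for a suitable $C_1$. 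For the $\delta/8$ clause, the same hypothesis forces $\sqrt{(r_\star+\log(m/p))/n}\lesssim \kappa^{-1}$, hence $\opnorm{\sol}\sqrt{(r_\star+\log(m/p))/n}\lesssim \opnorm{\sol}/\kappa = \delta_r(\sol)$, which is $\le \delta/8$ once the absolute constant in the sample-complexity assumption is taken large enough (equivalently, once $\delta$ is read as the eigengap $\delta_r(\sol)$ in \cref{asm:deterministic}).

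For the bound on $\cE_2$: the key observation is that the $\abs{\igood}\cdot n$ vectors $\{X_j^{(i)} : i\in\igood,\ j\in[n]\}$ are themselves i.i.d.\ draws from $\cP$, and $\tfrac{1}{\abs{\igood}}\sum_{i\in\igood}\est_i = \tfrac{1}{\abs{\igood}n}\sum_{i\in\igood}\sum_{j=1}^n X_j^{(i)}(X_j^{(i)})^{\T}$ is precisely their pooled empirical covariance. So I apply the same inequality once, with $N = \abs{\igood}n$ and $t \asymp \log n$ chosen so the failure probability is at most $2/n$: with probability at least $1 - 2/n$, $\opnorm{\tfrac{1}{\abs{\igood}}\sum_{i\in\igood}\est_i - \sol}\lesssim \opnorm{\sol}\sqrt{(r_\star+\log n)/(\abs{\igood}n)}$, the square-root-dominates reduction being justified since $\abs{\igood}n \ge n \gtrsim r_\star+\log(mn/p)\ge r_\star+\log n$. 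This is exactly the bound defining $\cE_2$.

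There is no deep step here; the work is entirely in (i) invoking the effective-rank version of the subgaussian covariance bound rather than a dimension-dependent one, (ii) noting that pooling the good-machine samples leaves them i.i.d., and (iii) bookkeeping the three instantiations ($t\asymp\log(m/p)$ with a union bound for $\cE_1$, $t\asymp\log n$ on the pooled sample for $\cE_2$, and the $\delta/8$ cap). The one genuinely delicate point I expect to spend care on is reconciling the $\delta/8$ clause with a sample complexity stated through $\kappa = \opnorm{\sol}/\delta_r(\sol)$, since this only directly controls $\opnorm{\est_i-\sol}$ up to $\delta_r(\sol)$; I would settle this discrepancy explicitly at the outset (reading $\delta$ as $\delta_r(\sol)$, or absorbing it into the absolute constant) and then run the argument above.
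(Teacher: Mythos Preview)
Your proposal is correct and follows essentially the same approach as the paper: both invoke the effective-rank subgaussian covariance concentration bound (the paper cites \cite[Exercise~9.2.5]{Vershynin18}), apply it per-machine with $t \asymp \log(m/p)$ followed by a union bound for $\cE_1$, apply it once to the pooled $\abs{\igood}n$ samples with $t \asymp \log n$ for $\cE_2$, and use the sample-complexity hypothesis $n \gtrsim \kappa^2(r_\star + \log(mn/p))$ both to absorb the linear term into the square-root term and to enforce the $\delta/8$ cap. Your flagged concern about reading $\delta$ as $\delta_r(\sol)$ is exactly how the paper handles it as well.
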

\begin{proof}
  The bound on $\prob{\cE_2}$ in~\cref{eq:aggregate-error}
  follows from an application of~\cite[Exercise 9.2.5]{Vershynin18} and the
  assumed lower bound on $n$. On the other hand, the same result yields
  \[
    \prob{\opnorm{\est_i - \sol} \geq
      C_1 \opnorm{A} \left(\sqrt{\frac{r_{\star} + \log(m / p)}{n}}
        + \frac{r_{\star} + \log(m / p)}{n}
      \right)} \leq \frac{p}{m},
  \]
  for any fixed $i \in \igood$. From the lower bound on $n$, it follows
  that
  \[
    \frac{r_{\star} + \log(m / p)}{n} \leq
    \sqrt{\frac{r_{\star} + \log(m / p)}{n}} \quad \text{and} \quad
    C_1 \opnorm{\sol}
    \sqrt{\frac{r_{\star} + \log(m / p)}{n}} < \frac{\delta}{8}.
  \]
  Finally, taking a union bound over $\igood$ recovers the bound on $\prob{\cE_1}$.
\end{proof}

An immediate corollary is a bound on the error of \texttt{RobustReferenceEstimator}.
\begin{corollary}
  \label{corollary:robust-ref-estimator-pca}
  There is a universal constant $C_{\mathsf{ref}}$ such that the output of 
  Alg.~2
  satisfies
  \[
    \dist(\mtrv_{\mathsf{ref}}, V) \leq
    C_{\mathsf{ref}} \kappa \cdot \sqrt{\frac{r_{\star} + \log(m / p)}{n}}.
  \]
\end{corollary}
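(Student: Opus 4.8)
The plan is to reduce the claim to \cref{proposition:robust-reference-estimator}: that result shows that if strictly more than $m/2$ of the inputs $\set{\mtrv_i}_{i=1}^m$ lie within distance $\varepsilon$ of $V$, then $\dist(\mtrv_{\mathsf{ref}}, V) \le 3\varepsilon$, so it suffices to certify such an $\varepsilon$ of the advertised order $\kappa\sqrt{(r_{\star} + \log(m/p))/n}$. Since the compromised responses $Q_i$ carry no information, the natural candidates are the good responses $\set{V_i}_{i \in \igood}$, and \cref{asm:corruption} together with the breakdown condition (in particular $\alpha < \nicefrac{1}{12}$) already guarantees $\abs{\igood} \ge (1-\alpha)m > \nicefrac{m}{2}$. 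Thus the whole argument comes down to a uniform bound on $\dist(V_i, V)$ over $i \in \igood$.

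First I would condition on the event $\cE_1$ from~\eqref{eq:conditioned-events}, which by~\cref{lemma:matrix-errors-high-probability} holds with probability at least $1-p$; on $\cE_1$ each local error satisfies both $\opnorm{\est_i - \sol} \le \delta_r(A)/8$ (so that \cref{asm:deterministic} is in force) and $\opnorm{\est_i - \sol} \le C_1 \opnorm{\sol}\sqrt{(r_{\star} + \log(m/p))/n}$. A Davis--Kahan $\sin\Theta$ bound then gives, for every $i \in \igood$, $\dist(V_i, V) \le C\,\opnorm{\est_i - \sol}/\delta_r(A)$ — the eigengap in the denominator being legitimate precisely because $\opnorm{\est_i - \sol} \le \delta_r(A)/8 < \delta_r(A)$. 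Combining the two inequalities with $\delta_r(A) \ge \delta$ and the definition $\kappa = \opnorm{\sol}/\delta_r(A)$ yields $\dist(V_i, V) \le C'\kappa\sqrt{(r_{\star} + \log(m/p))/n} =: \varepsilon$ simultaneously for all $i \in \igood$.

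With this $\varepsilon$ the set $\set{i \in [m] \mid \dist(\mtrv_i, V) \le \varepsilon}$ contains $\igood$ and hence has cardinality strictly larger than $m/2$, so \cref{proposition:robust-reference-estimator} applies and gives $\dist(\mtrv_{\mathsf{ref}}, V) \le 3\varepsilon = 3C'\kappa\sqrt{(r_{\star} + \log(m/p))/n}$; we may therefore take $C_{\mathsf{ref}} := 3C'$, and the stated probability is inherited from $\prob{\cE_1} \ge 1-p$. I do not expect a genuine obstacle: the only two points needing care are (i) applying Davis--Kahan with a denominator that is provably bounded away from zero, which is exactly what the approximation clause of \cref{asm:deterministic} buys, and (ii) the strict inequality $\abs{\igood} > \nicefrac{m}{2}$ required by \cref{proposition:robust-reference-estimator}, which follows from $\alpha < \nicefrac{1}{12}$; the rest is constant-chasing.
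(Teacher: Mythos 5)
Your proof matches the paper's argument essentially line for line: condition on $\cE_1$, apply Davis--Kahan to each $V_i$ for $i\in\igood$ using the $\delta/8$ clause to keep the effective eigengap bounded below, deduce a uniform bound $\dist(V_i,V)\le C'\kappa\sqrt{(r_\star+\log(m/p))/n}$ over $\igood$, note $\abs{\igood}>m/2$, and invoke \cref{proposition:robust-reference-estimator} to pick up the factor of $3$. The only cosmetic difference is that you invoke the (stronger) breakdown condition $\alpha<1/12$ where the paper uses the weaker $\alpha<1/2$ already guaranteed by \cref{asm:corruption}; both suffice.
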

\begin{proof}
  From the bound $\alpha < \frac{1}{2}$ and the conditioning on $\cE_1$, we deduce the
  existence of an index set $S'$ such that $\abs{S'} > \frac{m}{2}$, and
  \[
    \dist(\mtrv_{i}, V) \leq
    \frac{\opnorm{\est_i - \sol}}{\delta - \frac{\delta}{4}} \leq
    \frac{2C_1 \opnorm{A}}{\delta}\sqrt{\frac{r_{\star} + \log(m / p)}{n}},
    \quad \text{for all $i \in S'$},
  \]
  where the first bound on $\dist(\mtrv_{i}, V)$ follows from the Davis-Kahan
  theorem~\cite[Theorem 2.7]{CCFM21} and the fact that $\opnorm{\est_i - \sol}
  \leq \frac{\delta}{8}$ for any $i \notin \cI_{\mathrm{bad}}$.
  From Proposition 1 in the main text, it follows that
  \[
    \dist(\mtrv_{\mathsf{ref}}, V) \leq
    \underbrace{6 C_1}_{C_{\mathsf{ref}}} \frac{\opnorm{A}}{\delta} \sqrt{%
      \frac{r_{\star} + \log(m / p)}{n}
    }.
  \]
\end{proof}
The next Proposition instantiates the bounds of~\cref{lemma:good-set-covariance} for
for the case of distributed PCA.
\begin{proposition} \label{proposition:covariance-bound}
  In the setting of~\cref{lemma:good-set-covariance}, the matrix
  $\Sigma_{\igood}$ satisfies
  \begin{align}
    \label{eq:empirical-covariance-opnorm-pca}
    \opnorm{\Sigma_{\igood}} &\lesssim
      \kappa \, \sqrt{\frac{r(r_{\star} + \log(n))}{(1 - \alpha) mn}}
      + \kappa^2 \cdot
      \frac{\sqrt{r} (r_{\star} + \log(n))}{n}
      + \kappa^4 \cdot
      \frac{r_{\star} + \log (m / p)}{n}.
  \end{align}
\end{proposition}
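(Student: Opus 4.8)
The plan is to control the two terms on the right-hand side of~\cref{lemma:good-set-covariance} separately and then substitute the high-probability estimates of~\cref{lemma:matrix-errors-high-probability,corollary:robust-ref-estimator-pca}. Throughout I would work on the events $\cE_1,\cE_2$ of~\eqref{eq:conditioned-events}, which hold outside an event of probability at most $p+\nicefrac{2}{n}$; on $\cE_1$ the approximation hypothesis $\opnorm{A_i-A}\le\nicefrac{\delta}{8}$ of~\cref{asm:deterministic} holds for every $i\in\igood$, and~\cref{corollary:robust-ref-estimator-pca} gives $\varepsilon:=\dist(V,\mtrv_{\mathsf{ref}})\le C_{\mathsf{ref}}\kappa\sqrt{(r_\star+\log(m/p))/n}<\nicefrac{\delta_r(A)}{8}$, so all hypotheses of~\cref{proposition:iid-sample-error} are satisfied with $S=\igood$.

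For the aligned-average term $\bigopnorm{\frac{1}{\abs{\igood}}\sum_{i\in\igood}\widetilde{V}_i-V}$ I would invoke~\cref{proposition:iid-sample-error} with $S=\igood$, using $\widetilde{V}_i=V_i^{\corr}$ on $\igood$. Its first summand $\frac{1}{\delta}\bigopnorm{\frac{1}{\abs{\igood}}\sum_{i\in\igood}A_i-A}$ is at most $C_2\kappa\sqrt{(r_\star+\log n)/((1-\alpha)mn)}$ by $\cE_2$, and its second summand $\frac{1}{\delta^2}\max_{i\in\igood}\max(\opnorm{A_i-A}^2,\opnorm{A}^2\varepsilon^2)$ is, under $\cE_1$ and the bound on $\varepsilon$, at most a constant multiple of $\kappa^4(r_\star+\log(m/p))/n$ — the $\opnorm{A}^2\varepsilon^2$ alternative dominates and supplies the $\kappa^4$ term of~\eqref{eq:empirical-covariance-opnorm-pca}.

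The projector-average term $\bigopnorm{\frac{1}{\abs{\igood}}\sum_{i\in\igood}V_iV_i^{\T}-VV^{\T}}$ is, I expect, the crux, since here one must exploit cancellation in the average instead of bounding termwise. I would use the first-order expansion of the top-$r$ spectral projector: for each $i\in\igood$, $V_iV_i^{\T}-VV^{\T}=\mathcal{L}(E_i)+R_i$ with $\mathcal{L}$ linear, $\opnorm{\mathcal{L}(E)}\lesssim\opnorm{E}/\delta$, and residual $\opnorm{R_i}\lesssim\opnorm{E_i}^2/\delta^2$ (the kind of estimate provided by the path-independence argument of~\cref{lemma:path-independence}, valid because $\delta_r(A)\ge4\opnorm{E_i}$ on $\cE_1$). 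Averaging and using linearity, $\frac{1}{\abs{\igood}}\sum_{i\in\igood}\mathcal{L}(E_i)=\mathcal{L}\bigl(\frac{1}{\abs{\igood}}\sum_{i\in\igood}E_i\bigr)$, while the residuals are bounded termwise. Every matrix appearing here is a difference of rank-$r$ projectors, hence of rank $O(r)$, so estimating the average in Frobenius norm — as is convenient for reusing projector-averaging bounds in the spirit of~\cite{FWWZ19} — costs only a factor $\sqrt{r}$ and gives $\frac{\sqrt{r}}{\delta}\bigopnorm{\frac{1}{\abs{\igood}}\sum_{i\in\igood}E_i}+\frac{\sqrt{r}}{\delta^2}\cdot\frac{1}{\abs{\igood}}\sum_{i\in\igood}\opnorm{E_i}^2$; plugging in the $\cE_2$ bound on the aggregate error and the concentration of $\frac{1}{\abs{\igood}}\sum_{i\in\igood}\opnorm{E_i}^2$ yields the $\kappa\sqrt{r(r_\star+\log n)/((1-\alpha)mn)}$ and $\kappa^2\sqrt{r}(r_\star+\log n)/n$ terms. (This $\sqrt{r}$ is an artifact of that route; a direct operator-norm estimate avoids it, but the Frobenius route suffices for~\eqref{eq:empirical-covariance-opnorm-pca}.)

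Adding the two contributions and discarding pieces dominated under the standing assumption $n\gtrsim\kappa^2(r_\star+\log(mn/p))$ — in particular $\kappa^2(r_\star+\log(m/p))/n\le\kappa^4(r_\star+\log(m/p))/n$ and the non-$\sqrt{r}$ term coming from the aligned average — collapses everything to~\eqref{eq:empirical-covariance-opnorm-pca}. The two places requiring care are (i) the uniform operator-norm residual bound $\opnorm{R_i}\lesssim\opnorm{E_i}^2/\delta^2$ over all $i\in\igood$ for the projector expansion, and (ii) charging the average-of-squares $\frac{1}{\abs{\igood}}\sum_{i\in\igood}\opnorm{E_i}^2$ at the $\log n$ rather than the $\log(m/p)$ scale, which needs a concentration bound on the average (not merely the maximum) of the per-node errors; both follow from the same subgaussian covariance concentration already used to establish $\cE_1$ and $\cE_2$.
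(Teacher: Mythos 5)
Your overall decomposition matches the paper's exactly: start from \cref{lemma:good-set-covariance}, bound the aligned-average term $\bigopnorm{\frac{1}{\abs{\igood}}\sum_{i\in\igood}\widetilde{V}_i-V}$ via \cref{proposition:iid-sample-error} conditioned on $\cE_1,\cE_2$ together with \cref{corollary:robust-ref-estimator-pca}, and the reduction of the second summand of \cref{proposition:iid-sample-error} to $\kappa^4(r_\star+\log(m/p))/n$ with the $\opnorm{A}^2\varepsilon^2$ alternative dominating is precisely what the paper does. Where you diverge is the projector-average term $\bigopnorm{\frac{1}{\abs{\igood}}\sum_{i\in\igood}V_iV_i^\T-VV^\T}$: the paper simply cites \cite[Theorem 2]{FWWZ19} as a black box to produce the bound $\kappa\sqrt{r(r_\star+\log n)/((1-\alpha)mn)}+\kappa^2\sqrt{r}(r_\star+\log n)/n$, while you sketch a self-contained re-derivation via a first-order (linear-plus-quadratic-residual) expansion of the spectral projector and cancellation in the average. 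Your route is sound in outline and would give a result at least as strong, but it imports a concentration claim the paper never needs: that $\frac{1}{\abs{\igood}}\sum_{i\in\igood}\opnorm{E_i}^2$ concentrates at the $\log n$ rather than the $\log(m/p)$ scale. You correctly flag this as item (ii) requiring care, but you assert rather than prove it, and $\cE_1$ only provides a $\log(m/p)$-scale bound via the max (which would yield $\kappa^2\sqrt{r}(r_\star+\log(m/p))/n$, which is \emph{not} obviously dominated by any of the three stated target terms unless one assumes either $\log(m/p)\lesssim\log n$ or $\sqrt{r}\lesssim\kappa^2$). So your route trades the external citation for an additional (plausible but unverified) concentration lemma; the paper's citation-based shortcut is both shorter and avoids that step. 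Your remark about Frobenius-vs-operator norm and the $\sqrt{r}$ factor is also a bit muddled — since $\opnorm{\cdot}\le\frobnorm{\cdot}$, a Frobenius bound transfers for free, and the $\sqrt{r}$ in \cite{FWWZ19} comes from the rank-$O(r)$ variance of the projectors rather than a norm-conversion penalty — but this does not affect the validity of the end result.
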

\begin{proof}
  From~\cref{lemma:good-set-covariance}, it follows that
  \begin{align*}
    \opnorm{\Sigma_{\igood}} &\leq \biggopnorm{\frac{1}{\abs{\igood}} \sum_{i \in \igood} V_i V_i^{\T} - VV^{\T}} +
    2 \cdot \biggopnorm{\frac{1}{\abs{\igood}} \sum_{i \in \igood} \widetilde{V}_i - V}
  \end{align*}
  From Proposition 2 in the main text and conditioning $\cE_1$ and $\cE_2$,
  we have
  \begin{align}
    \biggopnorm{\frac{1}{\abs{\igood}} \sum_{i \in \igood} \widetilde{V}_i - V} &\lesssim
    \begin{aligned}[t]
      & \frac{1}{\delta} \biggopnorm{\frac{1}{\abs{\igood}} \sum_{i \in \igood} \est_i - \sol} \\
      & + \left( \frac{\opnorm{\sol}}{\delta} \right)^2 \max\left(
        C_1^2, C_{\mathsf{ref}}^2 \left(\frac{\opnorm{\sol}}{\delta}\right)^2
      \right) \cdot \frac{r_{\star} + \log(m / p)}{n}
    \end{aligned} \notag \\
                                                                    &\leq
    C_2 \kappa \sqrt{\frac{r_{\star} + \log(n)}{(1 - \alpha) mn}} +
    \kappa^2 \max\left(
      C_1^2, C_{\mathsf{ref}}^2 \kappa^2
    \right) \cdot \frac{r_{\star} + \log(m/p)}{n} \notag \\
                                                                    &\lesssim
      \kappa \sqrt{\frac{r_{\star} + \log(n)}{(1 - \alpha) mn}}
      + \kappa^4 \cdot \frac{r_{\star} + \log(m/p)}{n}.
    \label{eq:average-error-1}
  \end{align}
  On the other hand, using~\cite[Theorem 2]{FWWZ19}, we have that
  \[
    \biggopnorm{\frac{1}{\igood} \sum_{i \in \igood} V_i V_i^{\T} - VV^{\T}} \lesssim
    \kappa \sqrt{\frac{r (r_{\star} + \log(n))}{(1 - \alpha) m n}}
    + \kappa^2 \frac{\sqrt{r} (r_{\star} + \log(n))}{n}.
  \]
  Putting all the bounds together yields~\eqref{eq:empirical-covariance-opnorm-pca}.
\end{proof}
We now invoke Proposition 3 and recall that $\varrho$ is defined as
\begin{align}
  \varrho &:=
  \kappa \sqrt{\frac{r(r_{\star} + \log(n))}{(1 - \alpha)mn}} 
  + {\kappa}^2 \cdot \frac{\sqrt{r}(r_{\star} + \log(n))}{n}
  + {\kappa}^4 \cdot \frac{r_{\star} + \log(m / p)}{n}
\end{align}
From that and~\cref{proposition:covariance-bound}, it follows that
Alg. 5 from the main text invoked with $\llb = \omega := \sqrt{\nicefrac{1}{mn}}$
and $\lub = 6$ outputs an estimate satisfying
\begin{align}
  \Bigopnorm{\bar{V} - \frac{1}{\abs{\igood}} \sum_{i \in \igood}
  \widetilde{V}_i} &\lesssim
    \sqrt{\max\set{\varrho, \omega}} \cdot \left(\alpha + \frac{\log(1 / p)}{m}\right)^{1/2} \\
    &=
    \sqrt{\varrho} \cdot \left(\alpha + \frac{\log(1 / p)}{m}\right)^{1/2}
  \label{eq:average-error-2}
\end{align}
with failure probability at most $2 \log_2(6 / \omega) p$.
Finally, from~\cref{eq:average-error-1,eq:average-error-2} it follows that
\begin{align*}
  \opnorm{\bar{V} - V} &\leq
  \Bigopnorm{\bar{V} - \frac{1}{\abs{\igood}} \sum_{i \in \igood} \widetilde{V}_i} +
  \Bigopnorm{\frac{1}{\abs{\igood}} \sum_{i \in \igood} \widetilde{V}_i - V} \\
                         &\lesssim
  \sqrt{\varrho} \left(\alpha + \frac{\log(1 / p)}{m}\right)^{1/2}
  + \kappa \sqrt{\frac{r(r_{\star} + \log(n))}{(1 - \alpha) mn}}
  + \kappa^4 \frac{\sqrt{r}(r_{\star} + \log(m / p))}{n}.
\end{align*}
In particular, the success probability is at least (given that $\omega$ is
set as $\sqrt{\nicefrac{1}{mn}}$):
\[
  1 - p - \frac{2}{n} - 2 \log_2 \left(
    \frac{6}{\omega}
  \right) p \geq 1 - \frac{2}{n} - 2\log_2(6mn)p.
\]


\end{document}